\documentclass[preprint,12pt,3p]{elsarticle}

\usepackage{color}

\usepackage{amstext}

\usepackage{fancyhdr,lipsum}

%
%
%
%
%

\usepackage{microtype}

\usepackage{varioref}
\usepackage{hyperref}
\usepackage{cleveref}
\crefname{subsection}{subsection}{subsections}

\usepackage{indentfirst}
\usepackage{fancyhdr}
\usepackage{graphicx}

\usepackage{amssymb}
\usepackage{amsmath}
\usepackage{latexsym}
\usepackage{amsthm}
\usepackage{eucal}
\usepackage{faktor}

\usepackage[title]{appendix}

\newtheorem{theorem}{Theorem}[section]
\newtheorem*{theorem*}{Theorem}

\newtheorem{proposition}[theorem]{Proposition}
\newtheorem*{proposition*}{Proposition}

\newtheorem*{corollary*}{Corollary}
\theoremstyle{definition}
\newtheorem{definition}[theorem]{Definition}
\newtheorem{example}[theorem]{Example}
\newtheorem*{example*}{Example}
\theoremstyle{remark}
\newtheorem{remark}[theorem]{Remark}
\newtheorem*{remark*}{Remark}

\newcommand{\R}{\mathbb{R}}
\newcommand{\Homeo}{\mathrm{Homeo}}
\newcommand{\GENEO}{\mathrm{GENEO}}
\newcommand{\GENEOH}{\mathrm{GENEO,H}}


\begin{document}

\begin{frontmatter}

\title{
Towards a topological-geometrical theory of group equivariant non-expansive operators for data analysis and machine learning}

\author[label4]{Mattia G. Bergomi}
\address[label4]{Champalimaud Research, Champalimaud Center for the Unknown - Lisbon, Portugal}
\ead{mattia.bergomi@neuro.fchampalimaud.org}

\author[label1,label2]{Patrizio Frosini\corref{cor1}}
\address[label1]{Department of Mathematics, University of Bologna}
\address[label2]{Advanced Research Center on Electronic System ``Ercole De Castro'', University of Bologna\fnref{label4}}

\cortext[cor1]{Corresponding author}

\ead{patrizio.frosini@unibo.it}

\author[label3]{Daniela Giorgi}
\address[label3]{Italian National Research Council, Institute of Information Science and Technologies ``Alessandro Faedo''}
\ead{daniela.giorgi@isti.cnr.it}

\author[label1,label2]{Nicola Quercioli}
\ead{nicola.quercioli2@unibo.it}

\begin{abstract}
The aim of this paper is to provide a general mathematical framework for group equivariance in the machine learning context. The framework builds on a synergy between persistent homology and the theory of group actions. We define group equivariant non-expansive operators (GENEOs), which are maps between function spaces associated with groups of transformations. We study the topological and metric properties of the space of GENEOs to evaluate their approximating power and set the basis for general strategies to initialise and compose operators. We begin by defining suitable pseudo-metrics for the function spaces, the equivariance groups, and the set of non-expansive operators. Basing on these pseudo-metrics, we prove that the space of GENEOs is compact and convex, under the assumption that the function spaces are compact and convex. These results provide fundamental guarantees in a machine learning perspective. We show examples on the MNIST and fashion-MNIST datasets. By considering isometry-equivariant non-expansive operators, we describe a simple strategy to select and sample operators, and show how the selected and sampled operators can be used to perform both classical metric learning and an effective initialisation of the kernels of a convolutional neural network.
\end{abstract}

\begin{keyword}
Group equivariant non-expansive operator\sep invariance group \sep group action \sep initial topology \sep persistent homology\sep persistence diagram\sep bottleneck distance\sep natural pseudo-distance\sep agent\sep perception pair\sep slice category\sep topological data analysis
\MSC Primary 55N35 Secondary 47H09\sep 54H15\sep 57S10\sep 68U05\sep 65D18
%
%
%
%
%
%

\end{keyword}

\end{frontmatter}



\section{Introduction}
\label{sec:introduction}
Deep learning-based algorithms reached human or superhuman performance in many real-world tasks. Beyond the extreme effectiveness of deep learning, one of the main reasons for its success is that raw data are sufficient---if not even more suitable than hand-crafted features---for these algorithms to learn a specific task. However, only few attempts have been made to create formal theories allowing for the creation of a controllable and interpretable framework, in which deep neural networks can be formally defined and studied. Furthermore, if learning directly from raw data allows one to outclass human feature engineering, the architectures of deep networks are growing more and more complex, and often are as task-specific as hand-crafted features used to be.

We aim at providing a general mathematical framework, where any agent capable of acting on a certain dataset (e.g. deep neural networks) can be formally described as a collection of operators acting on the data. To motivate our model, we assume that data cannot be studied directly, but only through the action of agents that measure and transform them.
Consequently, our model stems from a functional viewpoint. By interpreting data as points of a function space, it is possible to learn and optimise operators defined on the data. In other words, we are interested in the space of transformations of the data, rather than the data themselves.

Albeit unformalised, this idea is not new in deep learning. For instance, one of the main features of convolutional neural networks \cite{lecun1995convolutional} is the election of convolution as the operator of choice to act on the data. The convolutional kernels learned by optimising a loss function are operators that map an image to a new one that, for instance, is more easily classifiable. Moreover, convolutions are operators equivariant with respect to translations (at least in the ideal continuous case). We believe that the restriction to a specific family of operators and the equivariance with respect to interpretable transformations are key aspects of the success of this architecture. In our theory, operators are thought of as instruments allowing an agent to provide a measure of the world, as the kernels learned by a convolutional neural network allow a classifier to spot essential features to recognise objects belonging to the same category. Equivariance with respect to the action of a group (or a set) of transformations corresponds to the introduction of symmetries in the function space where data are represented. This allows us to both gain control on the nature of the learned operators, as well as drastically reduce the dimensionality of the space of operators to be explored during learning. Such a goal is in line with the recent interest for invariant representations in machine learning (cf., e.g., \cite{AnRoPo16}).

We make use of topological data analysis to describe spaces of group equivariant non-expansive operators (GENEOs). GENEOs are maps between function spaces associated with groups of transformations. We study the topological and metric properties of the space of GENEOs to evaluate their approximating power and set the basis for general strategies to initialise, compose operators and eventually connect them hierarchically to form operator networks. Our first contribution is to define suitable pseudo-metrics for the function spaces, the equivariance groups, and the set of non-expansive operators. Basing on these pseudo-metrics, we prove that the space of GENEOs is compact and convex, under the assumption that the function spaces are compact and convex. These results provide fundamental and provable guarantees for the goodness of this operator-based approach in a machine learning perspective: Compactness, for instance,  guarantees that any operator belonging to a certain space can be approximated by a finite number of operators sampled in the same space.

Our study of the space of GENEOs takes advantage of recent results in topological data analysis, in particular in the theory of persistent homology~\cite{FrJa16}. Our approach also generalises standard group equivariance to set equivariance, which seems much more suitable for the representation of intelligent agents.

To conclude, we validate our model with examples on the MNIST, fashion-MNIST and CIFAR10 datasets. These applications are aimed at proving the effectiveness on discrete examples, of the metrics defined and the theorems proved in the continuous case. By considering isometry equivariant non-expansive operators (IENEOs), we describe two simple algorithms allowing the selection and sampling of IENEOs based on few labelled samples taken from the dataset. We show how selected and sampled operators can be used to perform both classical metric learning and effective initialisation of the kernels of a convolutional neural network.

Our main contribution is a general framework to previous works on group equivariance in deep learning context~\cite{cohen2016group,worrall2017harmonic}. We believe that the formal foundation of our model is suitable to start a new theory of \textit{deep-learning engineering}, and that novel research lines will stem from the synergy of machine learning and topology.
This synergy is object of study by more and more researchers, focusing both on the treatment of data via TDA before applying classical  machine learning~\cite{AdEmal17,PuXiXi18}, and the analysis of the topology of convolutional neural networks~\cite{GaCa18}. However, our approach differs from the previous ones in that it focuses on a new theoretical setting, based on the introduction of new topologies and metrics.

The paper is structured as follows. In~\Cref{sec:episet} the epistemological foundations of our model are discussed. The mathematical background in topological persistence is provided in~\Cref{sec:mathbkg}. \Cref{sec:mathmod} details the mathematical model for data, transformations, and GENEOs. ~\Cref{sec:comp_conv} proves the compactness and convexity of the space of GENEOs, under suitable hypotheses. New results in persistent homology to define computable metrics in the space of GENEOs and in the space of data are presented in~\Cref{sec:newPH}, along with the extension of the theory from group to set equivariance.  Finally, in~\Cref{sec:results}, we describe two algorithms to select and sample operators in the discrete case, and show examples on the MNIST and fashion-MNIST datasets. A Python package allowing to reproduce the computational experiments described in~\Cref{sec:results} is available in \href{https://gitlab.com/mattia.bergomi/geneos}{gitlab.com/mattia.bergomi/geneos}.

\section{Epistemological setting}
\label{sec:episet}

Our mathematical model is justified by an epistemological background which revolves around the following assumptions:

\begin{enumerate}
\item \label{a} Data are represented as functions defined on topological spaces, since only
data that are stable with respect to a certain criterion (e.g., with respect to some kind of
measurement) can be considered for applications, and stability requires a topological
structure.
\item \label{b} Data cannot be studied in a direct and absolute way. They are only knowable through
acts of transformation made by an agent. From the point of view of data analysis, only
the pair (data, agent) matters. In general terms, agents are not endowed with purposes or goals: they are just ways
and methods to transform data. Acts of measurement are a particular class of acts of transformation.
\item \label{c} Agents are described by the way they transform data while respecting some kind of
invariance. In other words, any agent can be seen as a group equivariant operator acting
on a function space.
\item \label{d} Data similarity depends on the output of the considered agent.
\end{enumerate}

In other words, in our framework we assume that the analysis of data is replaced by the
analysis of the pair (data, agent). Since an agent can be seen as a
group equivariant operator, from the mathematical viewpoint our purpose consists in
presenting a good topological theory of suitable operators of this kind, representing agents.
For more details, we refer the interested reader to \cite{Fr16}.

\section{Mathematical background}
\label{sec:mathbkg}

Our mathematical model builds on functional analysis and Topological Data Analysis (TDA). TDA is an emerging field of research which studies topological approaches to explore and make sense of complex, high-dimensional data, such as artificial and biological networks \cite{Ca2009,Ca2013}. The basic idea is that topology can help to recognize patterns within data, and therefore to turn data into useful knowledge. One of the main concepts in TDA is Persistent Homology (PH), a mathematical tool that captures topological information at multiple scales. Our mathematical model proposes an integration between the theory of group actions and persistent homology.

In summary, persistent homology allows to represent the topological and geometrical features of a topological space $X$ (e.g. an image, a $3$-dimensional mesh) as it is \textit{seen} by a continuous, real-valued function $\varphi$ defined on the space. The homology functor (see for instance~\cite{hatcher2005algebraic}) is used to encode the information of the pair $(X,\varphi)$ in the form of \textit{persistence diagrams}.
In other words, we can associate each continuous function $\varphi:X\to\R$ with a persistence diagram $D_\varphi$, that is represented by a discrete collection of points in the real plane. Beyond the technicalities that are needed to define the concept of persistence diagram, two important points are to be stressed. First, persistence diagrams can be quickly computed. Second, an easily computable distance $\delta_{\mathrm{match}}$ between persistence diagrams is available and gives a lower bound for the max-norm distance between functions: $\delta_{\mathrm{match}}(D_{\varphi_1},D_{\varphi_2})\le\|\varphi_1-\varphi_2\|_\infty$. It follows that the bottleneck distance $\delta_{\mathrm{match}}$ between persistence diagrams can be used as an efficient proxy for the max-norm distance between real-valued functions. Since our approach is deeply rooted in the comparison of real-valued functions, persistence diagrams are a key tool in our model.
The definition of persistence diagram and of the bottleneck distance $\delta_{\mathrm{match}}$ are intuitively depicted in~\cref{fig:persistence_bkg} and rapidly formalised in what follows. We refer the reader to \cite{BiDFFaal08,CaZo09,EdHa08} for further details.

\subsection{Persistent Homology}
\label{sectionPH}

In PH, data are modelled as objects in a metric space. The first step is to filter the data so to obtain a family of nested topological spaces that captures the topological information at multiple scales. A common way to obtain a filtration is by sublevel sets of a continuous function, hence the name \emph{sublevelset persistence}. Let $\varphi$ be a real-valued continuous function on a topological space $X$. Persistent homology represents the changes of the homology groups of the sub-level set $X_t=\varphi^{-1}((-\infty,t])$ varying $t$ in $\mathbb{R}$. We can see the parameter $t$ as an increasing time, whose changes produce the birth and the death of $k$-dimensional holes in the sub-level set $X_t$. We observe that the number of independent 0-dimensional holes of $X_t$ equals the number of connected components of $X_t$ minus one, 1-dimensional holes refer to tunnels and 2-dimensional holes to voids.

\begin{definition}
If $u,v \in \mathbb{R}$ and $u<v$, we can consider the inclusion $i$ of $X_u$ into $X_v$. If $\check{H}$ denotes the \v{C}ech homology functor, such an inclusion induces a homomorphism $i_k: \check{H}_k(X_u) \rightarrow \check{H}_k(X_v)$ between the homology groups of $X_u$ and $X_v$ in degree $k$. The group $PH_k^\varphi(u,v):=i_k(\check{H}_k(X_u))$ is called the $k$th \textit{persistent homology group} with respect to the function $\varphi: X \rightarrow \mathbb{R}$, computed at the point $(u,v)$. The rank $r_k(\varphi)(u,v)$ of $PH_k^\varphi(u,v)$ is said the $k$th \emph{persistent Betti numbers function} (PBN) with respect to the function $\varphi: X \rightarrow \mathbb{R}$, computed at the point $(u,v)$.
\end{definition}




\begin{figure}[tb]
  \centering
  \includegraphics[width=.9\textwidth]{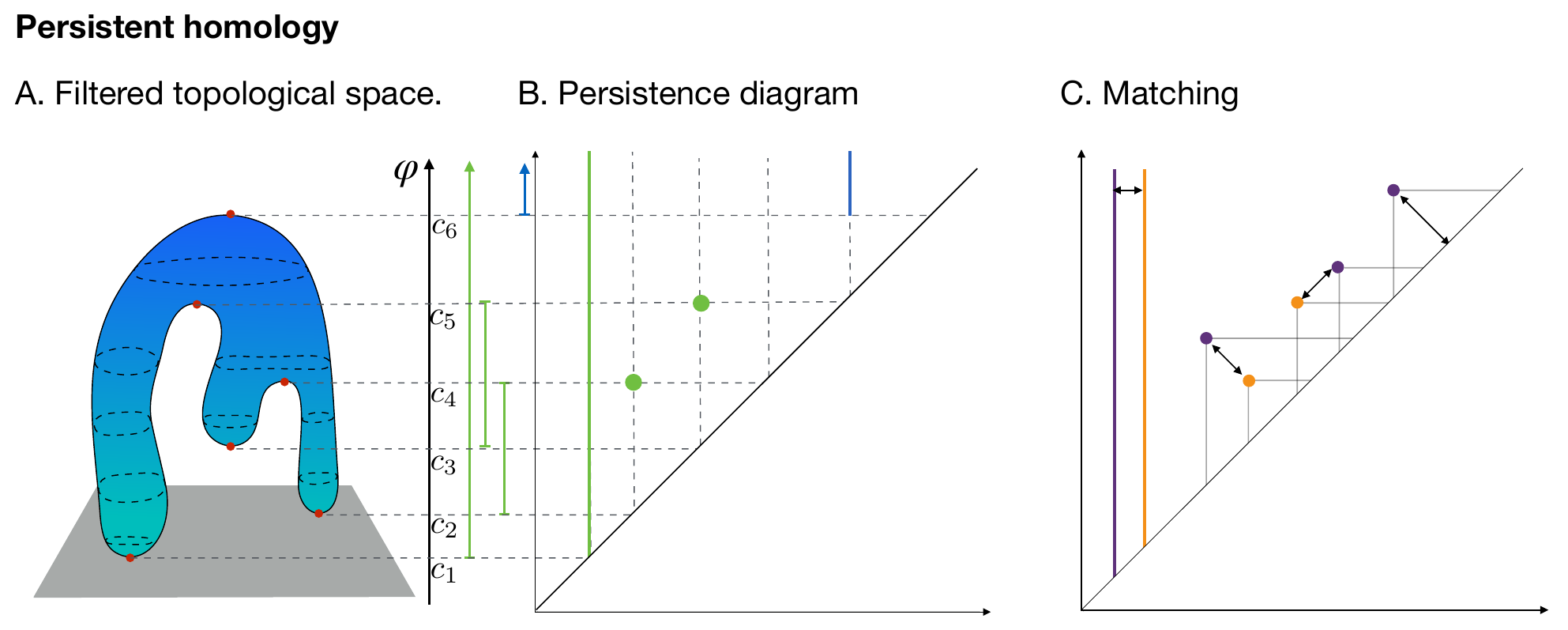}
  \caption{In persistent homology we consider pairs composed by a topological space and a continuous function defined on the topological space of interest. The (homological) critical values of the function induce naturally a sublevel set filtration of the topological space. In panel A, a topological sphere is filtered by considering the critical values of the height function. We obtain a filtration by considering the sequence of nested sublevel sets ordered according to the natural order on the critical values. The evolution throughout the filtration of the number of generators of the $k$th homology groups (i.e. the number of $k$-dimensional holes) is represented as a persistence diagram in Panel B. $0$-dimensional holes, or connected components are represented as green cornerpoints. The void generated when considering the last sublevel set, corresponding to the entire space, generates the cornerline depicted in blue. A distance between two persistence diagrams can be computed as an optimal matching of cornerpoint. The matching process is depicted in Panel C. Note how non-matchable cornerpoints can be associated to their projection on the diagonal. \label{fig:persistence_bkg}}
\end{figure}

Persistent Betti numbers functions can be completely described by multisets called \textit{persistence diagrams}.
The $k$th persistence diagram is the multiset of all the pairs $p_j=(b_j,d_j)$, where $b_j$ and $d_j$ are the times of birth and death of the $j$th $k$-dimensional hole, respectively. When a hole never dies, we set its time to death equal to $\infty$. The multiplicity $m(p_j)$ says how many holes share both the time of birth $b_j$ and the time of death $d_j$. For technical reasons, the points $(t,t)$ on the diagonal are added to each persistence diagram, each one with infinite multiplicity. 

Each persistence diagram $D$ can contain an infinite number of points.
For every $q \in \Delta^{*}:= \{(x,y) \in \mathbb{R}^2 \ : \ x<y \} \cup \{(x,\infty):x\in \mathbb{R} \}$, the  equality $m(q)=0$ means that $q$ does not belong to the persistence diagram $D$.
We define on $\bar{\Delta}^{*}:=\{(x,y) \in \mathbb{R}^2 \ : \ x\le y \} \cup \{(x,\infty):x\in \mathbb{R} \}$ a pseudo-metric as follows
\begin{equation}d^{*}((x,y),(x',y')):= \min \left\{\max \{|x-x'|,|y-y'|\},\max  \left \{\frac{y-x}{2},\frac{y'-x'}{2}\right \}\right  \}
\end{equation}
by agreeing that $\infty - y=\infty, \ y-\infty= -\infty$ for $y\ne \infty, \ \infty - \infty=0, \ \frac{\infty}{2}=\infty, \ |\pm \infty|=\infty, \ \min\{\infty,c\}=c, \ \max \{\infty, c\}=\infty$.

The pseudo-metric $d^{*}$ between two points $p$ and $p'$ takes the smaller value between the cost of moving $p$ to $p'$ and the cost of moving $p'$ and $p$ onto $\Delta:=\{(x,y) \in \mathbb{R}^2 \ : \ x=y \}$. Obviously, $d^{*}(p,p')=0$ for every $p, p' \in \Delta$. If $p\in \Delta^{+}:= \{(x,y) \in \mathbb{R}^2 \ : \ x<y \}$ and $p'\in \Delta$, then $d^{*}(p,p')$ equals the distance, induced by the max-norm, between $p$ and $\Delta$. Points at infinity have a finite distance only to the other points at infinity, and their distance equals the Euclidean distance between abscissas.

We can compare persistence diagrams by means of the \emph{bottleneck distance} (also called \emph{matching distance}) $\delta_{\mathrm{match}}$.

\begin{definition} \label{def:matching}
Let $D, \ D'$ be two persistence diagrams. We define the \emph{bottleneck distance} $\delta_{\mathrm{match}}$ between $D$ and $D'$ by setting
\begin{equation}\delta_{\mathrm{match}}\left(D,D'\right):=\inf_{\sigma}\sup_{p\in D}d^{*}\left(p,\sigma\left(p\right)\right),
\end{equation}
where $\sigma$ varies in the set of all bijections from the multiset $D$ to the multiset $D'$.
\end{definition}

For further informations about persistence diagrams and the bottleneck distance, we refer the reader to \cite{EdHa08,CSEdHa07}. Each persistent Betti numbers function is associated with exactly one persistence diagram, and (if we use \v{C}ech homology) every persistence diagram is associated with exactly one persistent Betti numbers function. Then the metric $\delta_{\mathrm{match}}$ induces a pseudo-metric $d_{\mathrm{match}}$ on the sets of the persistent Betti numbers functions \cite{CeDFFeal13}.





\section{Mathematical model}
\label{sec:mathmod}

In our mathematical model, data are represented as function spaces, that is, as sets of real-valued functions on some topological space (\Cref{sec:datarepr}). Function spaces come with invariance groups representing the transformations on data which are admissible for some agent (\Cref{sec:invgroup}). The groups of transformations are specific to different agents, and can be either learned or part of prior knowledge. The operators on data are then defined as group equivariant non-expansive operators (GENEOs) (\Cref{sec:operators}).

\subsection{Data representation}
\label{sec:datarepr}

Let us consider a set $X \neq \emptyset $ and a topological subspace $\varPhi$ of the set of all bounded
functions $\varphi$ from $X$ to $\mathbb{R}$, denoted by $\mathbb{R}^X_{b}$ and endowed with the topology induced by the distance
\begin{equation} \label{eq:diphi}
D_\varPhi(\varphi_1,\varphi_2):=\left \| \varphi_1-\varphi_2\right \|_{\infty}.
\end{equation}

If $\varPhi$ is compact, then it is also bounded, i.e., there exists  a  non-negative real  value $L$, such that $\|\varphi \|_{\infty}\le L$ for every $\varphi \in \varPhi$.
We can think of $X$ as the space where one makes measurements, and of $\varPhi$ as the set of admissible measurements, called \emph{set of admissible functions}. In other words, $\varPhi$ is the set of functions from $X$ to $\mathbb{R}$ that can be produced by measuring instruments. For example, an image can be represented as a function $\varphi$ from the real plane $X$ to the real numbers.


To quantify the distance between two points $x_1,x_2\in X$, we compare the values taken at $x_1$ and $x_2$ by the admissible functions in $\varPhi$. Therefore, we endow $X$ with the extended pseudo-metric\footnote{We recall that a pseudo-metric is just a distance $d$ without the property that $d(a,b)=0$ implies $a=b$. An extended pseudo-metric is a pseudo-metric that may take the value $\infty$. If $\varPhi$ is bounded, then $D_X$ is a pseudo-metric.} $D_X$ defined by setting
\begin{equation}
D_X(x_1,x_2)=\sup_{\varphi \in \varPhi}|\varphi (x_1) - \varphi (x_2) |
\end{equation}
for every $x_1,x_2\in X$ (see Appendix~\ref{appendix-D_X}).

The assumption behind the definition of $D_X$ is that two points can be distinguished only if they assume different values for some admissible function. As an example, if $\varPhi$ contains only constant functions, no discrimination can be made between points in $X$ and hence $D_X(x_1,x_2)$ vanishes for every $x_1,x_2\in X$.


The pseudo-metric space $(X,D_X)$ can be considered as a topological space by choosing as a base $\mathcal{B}_{D_X}$ the collection of all the sets
\begin{equation}
B_{X}(x,\varepsilon)=\{x' \in X : D_X(x,x') < \varepsilon \}
\end{equation}
where $\varepsilon > 0$ and $x \in X$ (see \cite{Ga64}).

The reason to endow the measurement space $X$ with a topology, rather than considering just a set, follows from the need of formalizing the assumption that data are stable. 
To formalize stability we have to use a topology (or a pseudo-metric inducing a topology).


It is interesting to stress the link between the topology $\tau_{D_X}$ associated with $D_X$ and the initial topology\footnote{We recall that $\tau_{\mathrm{in}}$ is the coarsest topology on $X$ such that each function $\varphi\in\varPhi$ is continuous.
Explicitly, the open sets in $\tau_{\mathrm{in}}$ are the sets that can be obtained as unions of finite intersections of sets $\varphi^{-1}(U)$, where $\varphi\in\varPhi$ and $U\in \mathcal{T}_E$.
In other words, a base $\mathcal{B}_{\mathrm{in}}$ of $\tau_{\mathrm{in}}$ is given by the collection of all sets that can be represented as $\bigcap_{i \in I}\varphi^{-1}_i( U_i )$, where $I$ is a finite set of indexes and $\varphi_i\in\varPhi$, $U_i\in \mathcal{T}_E$ for every $i\in I$ \cite{Ga64}.
} $\tau_{\mathrm{in}}$ on $X$ with respect to $\varPhi$, when we take the Euclidean topology $\mathcal{T}_E$ on $\mathbb{R}$.
\begin{theorem}
\label{tau=tau'}
The topology  $\tau_{D_X}$ on $X$ induced by the pseudo-metric $D_X$ is finer than the initial topology $\tau_{\mathrm{in}}$ on  $X$ with respect to $\varPhi$.
If $\varPhi$ is totally bounded, then the topology  $\tau_{D_X}$ coincides with $\tau_{\mathrm{in}}$.
\end{theorem}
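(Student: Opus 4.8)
The plan is to prove the two inclusions separately: the ``finer'' direction $\tau_{\mathrm{in}}\subseteq\tau_{D_X}$, which holds with no extra hypothesis, and the reverse $\tau_{D_X}\subseteq\tau_{\mathrm{in}}$, which is where total boundedness enters. For the first inclusion I would exploit the universal property that $\tau_{\mathrm{in}}$ is the coarsest topology making every $\varphi\in\varPhi$ continuous: it therefore suffices to check that each $\varphi$ is continuous for $\tau_{D_X}$. This is immediate from the definition of $D_X$, since for all $x_1,x_2\in X$ the single term $|\varphi(x_1)-\varphi(x_2)|$ is dominated by the supremum $D_X(x_1,x_2)$; hence each $\varphi$ is $1$-Lipschitz, and in particular continuous, with respect to $D_X$. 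By minimality of $\tau_{\mathrm{in}}$ this forces $\tau_{\mathrm{in}}\subseteq\tau_{D_X}$.

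For the reverse inclusion it is enough to show that every basic open ball $B_X(x,\varepsilon)$ belongs to $\tau_{\mathrm{in}}$, since such balls form a base of $\tau_{D_X}$. The key device is a finite approximation of $D_X$ coming from total boundedness: given $\eta>0$, I would choose a finite $\eta$-net $\varphi_1,\dots,\varphi_n\in\varPhi$ for the sup-norm, and then prove the estimate $\max_i|\varphi_i(x)-\varphi_i(x')|\le D_X(x,x')\le \max_i|\varphi_i(x)-\varphi_i(x')|+2\eta$ for all $x,x'$. The left inequality holds because each $\varphi_i\in\varPhi$; the right one follows by the triangle inequality, replacing an arbitrary $\varphi\in\varPhi$ by a nearest $\varphi_i$ (so $|\varphi(x)-\varphi(x')|\le|\varphi_i(x)-\varphi_i(x')|+2\eta$) and taking the supremum. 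This shows that the genuinely infinite quantity $D_X$ is controlled, up to $2\eta$, by finitely many functions.

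With this in hand, fix $x_0\in B_X(x,\varepsilon)$ and set $r=\varepsilon-D_X(x,x_0)>0$. Taking $\eta=r/4$ and $\delta=r/4$, the set $V=\bigcap_{i=1}^{n}\varphi_i^{-1}\bigl((\varphi_i(x_0)-\delta,\varphi_i(x_0)+\delta)\bigr)$ is a basic $\tau_{\mathrm{in}}$-neighborhood of $x_0$. For $x'\in V$ one has $|\varphi_i(x)-\varphi_i(x')|\le|\varphi_i(x)-\varphi_i(x_0)|+\delta\le D_X(x,x_0)+\delta$ for every $i$, so the estimate yields $D_X(x,x')\le\max_i|\varphi_i(x)-\varphi_i(x')|+2\eta\le D_X(x,x_0)+\delta+2\eta<D_X(x,x_0)+r=\varepsilon$. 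Thus $V\subseteq B_X(x,\varepsilon)$, showing that $B_X(x,\varepsilon)$ is a $\tau_{\mathrm{in}}$-neighborhood of each of its points, hence $\tau_{\mathrm{in}}$-open. This gives $\tau_{D_X}\subseteq\tau_{\mathrm{in}}$ and, combined with the first part, equality.

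I expect the main obstacle to be exactly the passage from the supremum over all of $\varPhi$ to a finite maximum. Total boundedness is precisely what makes this reduction possible, and it is the hypothesis that cannot be dropped: without it, $D_X$ could depend on infinitely many functions simultaneously, and no single finite intersection of preimages $\varphi_i^{-1}(U_i)$ — which is all the base of $\tau_{\mathrm{in}}$ provides — could be squeezed inside a metric ball. The rest of the argument is a routine choice of the constants $\eta$ and $\delta$ driven by the $2\eta$ slack in the approximation estimate.
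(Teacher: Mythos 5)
Your proof is correct and follows essentially the same route as the paper's: total boundedness yields a finite $\eta$-net whose finite maximum approximates $D_X$ up to $2\eta$ (this is exactly the paper's Proposition~\ref{stimaphi} in Appendix~A), and that estimate is then used to squeeze a basic $\tau_{\mathrm{in}}$-open set inside each metric ball, while the first inclusion amounts to the $1$-Lipschitz continuity of each $\varphi\in\varPhi$ with respect to $D_X$ (the paper proves this inclusion by an explicit ball-in-basic-set argument, which is the same content). If anything, your re-centering step (working with $r=\varepsilon-D_X(x,x_0)$) is more careful than the paper's write-up, which tacitly passes from $B_X(y,\varepsilon)$ to $B_X(x,\varepsilon)$ in its final inclusion.
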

(The proof is in Appendix~\ref{AppendixProofs}.)\\



Since $\tau_{\mathrm{in}}$ is the coarsest topology on $X$ such that $\varphi \in \varPhi$ is continuous, Theorem \ref{tau=tau'} guarantees that the assumption that the functions are continuous is not restrictive in practice, for example while dealing with images, which often contain discontinuities. Indeed, our functions are not required to be continuous with respect to other topologies (e.g., the Euclidean topology $\tau_E$ on $X=\R^2$). \\ 

In general $X$ is not compact with respect to the topology $\tau_{D_X}$, even if $\varPhi$ is compact. For example, if $X$ is the open interval $]0,1[$ and $\varPhi$ contains only the identity from $]0,1[$ to $]0,1[$, the topology induced by $D_X$ is simply the Euclidean topology and hence $X$ is not compact. However, the next result holds.

\begin{theorem}
\label{Xcomplete}
If $\varPhi$ is compact and $X$ is complete then $X$ is also compact.
\end{theorem}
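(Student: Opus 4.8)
The plan is to use the standard characterisation that, in a (pseudo-)metric space, compactness is equivalent to completeness together with total boundedness. Since $X$ is assumed to be complete with respect to $D_X$, it will suffice to prove that $(X,D_X)$ is totally bounded; completeness then upgrades this to (sequential, hence topological) compactness. The bridge between the two hypotheses will be the observation that compactness of $\varPhi$ lets us approximate the supremum defining $D_X$ by a maximum over finitely many admissible functions.

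Concretely, fix $\varepsilon>0$ and set $\eta=\varepsilon/3$. Because $\varPhi$ is compact in the $\|\cdot\|_\infty$-topology it is in particular totally bounded, so there are finitely many functions $\varphi_1,\dots,\varphi_n\in\varPhi$ such that every $\varphi\in\varPhi$ satisfies $\|\varphi-\varphi_i\|_\infty\le\eta$ for some $i$. Using this net I would compare $D_X$ with the auxiliary pseudo-metric $d_n(x_1,x_2):=\max_{1\le i\le n}|\varphi_i(x_1)-\varphi_i(x_2)|$. On one hand $d_n\le D_X$ trivially, since the maximum is taken over a subfamily of $\varPhi$. On the other hand, for any $\varphi\in\varPhi$ and a nearby $\varphi_i$, the triangle inequality gives $|\varphi(x_1)-\varphi(x_2)|\le |\varphi_i(x_1)-\varphi_i(x_2)|+2\eta\le d_n(x_1,x_2)+2\eta$, and taking the supremum over $\varphi\in\varPhi$ yields $D_X\le d_n+2\eta$.

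It then remains to cover $X$ with finitely many $d_n$-small sets. Since $\varPhi$ is compact it is bounded, say $\|\varphi\|_\infty\le L$ for all $\varphi\in\varPhi$, so the map $F=(\varphi_1,\dots,\varphi_n):X\to\mathbb{R}^n$ takes values in the cube $[-L,L]^n$, which is totally bounded for the $\ell^\infty$ distance. Choosing a finite $\eta$-net of $F(X)$ with centres of the form $F(x^{(1)}),\dots,F(x^{(m)})$ for suitable $x^{(j)}\in X$, every $x\in X$ satisfies $d_n(x,x^{(j)})\le\eta$ for some $j$, whence $D_X(x,x^{(j)})\le d_n(x,x^{(j)})+2\eta\le 3\eta=\varepsilon$. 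Thus the balls $B_X(x^{(j)},\varepsilon)$ cover $X$, proving that $(X,D_X)$ is totally bounded.

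Finally I would combine the two ingredients: $(X,D_X)$ is complete by hypothesis and totally bounded by the above, hence every sequence admits a Cauchy subsequence that converges in $X$, so $X$ is sequentially compact and therefore compact. The main subtlety I anticipate is not in any single estimate but in making the finite-function approximation precise, i.e.\ justifying that the sup-norm net of $\varPhi$ really does transfer total boundedness from the finite-dimensional image $F(X)$ back to $(X,D_X)$, together with a brief check that the complete-plus-totally-bounded criterion remains valid in the merely pseudo-metric setting, where limits of Cauchy sequences need not be unique.
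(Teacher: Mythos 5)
Your proof is correct, but it follows a genuinely different route from the paper's. The paper argues sequentially: given a sequence $(x_i)$ in $X$ and a fixed $\varepsilon>0$, it takes a finite $\varepsilon$-net $\{\varphi_1,\dots,\varphi_n\}$ of $\varPhi$, applies Bolzano--Weierstrass iteratively to the bounded real sequences $(\varphi_k(x_i))$ to extract one subsequence $(x_{i_j})$ on which every net function converges, and then uses the estimate $|\varphi(x_{i_r})-\varphi(x_{i_s})|\le 2\|\varphi-\varphi_k\|_\infty+|\varphi_k(x_{i_r})-\varphi_k(x_{i_s})|$ to conclude that the subsequence is Cauchy for $D_X$, so completeness gives convergence and hence sequential compactness. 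You instead prove that $(X,D_X)$ is totally bounded, by mapping $X$ into the cube $[-L,L]^n$ via $F=(\varphi_1,\dots,\varphi_n)$ and pulling back a finite net of the image, and then invoke the standard fact that completeness plus total boundedness implies compactness (which indeed survives in the pseudo-metric setting, since uniqueness of limits plays no role). Both arguments pivot on the same key estimate $D_X\le\max_i|\varphi_i(\cdot)-\varphi_i(\cdot)|+2\eta$, which is essentially the paper's Proposition~\ref{stimaphi}. What your route buys: the covering argument cleanly separates the two hypotheses, and it sidesteps a small imprecision in the paper's write-up --- there, the subsequence extracted for a \emph{fixed} $\varepsilon$ is only shown to satisfy $D_X(x_{i_r},x_{i_s})\le 3\varepsilon$ eventually, i.e.\ it is $3\varepsilon$-Cauchy rather than Cauchy, and a diagonal argument over $\varepsilon_m\to 0$ is tacitly needed; in your version that diagonalization is absorbed into the cited standard theorem. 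What the paper's route buys: it is more self-contained, needing only Bolzano--Weierstrass and the fact (cited from the literature) that sequentially compact pseudo-metric spaces are compact, rather than the full complete-plus-totally-bounded criterion.
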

(The proof is in Appendix~\ref{AppendixProofs}.)\\

\subsubsection{A remark on the use of pseudo-metrics}
\label{psm}

The reader could think better to change the pseudo-metric $D_X$ into a metric $D'$ by quotienting out $X$ by the equivalence relation $x_1Rx_2\iff D_X(x_1,x_2)=0$ and defining $D'([x_1],[x_2])=D_X(x_1,x_2)$ for any $[x_1],[x_2]\in X/R$. The reason we do not do this is that several different sets of admissible measurements can be considered on the same set $X$. For two different sets $\varPhi_1$, $\varPhi_2$ of admissible functions, we obtain two different quotient spaces $X/R_1$, $X/R_2$. If we forget about the original space $X$, we lose the possibility of linking the equivalence classes in $X/R_1$ with the ones in $X/R_2$. On the contrary, we prefer to preserve the identity of points in $X$, studying how they link to each other when we change the set $\varPhi$. This observation leads us to work with pseudo-metrics instead of metrics.

Before proceeding, we observe that the map $\pi$ taking each point $x\in X$ to the equivalence class $[x]\in X/R$ is continuous with respect to $D_X$ and $D'$, and surjective. Moreover, $\pi$ takes each ball with respect to $D_X$ to a ball with respect to $D'$,
while the inverse image under $\pi$ of each ball with respect to $D'$ is a ball with respect to $D_X$. It follows that if a subset $S\subseteq X$ is compact (sequentially compact) for $D_X$ then $\pi(S)$ is compact (sequentially compact) for $D'$, and that if a subset $\mathcal{S}\subseteq X/R$ is compact (sequentially compact) for $D'$ then $\pi^{-1}(\mathcal{S})$ is compact (sequentially compact) for $D_X$. Finally, given a sequence $(x_i)$ in $X$, we observe that $(x_i)$ converges to $\bar x$ in $X$ if and only if the sequence
 $([x_i])$ converges to $[\bar x]$ in $X/R$. These facts imply that the development of our theory in terms of pseudo-metrics is not far from the analysis in terms of metrics.

\subsection{Transformations on data}
\label{sec:invgroup}

In our model, we assume that data are transformed through maps from $X$ to $X$ which are $\varPhi$-preserving homeomorphisms with respect to the pseudo-metric $D_X$. Let $\Homeo(X)$ denote the set of homeomorphisms from $X$ to $X$ with respect to $D_X$, and $\Homeo_\varPhi(X)$ denote the set of $\varPhi$-preserving homeomorphisms, namely the homeomorphisms $g\in\Homeo(X)$ such that $\varphi\circ g\in\varPhi$ and $\varphi\circ g^{-1}\in\varPhi$ for every $\varphi\in\varPhi$.

The following Proposition~\ref{propgisometry} implies that $\Homeo_\varPhi(X)$ is exactly the set of all bijections $g:X\to X$ such that $\varphi\circ g\in\varPhi$ and $\varphi\circ g^{-1}\in\varPhi$ for every $\varphi\in\varPhi$.


\begin{proposition}
\label{propgisometry}
If $g$ is a bijection from $X$ to $X$ such that $\varphi\circ g\in\varPhi$ and $\varphi\circ g^{-1}\in\varPhi$ for every $\varphi\in\varPhi$, then $g$ is an isometry\footnote{The definition of isometry between pseudo-metric spaces can be considered as a special case of isometry between metric spaces. Let $(X_1, d_1)$ and $(X_2, d_2)$ be two pseudo-metric spaces. It is easy to check that if $f: X_1\longrightarrow X_2$ is a function verifying the equality $d_1(x,y)=d_2(f(x),f(y))$ for every $x,y\in X_1$, then $f$ is continuous with respect to the topologies induced by $d_1$ and $d_2$. If $f$ verifies the previous equality and is bijective, we say that it is an \emph{isometry} between the considered pseudo-metric spaces.
If $f$ is an isometry, we can trivially observe that $f^{-1}$ is also an isometry, and that $f$ is a homeomorphism.}
 (and hence a homeomorphism) with respect to $D_X$.
\end{proposition}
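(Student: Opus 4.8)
The plan is to verify the isometry condition $D_X(g(x_1),g(x_2)) = D_X(x_1,x_2)$ directly from the definition of $D_X$ as a supremum over $\varPhi$, exploiting the hypothesis that precomposition with $g$ (and with $g^{-1}$) preserves $\varPhi$. The fact that $g$ is then a homeomorphism follows immediately from the footnote definition of isometry between pseudo-metric spaces.

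First I would rewrite the left-hand side using $\varphi(g(x)) = (\varphi\circ g)(x)$, so that
\[
D_X(g(x_1),g(x_2)) = \sup_{\varphi\in\varPhi} \bigl|(\varphi\circ g)(x_1) - (\varphi\circ g)(x_2)\bigr|.
\]
The key observation is that, by hypothesis, $\varphi\circ g\in\varPhi$ for every $\varphi\in\varPhi$, so the functions $\varphi\circ g$ all lie in $\varPhi$; taking a supremum over this subcollection can only be at most the supremum over all of $\varPhi$. This yields the inequality $D_X(g(x_1),g(x_2)) \le D_X(x_1,x_2)$.

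For the reverse inequality I would apply the same reasoning to $g^{-1}$, which satisfies the identical hypothesis $\varphi\circ g^{-1}\in\varPhi$ for every $\varphi\in\varPhi$. This gives $D_X(g^{-1}(y_1),g^{-1}(y_2)) \le D_X(y_1,y_2)$ for all $y_1,y_2\in X$; setting $y_i = g(x_i)$ and using $g^{-1}\circ g = \mathrm{id}$ produces $D_X(x_1,x_2) \le D_X(g(x_1),g(x_2))$. Combining the two inequalities gives equality. Alternatively, and more transparently, one notes that $\varphi\mapsto\varphi\circ g$ is a bijection of $\varPhi$ onto itself, its inverse being $\psi\mapsto\psi\circ g^{-1}$, which is well defined precisely because $g^{-1}$ also preserves $\varPhi$; hence the index set of the supremum is literally unchanged and the two suprema coincide exactly.

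There is no real obstacle here: the entire argument rests on the elementary fact that a supremum over a subset cannot exceed the supremum over the whole set, together with the symmetry supplied by the hypothesis on $g^{-1}$. The only point to handle with minor care is the indispensable role of $g^{-1}$, since preservation under $g$ alone yields just one of the two inequalities. Finally, because an isometry between pseudo-metric spaces is bijective by definition and its inverse is again an isometry, the footnote's remark immediately gives that $g$ is a homeomorphism with respect to $D_X$, completing the proof.
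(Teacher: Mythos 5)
Your proof is correct and rests on exactly the same idea as the paper's: the paper observes that $R_g:\varphi\mapsto\varphi\circ g$ maps $\varPhi$ onto itself (surjectivity coming from $\varphi=R_g\left(R_{g^{-1}}(\varphi)\right)$, which is where the hypothesis on $g^{-1}$ enters), so the supremum is merely reindexed and equality holds at once --- precisely your ``alternative'' formulation. Your primary two-inequality version is just a split presentation of that same reindexing argument, so no substantive difference remains.
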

(The proof is in Appendix~\ref{AppendixProofs}.)\\




\begin{remark}
\label{Homeo<>HomeoPhi}
In general, $\Homeo(X)\neq\Homeo_\varPhi(X)$. As an example, take $X=[0,1]$ and $\varPhi=\{\mathrm{Id}\}$. In this case $D_X(x_1,x_2)=|x_1-x_2|$ and $\Homeo_\varPhi=\{\mathrm{Id}\}$, while $\Homeo$ is the set of all homeomorphisms from the interval $[0,1]$ to itself with respect to the Euclidean distance.
\end{remark}

\begin{remark}
\label{remRgisometry}
For each $g\in \Homeo_\varPhi(X)$, we consider the bijective map $R_g:\varPhi \longrightarrow \varPhi$ defined by setting $R_g(\varphi)= \varphi \circ g$ for every $\varphi\in\varPhi$. We claim that $R_g$ preserves the pseudo-distance $D_\varPhi$ defined by Equality (\ref{eq:diphi}). Indeed, if $\varphi, \varphi' \in \varPhi$ and $g\in G$ then
\begin{align}
D_\varPhi(\varphi\circ g, \varphi' \circ g) &= \sup_{x\in X}|(\varphi \circ g)(x)-(\varphi'\circ g)(x)| \nonumber \\
&= \sup_{x\in X}|\varphi(g(x))-\varphi'(g(x))|\\
&= \sup_{y\in X}|\varphi(y)-\varphi'(y)|= D_\varPhi(\varphi, \varphi'), \nonumber
\end{align}
because $g$ is a bijection. Since $R_g$ is a bijection preserving $D_\varPhi$, then $R_g$ is an isometry with respect to $D_\varPhi$.
\end{remark}

In the rest of this paper we will assume that $\varPhi$ is compact with respect to the topology induced by $D_\varPhi$, and that $X$ is complete (and hence compact) with respect to the topology induced by $D_X$. \\


Let us now consider a subgroup $G$ of the group $\Homeo_\varPhi(X)$. $G$ represents the set of transformations on data for which we require equivariance to be respected.

We can define the pseudo-distance $D_G$ on $G$:
\begin{equation} \label{eq:digi}
D_G(g_1,g_2):=\sup_{\varphi \in \varPhi}D_\varPhi(\varphi \circ g_1,\varphi \circ g_2)
\end{equation}
from $G\times G$ to $\mathbb{R}$ (see Appendix~\ref{appendix-D_X}).

The rationale in the definition of $D_G$ is that in our model every comparison must be based on the max-norm distance between admissible acts of measurement. As a consequence, we define the distance between two homeomorphisms by the difference of their actions on the set $\varPhi$ of possible measurements.

\begin{remark}
$D_G$ can be expressed as:
\begin{equation}
    D_G(g_1,g_2) = \sup_{x \in X} D_X(g_1(x),g_2(x)) = \sup_{x \in X} \sup_{\varphi \in \varPhi} | \varphi(g_1(x)) - \varphi(g_2(x)) |.
\end{equation}
\end{remark}

We can now state the following theorems:

\begin{theorem}
\label{thmRgcontinuous}
$G$ is a topological group with respect to the pseudo-metric topology and the action of $G$ on $\varPhi$ through right composition is continuous.
\end{theorem}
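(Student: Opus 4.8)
The plan is to verify directly the two defining requirements for a topological group --- joint continuity of the composition map $(g_1,g_2)\mapsto g_1\circ g_2$ and continuity of inversion $g\mapsto g^{-1}$ --- and then the joint continuity of the action $(\varphi,g)\mapsto\varphi\circ g$. The engine behind every estimate is the pair of isometry facts already available: by Proposition~\ref{propgisometry} every $g\in G$ acts on $(X,D_X)$ as an isometry, and by Remark~\ref{remRgisometry} every $R_g$ acts on $(\varPhi,D_\varPhi)$ as an isometry. I will also use throughout the identity $D_G(g_1,g_2)=\sup_{x\in X}D_X(g_1(x),g_2(x))$ recorded just before the statement, together with the fact that, since each $\varphi\in\varPhi$ belongs to the family defining $D_X$, one has $|\varphi(a)-\varphi(b)|\le D_X(a,b)$ for all $a,b\in X$.

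For continuity of composition I would first split by the triangle inequality for $D_G$:
\[
D_G(g_1\circ g_2,\,g_1'\circ g_2')\le D_G(g_1\circ g_2,\,g_1'\circ g_2)+D_G(g_1'\circ g_2,\,g_1'\circ g_2').
\]
The first summand equals $D_G(g_1,g_1')$: writing it as $\sup_{x}D_X(g_1(g_2(x)),g_1'(g_2(x)))$ and performing the change of variable $y=g_2(x)$ --- legitimate because $g_2$ is a bijection of $X$ --- turns the supremum into $\sup_{y}D_X(g_1(y),g_1'(y))$. The second summand equals $D_G(g_2,g_2')$: it is $\sup_{x}D_X(g_1'(g_2(x)),g_1'(g_2'(x)))$, and since $g_1'$ is a $D_X$-isometry each term reduces to $D_X(g_2(x),g_2'(x))$. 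Hence $D_G(g_1\circ g_2,g_1'\circ g_2')\le D_G(g_1,g_1')+D_G(g_2,g_2')$, which is joint $1$-Lipschitz continuity.

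For inversion I expect to prove the stronger statement that $g\mapsto g^{-1}$ is an \emph{isometry}. Starting from $D_G(g^{-1},g'^{-1})=\sup_{x}D_X(g^{-1}(x),g'^{-1}(x))$, the substitution $x=g'(y)$ together with the isometry property of $g'$ recasts this as $\sup_{y}D_X(g^{-1}(g'(y)),y)=D_G(g^{-1}\circ g',\mathrm{Id})$. On the other hand, applying the $D_X$-isometry $g^{-1}$ inside $D_G(g,g')=\sup_{x}D_X(g(x),g'(x))$ gives $D_G(g,g')=\sup_{x}D_X(x,g^{-1}(g'(x)))=D_G(\mathrm{Id},g^{-1}\circ g')$. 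By symmetry of $D_G$ these two quantities coincide, so $D_G(g^{-1},g'^{-1})=D_G(g,g')$ and inversion is continuous. Together with the previous step this shows that $G$ is a topological group.

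Finally, for the action I would again split:
\[
D_\varPhi(\varphi\circ g,\,\varphi'\circ g')\le D_\varPhi(\varphi\circ g,\,\varphi'\circ g)+D_\varPhi(\varphi'\circ g,\,\varphi'\circ g').
\]
The first summand equals $D_\varPhi(\varphi,\varphi')$ because $R_g$ preserves $D_\varPhi$ (Remark~\ref{remRgisometry}). For the second I bound pointwise: $D_\varPhi(\varphi'\circ g,\varphi'\circ g')=\sup_{x}|\varphi'(g(x))-\varphi'(g'(x))|\le\sup_{x}D_X(g(x),g'(x))=D_G(g,g')$, using $|\varphi'(a)-\varphi'(b)|\le D_X(a,b)$. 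This yields $D_\varPhi(\varphi\circ g,\varphi'\circ g')\le D_\varPhi(\varphi,\varphi')+D_G(g,g')$, i.e. the action is jointly (Lipschitz) continuous. The point demanding the most care is the inversion step, where the change of variable and the repeated use of the isometry property of elements of $G$ must be combined correctly; everywhere it is bijectivity of each $g$ that makes the supremum-substitutions valid, and Proposition~\ref{propgisometry} is exactly what guarantees that each $g$ is a $D_X$-isometry.
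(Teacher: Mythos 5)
Your proof is correct, and it takes a genuinely different route from the paper's. The paper establishes sequential continuity under the standing compactness assumptions: for composition it performs the same triangle-inequality split and change of variable that you do for the first summand, but it bounds the second summand by passing to a finite $\delta$-net of $\varPhi$ (Proposition~\ref{stimaphi}) and invoking uniform continuity of each $\varphi\circ g$ on the compact space $X$ (via Theorem~\ref{Xcomplete}); for inversion it argues by contradiction, using compactness of $\varPhi$ to extract convergent subsequences $\bar{\varphi}=\lim_j \varphi_j$ and $\hat{\varphi}=\lim_j \varphi_j\circ f_{i_j}^{-1}$ and deriving a contradiction with the injectivity of $R_f$. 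You instead exploit Proposition~\ref{propgisometry} throughout: since every $g\in G$ and every $g^{-1}$ is a $D_X$-isometry (the hypothesis of that proposition is symmetric in $g$ and $g^{-1}$, and $G$ is a subgroup of $\Homeo_\varPhi(X)$), and since $D_G(g_1,g_2)=\sup_{x\in X}D_X(g_1(x),g_2(x))$, you obtain the quantitative estimates $D_G(g_1\circ g_2,g_1'\circ g_2')\le D_G(g_1,g_1')+D_G(g_2,g_2')$ and $D_G(g^{-1},g'^{-1})=D_G(g,g')$ with no compactness anywhere. This buys three things: joint Lipschitz continuity of composition rather than mere sequential continuity; the stronger statement that inversion is an isometry of $(G,D_G)$, a classical fact for isometry groups under the uniform metric; and validity of the theorem without assuming $\varPhi$ compact or $X$ complete, hypotheses that the paper's argument genuinely consumes. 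Your handling of the action essentially coincides with the paper's. Two phrasing quibbles, neither a gap: the substitution $x=g'(y)$ in the inversion step needs only the bijectivity of $g'$, not its isometry property; and the final identification of the two suprema is symmetry of $D_X$ (equivalently, symmetry of $D_G$ evaluated at the pair $(g^{-1}\circ g',\mathrm{Id})$, which is legitimate notation because $g^{-1}\circ g'\in G$).
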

(The proof is in Appendix~\ref{AppendixProofs}.)\\

\begin{theorem}
\label{thmGcompact}
If $G$ is complete then it is also compact with respect to $D_G$.
\end{theorem}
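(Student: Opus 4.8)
The plan is to read $D_G$ as the uniform (sup) distance between self-maps of the compact space $X$, and then to derive total boundedness of $G$ from an Arzelà–Ascoli argument, upgrading it to compactness via the completeness hypothesis. By the reformulation of $D_G$ recorded just above, for $g_1,g_2\in G$ we have
\[D_G(g_1,g_2)=\sup_{x\in X}D_X\bigl(g_1(x),g_2(x)\bigr),\]
which is exactly the uniform metric $d_\infty$ on the space $C(X,X)$ of continuous self-maps of $X$. Since every $g\in G$ is in particular a continuous map $X\to X$, the inclusion realizes $(G,D_G)$ isometrically as a subspace of $\bigl(C(X,X),d_\infty\bigr)$. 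Because a complete pseudo-metric space is compact precisely when it is totally bounded, and $G$ is complete by hypothesis, it suffices to show that $G$ is totally bounded.

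For this I would apply the Arzelà–Ascoli theorem to the family $\mathcal{F}=\{g:g\in G\}\subseteq C(X,X)$, using that $X$ is compact (Theorem~\ref{Xcomplete} together with the standing assumptions). Two hypotheses must be verified. First, equicontinuity: by Proposition~\ref{propgisometry} every $g\in G$ is an isometry of $(X,D_X)$, hence $1$-Lipschitz, so $\mathcal{F}$ is uniformly equicontinuous---given $\varepsilon>0$, the choice $\delta=\varepsilon$ works simultaneously for all members. Second, pointwise relative compactness: for each fixed $x\in X$ the orbit $\{g(x):g\in G\}$ is contained in the compact space $X$. Arzelà–Ascoli then yields that $\mathcal{F}$ is relatively compact in the uniform topology, and in particular totally bounded.

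Transporting this back is immediate from the isometry: a finite $\varepsilon$-net $\{g_1,\dots,g_n\}\subseteq G$ for $\mathcal{F}$ satisfies $D_G(g,g_i)=d_\infty(g,g_i)<\varepsilon$ for a suitable $i$ and every $g\in G$, so $(G,D_G)$ is totally bounded; being complete by assumption, $G$ is therefore compact. (Equivalently, one may argue sequentially: any sequence in $G$ has, by Arzelà–Ascoli, a uniformly convergent subsequence, which is then $D_G$-Cauchy and converges in $G$ by completeness, giving sequential and hence topological compactness.)

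The crux is conceptual rather than computational: recognizing that $D_G$ is a uniform metric on self-maps, so that Arzelà–Ascoli applies, and noting that its two hypotheses come essentially for free---equicontinuity from the fact that the elements of $G$ are isometries (Proposition~\ref{propgisometry}), and pointwise relative compactness from the compactness of the common domain/codomain $X$. The only input genuinely specific to this statement is the completeness of $G$, which cannot be dropped: without it Arzelà–Ascoli gives only relative compactness. An essentially identical route would instead run Arzelà–Ascoli on the pullback isometries $R_g\colon\varPhi\to\varPhi$ from Remark~\ref{remRgisometry} over the compact space $\varPhi$, since $D_G$ equals that uniform distance as well.
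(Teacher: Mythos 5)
Your proof is correct, but it takes a genuinely different route from the paper's. The paper argues sequentially and entirely on the $\varPhi$-side: given a sequence $(g_i)$ in $G$, it picks a finite $\varepsilon$-net $\{\varphi_1,\dots,\varphi_n\}$ of the compact space $\varPhi$, extracts (by the diagonal argument of Theorem~\ref{Xcomplete}) a subsequence $(g_{i_j})$ along which each $(\varphi_k\circ g_{i_j})$ converges in $\varPhi$, and then uses the triangle inequality to show $(g_{i_j})$ is $D_G$-Cauchy, so completeness gives convergence and hence sequential compactness. In effect, the paper proves by hand exactly the total-boundedness statement that you outsource to Arzel\`a--Ascoli, and it does so for the pullback action $\varphi\mapsto\varphi\circ g$ on $\varPhi$ --- that is, the paper's proof is essentially your closing alternative (the $R_g$ version), not your main argument. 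Your main argument works instead on the $X$-side and needs two inputs the paper's proof never uses: compactness of $X$ (available from the standing assumptions via Theorem~\ref{Xcomplete}) and the fact that every $g\in G$ is a $D_X$-isometry (Proposition~\ref{propgisometry}). What your route buys is brevity and conceptual transparency: equicontinuity is free from the isometry property and pointwise relative compactness is free from compactness of $X$. What it costs is a citation subtlety: Arzel\`a--Ascoli is classically stated for metric (Hausdorff) targets, whereas here $(X,D_X)$ and $C(X,X)$ are only pseudo-metric and limits are non-unique, so you should either invoke the pseudo-metric/uniform-space version explicitly or note that the one conclusion you need, total boundedness, has a short direct proof: cover $X$ by finitely many $D_X$-balls of radius $\varepsilon/4$ centered at $x_1,\dots,x_n$; the image of the evaluation map $g\mapsto(g(x_1),\dots,g(x_n))\in X^n$ is totally bounded, so finitely many $g_1,\dots,g_m$ approximate all evaluations to within $\varepsilon/4$, and $1$-Lipschitzness then converts this into a finite $\varepsilon$-net for $(G,D_G)$. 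With that caveat handled, your argument is complete.
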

(The proof is in Appendix~\ref{AppendixProofs}.)\\

From now on we will suppose that $G$ is complete (and hence compact) with respect to the topology induced by $D_G$.

\subsubsection{The natural pseudo-distance $d_G$}
\label{d_G}

We define the natural pseudo-distance $d_G$ on the space $\varPhi$ \cite{FrJa16}. The natural pseudo-distance $d_G$ represents the ground truth in our model. It is based on comparing functions, and vanishes for pairs of functions that are equivalent with respect to the action of our group of homeomorphisms $G$, which expresses the equivalences between data.

\begin{definition}
The pseudo-distance $d_G: \varPhi \times \varPhi \rightarrow \mathbb{R}$ is defined by setting
\begin{equation}
d_G(\varphi_1, \varphi_2)=\inf_{g \in G} D_\varPhi(\varphi_1,\varphi_2\circ g).
\end{equation}
It is called the \textit{natural pseudo-distance} associated with the group $G$ acting on $\varPhi$.
\end{definition}


If $G=\{\mathrm{Id}: x \mapsto x \}$, then $d_G$ equals the sup-norm distance $D_\varPhi$ on $\varPhi$.
If $G_1$ and $G_2$ are subgroups of $\Homeo_\varPhi(X)$ and $G_1 \subseteq G_2 $, then the definition of $d_G$ implies that 
\begin{equation}
d_{\Homeo_\varPhi(X)}(\varphi_1, \varphi_2) \le d_{G_2}(\varphi_1, \varphi_2)\le d_{G_1}(\varphi_1, \varphi_2) \le D_\varPhi(\varphi_1, \varphi_2)
\end{equation}
for every $\varphi_1, \ \varphi_2 \in \varPhi$. \\

Though $d_G$ represents the ground truth for data similarity in our model, unfortunately it is difficult to compute. This is also a consequence of the fact that we can easily find subgroups $G$ of $Homeo(X)$ that cannot be approximated with arbitrary precision by smaller finite subgroups of $G$ (e.g., when $G$ is the group of rigid motions of $X = \R^3$).

In the following sections, we show how $d_G$ can be approximated with arbitrary precision by means of a dual approach based on group equivariant non-expansive operators (GENEOs) and persistent homology.

\subsubsection{A remark on the use of homeomorphisms}
\label{remd_G}


The reader could criticize the choice of grounding our approach on the concept of homeomorphism. After all, most of the objects that are considered for purposes of shape comparison ``are not homeomorphic''. Therefore, the definition of natural pseudo-distance could seem not to be sufficiently flexible, since it does not allow to compare non-homeomorphic objects. Though, it is important to note that the space $X$ we use in our model does \emph{not} represent the objects, but the space where one takes measurements \emph{about} the objects. As such, $X$ is unique. For example, two images are considered as functions from the real plane $X$ to the real numbers, independently of the topological properties of the 3D objects represented in the images. If we make two CAT scans, the topological space $X$ is always given by an helix turning many times around a body, and no requirement is made about the topology of such a body. In other words, the topological space $X$ is determined only by the measuring instrument and not by the single object instances. 

\subsection{Group Equivariant Non-Expansive Operators}
\label{sec:operators}

Under the assumptions made in the previous sections, the pair $(\varPhi,G)$ is called a \emph{perception pair}.

Let us now assume that two perception pairs $(\varPhi,G)$, $(\varPsi,H)$ are given together with a fixed homomorphism $T:G\to H$. Each function $F:\varPhi\to\varPsi$ such that $F(\varphi\circ g)=F(\varphi)\circ T(g)$ for every $\varphi\in\varPhi,g\in G$ is said to be a \emph{perception map} from $(\varPhi,G)$ to $(\varPsi,H)$ associated with the homomorphism $T$. More briefly, we will also say that $F$ is a \emph{group equivariant operator}. If $T$ is equal to the identity homomorphism $I: G\longrightarrow G$, we can say that $F$ is a $G$-map. We observe that the functions in $\varPhi$ and the functions in $\varPsi$ are defined on spaces that are generally different from each other.

\begin{remark}
\label{remark:ppasacategory}
Each perception pair $(\varPhi,G)$ can be seen as a category, whose objects are the functions in $\varPhi$ and the morphisms between two functions $\varphi_1,\varphi_2\in\varPhi$ are the elements $g\in G$ such that $\varphi_2=\varphi_1\circ g$. As usual, if $\varphi_2=\varphi_1\circ g$ and $\varphi'_2=\varphi'_1\circ g$ we wish to distinguish $g$ as a morphism between $\varphi_1$ and $\varphi_2$ from $g$ as a morphism between $\varphi'_1$ and $\varphi'_2$, so we make different copies $g_{(\varphi_1,\varphi_2)}$, $g_{(\varphi'_1,\varphi'_2)}$ of the homeomorphism $g$ by labelling it. As natural, $g'_{(\varphi_2,\varphi_3)}\circ g_{(\varphi_1,\varphi_2)}=(g\circ g')_{(\varphi_1,\varphi_3)}$. A precise formalization of this procedure can be done in terms of slice categories. For more details we refer the reader to Appendix~\ref{appendix-slice}.

When two perception pairs $(\varPhi,G)$, $(\varPsi,H)$ are considered as categories and a homomorphism $T:G\to H$ is fixed, each perception map $F$ from $(\varPhi,G)$ to $(\varPsi,H)$ is naturally associated with a functor between the two categories, taking each function $\varphi\in\varPhi$ to $F(\varphi)\in \varPsi$ and each morphism $g_{(\varphi_1,\varphi_2)}\in G$ to the morphism $T(g)_{\left(F(\varphi_1),F(\varphi_2)\right)}\in H$.
\end{remark}

%

\begin{definition}
Assume that $(\varPhi,G)$,$(\varPsi,H)$ are two perception pairs and that a homomorphism $T:G\to H$ has been fixed. If $F$ is a perception map from $(\varPhi,G)$ to $(\varPsi,H)$ with respect to $T$ and $F$ is non-expansive (i.e., $D_\varPsi\left(F(\varphi_1),F(\varphi_2)\right) \le D_\varPhi\left(\varphi_1,\varphi_2\right)$ for every $\varphi_1,\varphi_2\in\varPhi$), then $F$ is called a \emph{Group Equivariant Non-Expansive Operator (GENEO) associated with $T:G\to H$}.
\end{definition}

\begin{example}
\label{basicex}
As a reference for the reader, we give the following basic example of GENEO. Let $\varPhi$ be the set containing all $1$-Lipschitz functions from $X=S^2=\{(x,y,z)\in\R^3:x^2+y^2+z^2=1\}$ to $[0,1]$, and $G$ be the group of all rotations of $S^2$ around the $z$-axis. Let $\varPsi$ be the set containing all $1$-Lipschitz functions from $Y=S^1=\{(x,y)\in\R^2:x^2+y^2=1\}$ to $[0,1]$, and $H$ be the group of all rotations of $S^1$.
We observe that $(\varPhi,G)$ and $(\varPsi,H)$ are two perception pairs. Now, let us consider the map $F:\varPhi\to\varPsi$ taking each function $\varphi\in \varPhi$ to the function $\psi\in \varPsi$ defined by setting $\psi(\theta):=\frac{1}{\pi}\int_{-\pi/2}^{\pi/2} \varphi(\theta,\alpha)\ d\alpha$ (with $\theta,\alpha$ polar coordinates), and the homomorphism $T$ taking the rotation of $S^2$ of $\alpha$ radians around the $z$-axis positively oriented to the counter-clock rotation of $\alpha$ radians of $S^1$. We can easily check that $F$ is a perception map and a GENEO from $(\varPhi,G)$ to $(\varPsi,H)$, associated with the homomorphism $T$. In this example $F$ and $T$ are surjective, but an example where $F$ and $T$ are not surjective can be easily found, e.g. by restricting $\varPhi$ to the singleton $\bar\varPhi$ containing only the null function and $G$ to the trivial group $\bar G$ containing only the identical homomorphism.
\end{example}


We can study how GENEOs act on the natural pseudo-distances:

\begin{proposition}
\label{thmContraction}
If $F$ is a GENEO from $(\varPhi, G)$ to $(\varPsi, H)$ associated with $T:G\to H$, then it is a contraction with respect to the natural pseudo-distances $d_G$, $d_H$.
\end{proposition}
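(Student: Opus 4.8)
The plan is to prove that $F$ contracts the natural pseudo-distances, i.e. that
$$
d_H\bigl(F(\varphi_1),F(\varphi_2)\bigr)\le d_G(\varphi_1,\varphi_2)
$$
for every $\varphi_1,\varphi_2\in\varPhi$. Unwinding the definitions, the left-hand side is $\inf_{h\in H}D_\varPsi\bigl(F(\varphi_1),F(\varphi_2)\circ h\bigr)$ and the right-hand side is $\inf_{g\in G}D_\varPhi(\varphi_1,\varphi_2\circ g)$. The core idea is to use the homomorphism $T$ to transport a minimizing (or near-minimizing) element of $G$ into $H$, and then to control the cost of that transport using equivariance and non-expansiveness.

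First I would fix an arbitrary $g\in G$ and set $h:=T(g)\in H$. Since $T(g)$ is a particular element of $H$, the infimum defining $d_H$ is bounded above by the value at this element:
$$
d_H\bigl(F(\varphi_1),F(\varphi_2)\bigr)\le D_\varPsi\bigl(F(\varphi_1),F(\varphi_2)\circ T(g)\bigr).
$$
Next I would invoke the equivariance relation $F(\varphi_2)\circ T(g)=F(\varphi_2\circ g)$ to rewrite the second argument, obtaining
$$
d_H\bigl(F(\varphi_1),F(\varphi_2)\bigr)\le D_\varPsi\bigl(F(\varphi_1),F(\varphi_2\circ g)\bigr).
$$
Finally, since $\varphi_1,\varphi_2\circ g\in\varPhi$ and $F$ is non-expansive, the right-hand side is at most $D_\varPhi(\varphi_1,\varphi_2\circ g)$, which yields $d_H\bigl(F(\varphi_1),F(\varphi_2)\bigr)\le D_\varPhi(\varphi_1,\varphi_2\circ g)$.

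To conclude, I would take the infimum over all $g\in G$ on the right, which gives exactly $d_G(\varphi_1,\varphi_2)$ and completes the argument. I do not expect any genuine obstacle here; the proof is a direct chaining of the two defining properties of a GENEO. The one point worth flagging is that $T$ need not be surjective, so the image $T(G)$ may be a proper subset of $H$; this causes no difficulty because we only need an \emph{upper} bound on the infimum over $H$, and each $T(g)$ is a legitimate competitor in that infimum. Correspondingly, replacing the infimum over $H$ by the smaller infimum over $T(G)$ can only increase the bound, so the inequality survives regardless of surjectivity.
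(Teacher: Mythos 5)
Your proposal is correct and follows essentially the same argument as the paper: bound the infimum over $H$ by the values at elements $T(g)$, rewrite $F(\varphi_2)\circ T(g)$ as $F(\varphi_2\circ g)$ via equivariance, apply non-expansiveness, and pass to the infimum over $g\in G$. The paper simply chains these steps as a single string of inequalities between infima, whereas you fix an arbitrary $g$ first and take the infimum at the end; your explicit remark about $T$ not needing to be surjective is exactly the content of the paper's step restricting the infimum from $H$ to $T(G)$.
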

(The proof is in Appendix~\ref{AppendixProofs}.)\\

\subsubsection{Pseudo-metrics on $\GENEO\left((\varPhi,G),(\varPsi,H)\right)$}
\label{psmGENEO}

Let us denote by $\mathrm{GENEO}\left((\varPhi,G),(\varPsi,H)\right)$ the set of all GENEOs between two perception pairs $(\varPhi,G)$, $(\varPsi,H)$ associated with $T:G\to H$. We can endow this set with the following pseudo-distances $D_\mathrm{GENEO}$,
$D_\mathrm{GENEO,H}$.

\begin{definition}
If $F_1,F_2\in\mathrm{GENEO}\left((\varPhi,G),(\varPsi,H)\right)$, we set
\begin{align}
D_\mathrm{GENEO}\left(F_1,F_2\right)&:=\sup_{\varphi\in \varPhi}D_\varPsi\left(F_1(\varphi),F_2(\varphi)\right)\nonumber\\
D_\mathrm{GENEO,H}\left(F_1,F_2\right)&:=\sup_{\varphi\in \varPhi}d_H\left(F_1(\varphi),F_2(\varphi)\right).
\end{align}
\end{definition}

The next result can be easily proved by applying the inequality $d_H\le D_\Psi$ (see Theorem~\ref{t12}) and recalling that the supremum of a family of bounded pseudo-metrics is still a pseudo-metric.

\begin{proposition}
$D_\mathrm{GENEO}$ and $D_\mathrm{GENEO,H}$ are pseudo-metrics on $\mathrm{GENEO}\left((\varPhi,G),(\varPsi,H)\right)$. Moreover,  $D_\mathrm{GENEO,H}\le D_\mathrm{GENEO}$.
\end{proposition}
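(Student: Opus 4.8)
The plan is to reduce the whole statement to one elementary general principle: the supremum of a uniformly bounded family of pseudo-metrics on a fixed set is again a pseudo-metric. Concretely, for each fixed $\varphi\in\varPhi$ I would first regard the two ``evaluated'' expressions $\delta_\varphi(F_1,F_2):=D_\varPsi(F_1(\varphi),F_2(\varphi))$ and $\delta^H_\varphi(F_1,F_2):=d_H(F_1(\varphi),F_2(\varphi))$ as functions on $\GENEO((\varPhi,G),(\varPsi,H))\times\GENEO((\varPhi,G),(\varPsi,H))$, and show that each is a pseudo-metric. Then $D_\mathrm{GENEO}=\sup_{\varphi}\delta_\varphi$ and $D_\mathrm{GENEO,H}=\sup_\varphi\delta^H_\varphi$, so I would conclude by invoking the sup-of-pseudo-metrics principle.

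For the first step I would note that $\delta_\varphi$ is just the pullback of the pseudo-metric $D_\varPsi$ along the evaluation map $F\mapsto F(\varphi)$ from $\GENEO((\varPhi,G),(\varPsi,H))$ to $\varPsi$. Pulling a pseudo-metric back along an arbitrary map always yields a pseudo-metric: non-negativity, symmetry and vanishing on the diagonal are immediate, and the triangle inequality $\delta_\varphi(F_1,F_3)\le\delta_\varphi(F_1,F_2)+\delta_\varphi(F_2,F_3)$ is inherited verbatim from that of $D_\varPsi$ applied to the three points $F_1(\varphi),F_2(\varphi),F_3(\varphi)$ of $\varPsi$. Note that even though $D_\varPsi$ is a genuine metric, $\delta_\varphi$ is in general only a pseudo-metric, since two distinct operators may agree on $\varphi$. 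The identical argument with $d_H$ in place of $D_\varPsi$ shows $\delta^H_\varphi$ is a pseudo-metric; here $d_H$ is itself only a pseudo-metric, which poses no additional difficulty.

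Next I would check uniform boundedness, so that the suprema are finite. Since $\varPsi$ is assumed compact it is bounded, so there is $L\ge 0$ with $\|\psi\|_\infty\le L$ for every $\psi\in\varPsi$; hence $\delta_\varphi(F_1,F_2)\le 2L$ and, using $d_H\le D_\varPsi$, also $\delta^H_\varphi(F_1,F_2)\le 2L$, uniformly in $\varphi$. With uniform boundedness in hand the supremum principle applies: non-negativity, symmetry and vanishing on the diagonal pass to the supremum trivially, while for the triangle inequality I would fix $\varphi$, bound $\delta_\varphi(F_1,F_3)\le\delta_\varphi(F_1,F_2)+\delta_\varphi(F_2,F_3)\le D_\mathrm{GENEO}(F_1,F_2)+D_\mathrm{GENEO}(F_2,F_3)$, and then take the supremum over $\varphi$ on the left-hand side. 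This shows $D_\mathrm{GENEO}$ is a pseudo-metric, and the same reasoning gives $D_\mathrm{GENEO,H}$.

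Finally, the comparison $D_\mathrm{GENEO,H}\le D_\mathrm{GENEO}$ is immediate termwise: by Theorem~\ref{t12} we have $d_H(F_1(\varphi),F_2(\varphi))\le D_\varPsi(F_1(\varphi),F_2(\varphi))$ for every $\varphi$, i.e. $\delta^H_\varphi\le\delta_\varphi$, and taking suprema over $\varphi\in\varPhi$ preserves the inequality. There is no genuine obstacle here; the only point requiring a moment's care is the uniform boundedness that makes the suprema finite (so the triangle inequality does not degenerate into $\infty\le\infty$), which is precisely where the compactness---and hence boundedness---of $\varPsi$ enters.
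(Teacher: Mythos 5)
Your proof is correct and follows exactly the route the paper indicates: the paper dispatches this proposition by invoking the inequality $d_H\le D_\varPsi$ from Theorem~\ref{t12} together with the fact that the supremum of a family of bounded pseudo-metrics is again a pseudo-metric, which is precisely your sup-of-pullbacks argument. Your write-up merely makes explicit the details the paper leaves implicit (the pullback step and the uniform bound $2L$ coming from the compactness, hence boundedness, of $\varPsi$).
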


It would be easy to check that as a matter of fact $D_\mathrm{GENEO}$ is a metric.

This simple statement holds:
\begin{proposition}
\label{propzerof}
For every $F\in \mathrm{GENEO}\left((\varPhi,G),(\varPsi,H)\right)$ and every $\varphi \in \varPhi$:
$\|F(\varphi)\|_{\infty}\le \|\varphi\|_{\infty}+ \|F(\textbf{0})\|_{\infty}$, where $\textbf{0}$ denotes the function taking the value 0 everywhere.
\end{proposition}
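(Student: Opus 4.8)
The plan is to obtain the inequality as an immediate consequence of the triangle inequality for the sup-norm together with the non-expansiveness of $F$; neither the equivariance of $F$ nor any of the topological structure on the perception pairs will play a role. Throughout I would keep in mind that $D_\varPhi$ and $D_\varPsi$ are, by definition, the $\|\cdot\|_{\infty}$ distances on $\varPhi$ and $\varPsi$ respectively, and that the statement tacitly assumes $\mathbf{0}\in\varPhi$, so that $F(\mathbf{0})$ is well defined; I would flag this assumption at the very start.

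First I would write $F(\varphi)$ as $\left(F(\varphi)-F(\mathbf{0})\right)+F(\mathbf{0})$ and apply the triangle inequality for $\|\cdot\|_{\infty}$ on the space of bounded functions containing $\varPsi$, obtaining
\[
\|F(\varphi)\|_{\infty}\le \|F(\varphi)-F(\mathbf{0})\|_{\infty}+\|F(\mathbf{0})\|_{\infty}.
\]
Then I would bound the first summand using the definition of GENEO with the choice $\varphi_1=\varphi$, $\varphi_2=\mathbf{0}$: since $F$ is non-expansive,
\[
\|F(\varphi)-F(\mathbf{0})\|_{\infty}=D_\varPsi\!\left(F(\varphi),F(\mathbf{0})\right)\le D_\varPhi(\varphi,\mathbf{0})=\|\varphi\|_{\infty}.
\]
Substituting this estimate into the previous display gives exactly $\|F(\varphi)\|_{\infty}\le\|\varphi\|_{\infty}+\|F(\mathbf{0})\|_{\infty}$.

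I do not expect any genuine obstacle here: the result is a one-line corollary of non-expansiveness and the argument is purely metric. The only point deserving a moment's attention is the implicit hypothesis that the null function belongs to $\varPhi$. If one prefers not to assume this, the identical argument run with $\mathbf{0}$ replaced by an arbitrary fixed reference function $\varphi_0\in\varPhi$ yields the analogous bound $\|F(\varphi)\|_{\infty}\le\|\varphi-\varphi_0\|_{\infty}+\|F(\varphi_0)\|_{\infty}$, which specialises to the stated inequality when $\varphi_0=\mathbf{0}$.
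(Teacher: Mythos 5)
Your proof is correct and is essentially identical to the paper's own: both decompose $F(\varphi)$ as $\bigl(F(\varphi)-F(\mathbf{0})\bigr)+F(\mathbf{0})$, apply the triangle inequality for $\|\cdot\|_{\infty}$, and bound the first term by non-expansivity of $F$ applied to the pair $(\varphi,\mathbf{0})$. Your explicit flagging of the tacit hypothesis $\mathbf{0}\in\varPhi$ (and the remark that any reference function $\varphi_0\in\varPhi$ would do) is a reasonable observation, but it does not change the argument, which matches the paper's step for step.
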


(The proof is in Appendix~\ref{AppendixProofs}.)


\section{On the compactness and convexity of the space of GENEOs}
\label{sec:comp_conv}

In this section we show that, if the function spaces are compact and convex, then the space of GENEOs is compact and convex too. This property has important consequences from the computational point of view, since it guarantees that the space of GENEOs can be approximated by a finite set and that new GENEOs can be obtained by convex combination of preexisting GENEOs.

Several results in this section and in Section~\ref{sec:newPH} mimic the corresponding results in~\cite{FrJa16}, where the particular case $(\varPhi,G)=(\varPsi,H)$, $T=\mathrm{Id}:G\to H$ is considered. Note that considering different function spaces and different groups of equivariance is fundamental, as it allows one to compose operators hierarchically, in the same fashion as computational units are linked in an artificial neural network.

For the sake of conciseness, in the following we will set $\mathcal{F}^{\mathrm{all}}:=\mathrm{GENEO}\left((\varPhi,G),(\varPsi,H)\right)$.
We recall that we are assuming $\varPhi$ and $\Psi$ compact with respect to $D_\varPhi$ and $D_\Psi$, respectively.

\subsection{The space of GENEOs is compact with respect to $D_{\GENEO}$}

\begin{theorem}\label{t17}
$\mathcal{F}^{\mathrm{all}}$ is compact with respect to $D_{\GENEO}$.
\end{theorem}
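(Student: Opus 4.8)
The plan is to establish sequential compactness of $\mathcal{F}^{\mathrm{all}}$ with respect to $D_{\GENEO}$, which is the uniform (sup) metric on maps $\varPhi\to\varPsi$; since the topology induced by $D_{\GENEO}$ is (pseudo-)metric --- indeed a metric, as the paper remarks --- compactness is equivalent to sequential compactness. The whole argument is an Arzel\`a--Ascoli phenomenon: every GENEO is non-expansive, hence $1$-Lipschitz, so $\mathcal{F}^{\mathrm{all}}$ is uniformly equicontinuous; the target $\varPsi$ is compact, so pointwise relative compactness is automatic. What remains is to extract a uniformly convergent subsequence from an arbitrary sequence $(F_n)\subseteq\mathcal{F}^{\mathrm{all}}$ and to verify that the limit is again a GENEO associated with $T$.

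First I would use that $\varPhi$, being compact, is separable, and fix a countable dense subset $\{\varphi_j\}_{j\in\mathbb{N}}\subseteq\varPhi$. For each fixed $j$ the sequence $\bigl(F_n(\varphi_j)\bigr)_n$ lies in the compact space $\varPsi$, so a diagonal argument yields a subsequence $(F_{n_k})_k$ for which $F_{n_k}(\varphi_j)$ converges in $\varPsi$ for every $j$. Next I would upgrade this pointwise convergence on the dense set to uniform convergence on all of $\varPhi$. Given $\varepsilon>0$, total boundedness of $\varPhi$ lets me cover it by finitely many balls of radius $\varepsilon$ with centres $\varphi_{j_1},\dots,\varphi_{j_r}$ drawn from the dense set. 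For $k,l$ large enough $D_\varPsi\bigl(F_{n_k}(\varphi_{j_i}),F_{n_l}(\varphi_{j_i})\bigr)<\varepsilon$ for all $i$; then for arbitrary $\varphi\in\varPhi$, choosing the nearest centre $\varphi_{j_i}$ and applying non-expansiveness twice gives
\begin{equation}
D_\varPsi\bigl(F_{n_k}(\varphi),F_{n_l}(\varphi)\bigr)\le 2\,D_\varPhi(\varphi,\varphi_{j_i})+D_\varPsi\bigl(F_{n_k}(\varphi_{j_i}),F_{n_l}(\varphi_{j_i})\bigr)<3\varepsilon,
\end{equation}
uniformly in $\varphi$. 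Hence $(F_{n_k})$ is uniformly Cauchy, and since $\varPsi$ is complete (being compact) the limit $F(\varphi):=\lim_k F_{n_k}(\varphi)$ exists in $\varPsi$ for every $\varphi$ and satisfies $D_{\GENEO}(F_{n_k},F)\to 0$.

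Finally I would check $F\in\mathcal{F}^{\mathrm{all}}$, i.e.\ that the defining conditions are closed under uniform limits. Non-expansiveness passes to the limit by continuity of $D_\varPsi$, since $D_\varPsi\bigl(F(\varphi_1),F(\varphi_2)\bigr)=\lim_k D_\varPsi\bigl(F_{n_k}(\varphi_1),F_{n_k}(\varphi_2)\bigr)\le D_\varPhi(\varphi_1,\varphi_2)$. For equivariance, note that precomposition with $T(g)$ is an isometry of $\varPsi$ by the argument of Remark~\ref{remRgisometry} applied to the pair $(\varPsi,H)$, so it commutes with the limit: $F(\varphi\circ g)=\lim_k F_{n_k}(\varphi\circ g)=\lim_k\bigl(F_{n_k}(\varphi)\circ T(g)\bigr)=F(\varphi)\circ T(g)$. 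Thus $F$ is a GENEO associated with $T$, proving sequential compactness.

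I expect the only genuinely delicate point to be the uniform upgrade step, where equicontinuity (supplied for free by non-expansiveness) must be combined with total boundedness of $\varPhi$ to convert convergence on a dense set into uniform convergence; the verification that equivariance and non-expansiveness are preserved in the limit is then routine, the key enabling fact being that $R_{T(g)}$ is an isometry, so that composition with $T(g)$ is continuous for $D_\varPsi$.
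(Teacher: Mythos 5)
Your proof is correct, and at its core it is the same Arzel\`a--Ascoli argument as the paper's: both reduce to sequential compactness, extract a subsequence by diagonalization over a countable dense subset of the compact space $\varPhi$, use a finite $\varepsilon$-net together with non-expansiveness to gain uniformity, and finally check that the limit is again a GENEO. Two organizational differences are worth noting, both in your favour. First, you prove the subsequence is uniformly Cauchy \emph{before} constructing the limit, so the limit operator exists on all of $\varPhi$ at once by completeness of the compact space $\varPsi$, and uniform convergence is automatic; the paper instead defines $\bar F$ pointwise on the dense set, extends it along approximating sequences, must verify that this extension is well defined (independent of the chosen sequence), and only then proves uniform convergence by a separate $\varepsilon$-net argument. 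Your ordering removes the well-definedness step entirely. Second, your equivariance check uses a different key fact: since pointwise convergence holds at the point $\varphi\circ g\in\varPhi$ itself and precomposition by $T(g)$ is a $D_\varPsi$-isometry (Remark~\ref{remRgisometry} applied to the pair $(\varPsi,H)$), you may pass to the limit directly in the identity $F_{n_k}(\varphi\circ g)=F_{n_k}(\varphi)\circ T(g)$. The paper instead approximates $\varphi\circ g$ by $\varphi_{j_r}\circ g$ with $\varphi_{j_r}$ in the dense set, which forces it to invoke the continuity of the right action of $G$ on $\varPhi$ (Theorem~\ref{thmRgcontinuous}) together with the continuity of $\bar F$ in a double-limit computation. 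Both routes are sound; yours is shorter and has fewer dependencies (no appeal to Theorem~\ref{thmRgcontinuous}), while the paper's pays for its explicit dense-set construction with the extra well-definedness and continuity bookkeeping.
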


(The proof is in Appendix~\ref{AppendixProofs}.)\\

\subsection{The set of GENEOs is convex}
Let $F_1, F_2, \dots , F_n$ be GENEOs from $(\varPhi,G)$ to $(\varPsi,H)$ associated with the homomorphism $T$.
Let $(a_1, a_2, \dots , a_n) \in \mathbb{R}^n$ with $\sum_{i = 1}^n|a_i|\leq 1$. Consider the function

\begin{equation}
F_{\Sigma}(\varphi) := \sum_{i = 1}^n a_i F_i(\varphi)
\end{equation}

from $\varPhi$ to the set $C^0(Y,\mathbb{R})$ of the continuous functions from $Y$ to $\mathbb{R}$, where $Y$ is the domain of the functions in $\Psi$.

\begin{proposition}
\label{propconvex}
If $F_{\Sigma}(\varPhi)\subseteq \varPsi$, then $F_{\Sigma}$ is a GENEO from $(\varPhi, G)$ to $(\varPsi, H)$ with respect to $T$.
\end{proposition}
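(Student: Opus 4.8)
The plan is to verify directly the two defining properties of a GENEO for the map $F_\Sigma$, namely $G$-equivariance with respect to $T$ and non-expansiveness, since the standing hypothesis $F_\Sigma(\varPhi)\subseteq\varPsi$ already settles well-definedness: each $F_i(\varphi)$ lies in $\varPsi\subseteq C^0(Y,\R)$, so the finite linear combination $F_\Sigma(\varphi)$ is continuous, and by assumption it actually lands in $\varPsi$. Thus nothing further is needed to regard $F_\Sigma$ as a map $\varPhi\to\varPsi$.

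For equivariance, I would fix $\varphi\in\varPhi$ and $g\in G$, and use that each $F_i$ is a perception map associated with $T$ to write
\[
F_\Sigma(\varphi\circ g)=\sum_{i=1}^n a_i F_i(\varphi\circ g)=\sum_{i=1}^n a_i\bigl(F_i(\varphi)\circ T(g)\bigr).
\]
The key observation is that right-composition with the homeomorphism $T(g):Y\to Y$ is a linear operation on $C^0(Y,\R)$: evaluating at any $y\in Y$ gives $\bigl(\sum_i a_i F_i(\varphi)\bigr)(T(g)(y))=\sum_i a_i F_i(\varphi)(T(g)(y))$. Hence $\sum_i a_i\bigl(F_i(\varphi)\circ T(g)\bigr)=\bigl(\sum_i a_i F_i(\varphi)\bigr)\circ T(g)=F_\Sigma(\varphi)\circ T(g)$, which is exactly the required identity $F_\Sigma(\varphi\circ g)=F_\Sigma(\varphi)\circ T(g)$.

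For non-expansiveness, I would recall that $D_\varPsi$ is the sup-norm distance and, for $\varphi_1,\varphi_2\in\varPhi$, estimate
\[
D_\varPsi\bigl(F_\Sigma(\varphi_1),F_\Sigma(\varphi_2)\bigr)=\Bigl\|\sum_{i=1}^n a_i\bigl(F_i(\varphi_1)-F_i(\varphi_2)\bigr)\Bigr\|_\infty\le\sum_{i=1}^n|a_i|\,\bigl\|F_i(\varphi_1)-F_i(\varphi_2)\bigr\|_\infty
\]
by the triangle inequality and absolute homogeneity of the sup-norm. Since each $F_i$ is non-expansive, every term $\|F_i(\varphi_1)-F_i(\varphi_2)\|_\infty=D_\varPsi(F_i(\varphi_1),F_i(\varphi_2))$ is bounded by $D_\varPhi(\varphi_1,\varphi_2)$, so the right-hand side is at most $\bigl(\sum_{i=1}^n|a_i|\bigr)D_\varPhi(\varphi_1,\varphi_2)$. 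The hypothesis $\sum_{i=1}^n|a_i|\le 1$ then gives $D_\varPsi(F_\Sigma(\varphi_1),F_\Sigma(\varphi_2))\le D_\varPhi(\varphi_1,\varphi_2)$, which is non-expansiveness.

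I do not expect a genuine obstacle here: the whole argument reduces to the linearity of right-composition and the elementary bound on the sup-norm of a linear combination. The only place the specific numerical hypothesis enters is the constraint $\sum_i|a_i|\le 1$, which is precisely what preserves non-expansiveness; specialising to $a_i\ge 0$ with $\sum_i a_i=1$ recovers the convexity statement as a particular case. The sole piece of content not handled internally is the assumption $F_\Sigma(\varPhi)\subseteq\varPsi$, which must be imposed from outside, since a linear combination of admissible output functions need not itself belong to $\varPsi$.
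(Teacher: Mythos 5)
Your proof is correct and follows essentially the same route as the paper's: the same chain of equalities for $T$-equivariance (using linearity of right-composition with $T(g)$, which the paper performs implicitly in the step $\sum_i a_i (F_i(\varphi)\circ T(g)) = \sum_i (a_i F_i(\varphi))\circ T(g)$), and the same triangle-inequality estimate combined with $\sum_{i=1}^n |a_i|\le 1$ for non-expansiveness. Your explicit remarks on well-definedness and on where the hypothesis $F_\Sigma(\varPhi)\subseteq\varPsi$ enters are minor elaborations of what the paper leaves tacit, not a different argument.
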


(The proof is in Appendix~\ref{AppendixProofs}.)\\

\begin{theorem}
\label{thmconvex}
If $\Psi$ is convex, then the set of GENEOs from $(\varPhi, G)$ to $(\varPsi, H)$ with respect to $T$ is convex.
\end{theorem}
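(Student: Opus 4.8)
The plan is to deduce this directly from Proposition~\ref{propconvex}, specialized to the case $n=2$. Recall that convexity of the set $\mathcal{F}^{\mathrm{all}}=\mathrm{GENEO}\left((\varPhi,G),(\varPsi,H)\right)$ means that for any two GENEOs $F_1,F_2$ and any $\lambda\in[0,1]$, the operator $\lambda F_1+(1-\lambda)F_2$ is again a GENEO. So I would fix such $F_1,F_2$ and such a $\lambda$, and aim to show that this convex combination lies in $\mathcal{F}^{\mathrm{all}}$.

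Concretely, I would set $n=2$, $a_1=\lambda$, $a_2=1-\lambda$, and form $F_{\Sigma}(\varphi)=a_1F_1(\varphi)+a_2F_2(\varphi)$ exactly as in the definition preceding Proposition~\ref{propconvex}. Two things must then be checked to invoke that proposition. First, the coefficient condition $\sum_{i=1}^2|a_i|\le 1$ holds trivially, since $\lambda,\,1-\lambda\ge 0$ give $|a_1|+|a_2|=\lambda+(1-\lambda)=1$. Second, and this is the only place where the hypothesis on $\varPsi$ enters, I must verify that $F_{\Sigma}(\varPhi)\subseteq\varPsi$. For any $\varphi\in\varPhi$ we have $F_1(\varphi),F_2(\varphi)\in\varPsi$, so $F_{\Sigma}(\varphi)=\lambda F_1(\varphi)+(1-\lambda)F_2(\varphi)$ is a convex combination of two elements of $\varPsi$; convexity of $\varPsi$ then guarantees $F_{\Sigma}(\varphi)\in\varPsi$. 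With both hypotheses in hand, Proposition~\ref{propconvex} yields that $F_{\Sigma}$ is a GENEO from $(\varPhi,G)$ to $(\varPsi,H)$ with respect to $T$, which is precisely the desired conclusion.

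The substantive content—that an $\ell^1$-bounded (in particular, a convex) combination of GENEOs preserves both the equivariance identity $F(\varphi\circ g)=F(\varphi)\circ T(g)$ and the non-expansiveness inequality—has already been absorbed into Proposition~\ref{propconvex}, so no genuine obstacle remains at this level. The only point worth stating explicitly is why the image condition $F_{\Sigma}(\varPhi)\subseteq\varPsi$ cannot be taken for granted: a priori $F_{\Sigma}$ is only known to map $\varPhi$ into $C^0(Y,\mathbb{R})$, and it is exactly the convexity of $\varPsi$ that upgrades this to a map landing in $\varPsi$. Thus the theorem is a clean corollary of Proposition~\ref{propconvex}, with convexity of $\varPsi$ supplying the one missing hypothesis.
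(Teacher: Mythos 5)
Your proof is correct and follows exactly the same route as the paper's: apply Proposition~\ref{propconvex} with $n=2$, $a_1=\lambda$, $a_2=1-\lambda$, using convexity of $\varPsi$ to secure the image condition $F_{\Sigma}(\varPhi)\subseteq\varPsi$. You merely spell out the coefficient check $|a_1|+|a_2|=1$ and the role of convexity more explicitly than the paper does.
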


(The proof is in Appendix~\ref{AppendixProofs}.)

\subsection{GENEOs as agents in our model}
In our model the agents are represented by GENEOs. Indeed, each agent can be seen as a black box that receives and transforms data.
If a nonempty subset $\mathcal{F}$ of $\mathrm{GENEO}\left((\varPhi,G),(\varPsi,H)\right)$ is fixed, a simple pseudo-distance
$D_{\mathcal{F},\varPhi}(\varphi_1,\varphi_2)$ to compare two admissible functions $\varphi_1,\varphi_2\in\varPhi$ can be defined by setting
$D_{\mathcal{F},\varPhi}(\varphi_1,\varphi_2):=\sup_{F\in \mathcal{F}}\|F(\varphi_1)-F(\varphi_2)\|_\infty$. This definition expresses our assumption that the comparison of data strongly depends on the choice of the agents. However, we note that the computation of  $D_{\mathcal{F},\varPhi}(\varphi_1,\varphi_2)$ for every pair $(\varphi_1,\varphi_2)$ of admissible functions is computationally expensive. In the next section, we will see how persistent homology allows us to replace $D_{\mathcal{F},\varPhi}$ with a pseudo-metric $\mathcal{D}^{\mathcal{F},k}_{\mathrm{match}}$ that is quicker to compute, while still being stable and strongly invariant.
\section{A strongly group-invariant pseudo-metric induced by Persistent Homology}
\label{sec:newPH}

In this section, we show how Persistent Homology supports the definition of a strongly group invariant pseudo-metric on $\varPhi$, for which we prove some theoretical results.

We begin by recalling the stability of the classical pseudo-distance $d_{\mathrm{match}}$ between persistent Betti numbers functions (BPNs)  (cf. Definition \ref{def:matching}) with respect to the pseudo-metrics $D_\varPhi$ and $d_{\Homeo(X)}$. We assume the finiteness of PBNs \footnote{Though in our setting, the space $X$ is assumed to be compact, PBNs are not necessarily finite. For example, let us consider the set $X = \{0\} \cup \{ \frac{1}{n}, \ \text{with} \ n \in \mathbb{N} \}$ and $\varPhi = \{ \mathrm{Id}: X \longrightarrow X\}$. Even if $X$ is compact, every sublevel set $X_u= \{x \in X: x \le u \}$ with $u>0$ has infinite connected components, and hence the $0$th persistent Betti numbers function takes infinite value at every point $(u,v)$ with $0< u<v$.

We add the assumption on the finiteness of PBNs (i.e., the assumption that the persistent Betti numbers function of every $\varphi\in\varPhi$ takes a finite value at each point $(u,v)\in \Delta^{+}$) to get stability and discard pathological cases (for example the case that the set $\varPhi$ of admissible functions is the set of all maps from $X$ to $\R$).

Since the PBNs of the pseudo-metric space $(X,D_X)$ coincide with the persistent Betti numbers functions of its Kolmogorov quotient $\bar{X}$, the finiteness of the persistent Betti numbers functions can be obtained when $\bar{X}$ is finitely triangulable (cf. \cite{CeDFFeal13}).}. Then, the stability of $d_{match}$ with respect to $D_\varPhi$ easily follows from the stability theorem of the interleaving distance and the isometry theorem (cf. \cite{Ou15}).

\begin{theorem}
\label{t12}
If k is a natural number, $G_1\subseteq G_2\subseteq \Homeo_\varPhi(X)$ and $\varphi_1,\varphi_2 \in \varPhi$, then
\begin{equation}d_{\mathrm{match}} (r_k(\varphi_1),r_k(\varphi_2)) \le d_{\Homeo(X)} (\varphi_1,\varphi_2)
\le d_{G_2} (\varphi_1,\varphi_2)\le d_{G_1} (\varphi_1,\varphi_2)\le D_\varPhi (\varphi_1,\varphi_2).
\end{equation}
\end{theorem}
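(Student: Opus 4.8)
The plan is to reduce the entire chain to two ingredients --- the classical stability of the bottleneck distance and the homeomorphism-invariance of persistent homology --- after which the remaining inequalities become pure monotonicity of an infimum.

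First I would establish the leftmost and genuinely substantive inequality, $d_{\mathrm{match}}(r_k(\varphi_1),r_k(\varphi_2)) \le d_{\Homeo(X)}(\varphi_1,\varphi_2)$. The starting point is the stability estimate recalled immediately before the statement: under the standing finiteness assumption on the PBNs, one has $d_{\mathrm{match}}(r_k(\psi_1),r_k(\psi_2)) \le \|\psi_1-\psi_2\|_\infty$ for every pair of bounded functions $\psi_1,\psi_2$ (this is the stability theorem for the interleaving distance combined with the isometry theorem). The second ingredient is that persistent homology is blind to a change of coordinates: for every $g \in \Homeo(X)$ the sublevel sets satisfy $(\varphi_2 \circ g)^{-1}((-\infty,t]) = g^{-1}\big(\varphi_2^{-1}((-\infty,t])\big)$, so $g$ restricts to a homeomorphism identifying the two filtrations and commuting with all inclusions; by functoriality of \v{C}ech homology the induced maps are isomorphisms, whence $r_k(\varphi_2 \circ g) = r_k(\varphi_2)$ for every $k$. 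Combining the two, for each $g \in \Homeo(X)$ I can write
\begin{equation}
d_{\mathrm{match}}(r_k(\varphi_1),r_k(\varphi_2)) = d_{\mathrm{match}}(r_k(\varphi_1),r_k(\varphi_2 \circ g)) \le \|\varphi_1 - \varphi_2 \circ g\|_\infty = D_\varPhi(\varphi_1,\varphi_2 \circ g),
\end{equation}
and taking the infimum over $g \in \Homeo(X)$ gives precisely $d_{\mathrm{match}}(r_k(\varphi_1),r_k(\varphi_2)) \le d_{\Homeo(X)}(\varphi_1,\varphi_2)$.

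The remaining inequalities are elementary. Since $G_1 \subseteq G_2 \subseteq \Homeo_\varPhi(X) \subseteq \Homeo(X)$, the infimum defining $d_G(\varphi_1,\varphi_2)=\inf_{g \in G} D_\varPhi(\varphi_1,\varphi_2 \circ g)$ ranges over progressively larger sets as one moves leftward along the chain, hence is non-increasing, giving $d_{\Homeo(X)} \le d_{G_2} \le d_{G_1}$. For the last inequality, $G_1$ is a group and so contains the identity; choosing $g = \mathrm{Id}$ in the infimum yields $d_{G_1}(\varphi_1,\varphi_2) \le D_\varPhi(\varphi_1,\varphi_2)$. Concatenating the four estimates proves the theorem.

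I expect the only delicate point to be the invariance step $r_k(\varphi_2 \circ g) = r_k(\varphi_2)$: one must verify that $g^{-1}$ identifies the two sublevel-set filtrations \emph{as filtered spaces}, i.e. that it commutes with the inclusion maps, so that functoriality forces the persistent Betti numbers (and hence the persistence diagrams) to coincide. A secondary subtlety worth flagging is that stability is applied to $\varphi_2 \circ g$, which need not belong to $\varPhi$ when $g \in \Homeo(X) \setminus \Homeo_\varPhi(X)$; this is harmless, since both stability and the sup-norm expression $D_\varPhi$ extend verbatim to arbitrary bounded functions, and it is exactly what allows the larger group $\Homeo(X)$ rather than $\Homeo_\varPhi(X)$ to appear on the right-hand side.
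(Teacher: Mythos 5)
Your proposal is correct and takes essentially the same route as the paper: the paper obtains the first inequality from the stability of $d_{\mathrm{match}}$ with respect to $D_\varPhi$ together with the invariance of persistent Betti numbers under homeomorphisms (its Remark~\ref{r10}), deferring the details to \cite{CeDFFeal13}, and derives the remaining three inequalities directly from the definition of the natural pseudo-distance, exactly as you do. Your reconstruction of the first inequality (stability plus the filtration-identifying homeomorphism $g$, then an infimum over $g\in\Homeo(X)$) is precisely the standard argument the paper points to, including the correct observation that $\varphi_2\circ g$ need not lie in $\varPhi$ but that stability and the sup-norm bound apply to it regardless.
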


The proof of the first inequality $d_{\mathrm{match}} (r_k(\varphi_1),r_k(\varphi_2)) \le d_{\Homeo(X)}(\varphi_1,\varphi_2)$ in Theorem~\ref{t12} is based on the stability of $d_{match}$ with respect to $D_\varPhi$ and can be found in \cite{CeDFFeal13}. The other inequalities follow from the definition of the natural pseudo-distance.

\subsection[Strongly group invariant comparison]{Strongly group invariant comparison of filtering functions via persistent homology}
\label{sec:SGIC}

Let us consider a subset $\mathcal{F} \ne \emptyset$ of $\mathcal{F}^{\mathrm{all}}$. For every fixed $k$, we can consider the following pseudo-metric $\mathcal{D}^{\mathcal{F},k}_{\mathrm{match}}$ on $\varPhi$:
\begin{equation}
\mathcal{D}^{\mathcal{F},k}_{\mathrm{match}}(\varphi_1, \varphi_2):= \sup_{F \in \mathcal{F}} d_{\mathrm{match}}(r_k(F(\varphi_1)),r_k(F(\varphi_2)))
\end{equation}
for every $\varphi_1\varphi_2 \in \varPhi$, where $r_k(\varphi)$ denotes the $k$th persistent Betti numbers function with respect to the function $\varphi: X \rightarrow \mathbb{R}$.

In this work, we will say that a pseudo-metric $\hat{d}$ on $\varPhi$ is \textit{strongly G-invariant} if it is invariant under the action of $G$ with respect to each variable, that is, if $\hat{d}(\varphi_1, \varphi_2)=\hat{d}(\varphi_1 \circ g, \varphi_2)=\hat{d}(\varphi_1, \varphi_2 \circ g)=\hat{d}(\varphi_1 \circ g, \varphi_2 \circ g)$ for every $\varphi_1,\varphi_2 \in \varPhi$ and every $g \in G$.

\begin{remark}
\label{invdg}
It is easily seen that the natural pseudo-distance $d_G$ is strongly $G$-invariant.
\end{remark}

\begin{proposition}
\label{stronglyinv}
  $\mathcal{D}^{\mathcal{F},k}_{\mathrm{match}}$ is a strongly $G$-invariant pseudo-metric on $\varPhi$.
\end{proposition}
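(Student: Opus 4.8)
The plan is to establish the two assertions separately: first that $\mathcal{D}^{\mathcal{F},k}_{\mathrm{match}}$ is a genuine (finite) pseudo-metric, and then that it enjoys strong $G$-invariance. The whole argument rests on two ingredients already available: the equivariance identity $F(\varphi\circ g)=F(\varphi)\circ T(g)$ satisfied by every $F\in\mathcal{F}$, and the invariance of the persistent Betti numbers functions under precomposition with a homeomorphism of the domain.

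For the pseudo-metric claim, I would first fix $F\in\mathcal{F}$ and observe that the map $\varphi\mapsto r_k(F(\varphi))$ pulls back the pseudo-metric $d_{\mathrm{match}}$ on persistent Betti numbers functions to a pseudo-metric $d_F(\varphi_1,\varphi_2):=d_{\mathrm{match}}(r_k(F(\varphi_1)),r_k(F(\varphi_2)))$ on $\varPhi$; non-negativity, symmetry, vanishing on the diagonal, and the triangle inequality all descend immediately from the corresponding properties of $d_{\mathrm{match}}$. Each $d_F$ is uniformly bounded: by the stability inequality of Theorem~\ref{t12} applied to the perception pair $(\varPsi,H)$, together with the non-expansiveness of $F$, one has $d_F(\varphi_1,\varphi_2)\le D_\varPsi(F(\varphi_1),F(\varphi_2))\le D_\varPhi(\varphi_1,\varphi_2)$, and the right-hand side is bounded because $\varPhi$ is compact, hence bounded. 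Since $\mathcal{D}^{\mathcal{F},k}_{\mathrm{match}}=\sup_{F\in\mathcal{F}}d_F$ is the supremum of a family of bounded pseudo-metrics, it is itself a pseudo-metric, as already recalled in the excerpt.

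The core of the argument is strong $G$-invariance, and the key step is the following claim: for every $\psi\in\varPsi$ and every $h\in H$ one has $r_k(\psi\circ h)=r_k(\psi)$. Indeed, the sublevel sets satisfy $(\psi\circ h)^{-1}((-\infty,t])=h^{-1}(\psi^{-1}((-\infty,t]))$, and since $h$ is a homeomorphism its restrictions furnish homeomorphisms between the two filtrations that commute with the inclusion maps; applying the \v{C}ech homology functor yields isomorphisms of the persistent homology groups at every point $(u,v)$, whence the ranks coincide pointwise. Granting this, fix $g\in G$ and $F\in\mathcal{F}$. By equivariance $F(\varphi_1\circ g)=F(\varphi_1)\circ T(g)$ with $T(g)\in H$, so $r_k(F(\varphi_1\circ g))=r_k(F(\varphi_1))$ by the key step. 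Hence $d_F(\varphi_1\circ g,\varphi_2)=d_F(\varphi_1,\varphi_2)$ for every $F$, and taking the supremum over $\mathcal{F}$ gives $\mathcal{D}^{\mathcal{F},k}_{\mathrm{match}}(\varphi_1\circ g,\varphi_2)=\mathcal{D}^{\mathcal{F},k}_{\mathrm{match}}(\varphi_1,\varphi_2)$. The same computation applied to the second variable (using the symmetry of $d_{\mathrm{match}}$) and to both variables simultaneously yields the remaining equalities in the definition of strong $G$-invariance.

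I expect the main obstacle to be the justification of the key step $r_k(\psi\circ h)=r_k(\psi)$: one must ensure that $h$ induces a genuine isomorphism of persistent homology modules, i.e.\ that the squares relating the two sublevel-set filtrations commute strictly, so that equality of ranks holds pointwise rather than merely up to a vanishing bottleneck distance. Once this functoriality is in place, everything else reduces to the equivariance identity and to elementary manipulations of suprema.
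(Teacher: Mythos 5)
Your proposal is correct and takes essentially the same route as the paper's proof: bound each $d_{\mathrm{match}}(r_k(F(\varphi_1)),r_k(F(\varphi_2)))$ by $D_\varPsi\left(F(\varphi_1),F(\varphi_2)\right)\le D_\varPhi\left(\varphi_1,\varphi_2\right)$ using stability (Theorem~\ref{t12}) and non-expansivity, so that the supremum of a family of bounded pseudo-metrics is again a pseudo-metric, and then obtain strong $G$-invariance by combining the equivariance identity $F(\varphi\circ g)=F(\varphi)\circ T(g)$ with the invariance of persistent Betti numbers under homeomorphisms of the domain, concluding via symmetry. The ``key step'' you isolate and sketch (that $r_k(\psi\circ h)=r_k(\psi)$ for a homeomorphism $h$) is exactly the paper's Remark~\ref{r10}, which the paper simply invokes rather than reproves; your sublevel-set argument is the standard justification of that remark.
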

(The proof is in Appendix~\ref{AppendixProofs}.)\\

\subsection{Some theoretical results on the pseudo-metric $\mathcal{D}^{\mathcal{F},k}_{\mathrm{match}}$}
\label{sectionSTR}

At first we want to show that the pseudo-metric $\mathcal{D}^{\mathcal{F},k}_{\mathrm{match}}$ is stable with respect to both the natural pseudo-distance $d_G$ associated with the group $G$ and the distance $D_\varPhi$.

\begin{remark}
\label{r10}
Let $X$ and $Y$ be two homeomorphic spaces and  let $h:Y \rightarrow X$ be a homeomorphism. Then the persistent homology group with respect to the function $\varphi: X \rightarrow \mathbb{R}$ and the persistent homology group with respect to the function $\varphi \circ h: Y\rightarrow \mathbb{R}$ are isomorphic at each point $(u,v)$ in the domain.
Therefore we can say that the persistent homology groups and the persistent Betti numbers functions are invariant under the action of $\Homeo(X)$.
\end{remark}

\begin{theorem}
\label{t15}
If $\mathcal{F}$ is a non-empty subset of $\mathcal{F}^{\mathrm{all}}$, then
\begin{equation}\mathcal{D}^{\mathcal{F},k}_{\mathrm{match}} \le d_G \le D_\varPhi.
\end{equation}
\end{theorem}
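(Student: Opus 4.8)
The plan is to prove the double inequality $\mathcal{D}^{\mathcal{F},k}_{\mathrm{match}} \le d_G \le D_\varPhi$ by establishing the two inequalities separately, working from the definitions and leaning heavily on the already-established stability theorem (Theorem~\ref{t12}) and the equivariance property of GENEOs. The right-hand inequality $d_G \le D_\varPhi$ is already recorded in Theorem~\ref{t12} (it is the last inequality in that chain, taking $G_1 = G$), so the only real content is the left-hand inequality $\mathcal{D}^{\mathcal{F},k}_{\mathrm{match}}(\varphi_1,\varphi_2) \le d_G(\varphi_1,\varphi_2)$.

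For the left-hand inequality I would fix $\varphi_1,\varphi_2 \in \varPhi$ and an arbitrary $F \in \mathcal{F}$, and aim to bound $d_{\mathrm{match}}(r_k(F(\varphi_1)), r_k(F(\varphi_2)))$ by $d_G(\varphi_1,\varphi_2)$ uniformly in $F$; taking the supremum over $F \in \mathcal{F}$ then gives the claim. First, by the stability inequality $d_{\mathrm{match}}(r_k(\psi_1),r_k(\psi_2)) \le D_\varPsi(\psi_1,\psi_2)$ applied in the target perception pair $(\varPsi,H)$ to $\psi_i = F(\varphi_i)$, I get $d_{\mathrm{match}}(r_k(F(\varphi_1)), r_k(F(\varphi_2))) \le D_\varPsi(F(\varphi_1), F(\varphi_2))$. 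Next, for any $g \in G$ I use non-expansiveness of $F$ together with the equivariance identity $F(\varphi_2 \circ g) = F(\varphi_2)\circ T(g)$ and the fact that $R_{T(g)}$ is an isometry of $\varPsi$ (Remark~\ref{remRgisometry} applied in $(\varPsi,H)$): this yields
\begin{align}
D_\varPsi(F(\varphi_1),F(\varphi_2)) &\le D_\varPsi(F(\varphi_1), F(\varphi_2 \circ g)) + D_\varPsi(F(\varphi_2 \circ g), F(\varphi_2)) \nonumber
\end{align}
but a cleaner route is to bound $d_{\mathrm{match}}(r_k(F(\varphi_1)),r_k(F(\varphi_2)))$ directly using the $\Homeo(X)$-invariance of persistent Betti numbers (Remark~\ref{r10}) rather than the triangle inequality.

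Concretely, the sharp argument is: for every $g \in G$, the persistent Betti numbers function $r_k(F(\varphi_2))$ equals $r_k(F(\varphi_2) \circ T(g)) = r_k(F(\varphi_2 \circ g))$, since $T(g) \in H \subseteq \Homeo(Y)$ and persistent Betti numbers are invariant under homeomorphisms of the domain (Remark~\ref{r10}). Hence $d_{\mathrm{match}}(r_k(F(\varphi_1)), r_k(F(\varphi_2))) = d_{\mathrm{match}}(r_k(F(\varphi_1)), r_k(F(\varphi_2 \circ g)))$, which by stability is bounded by $D_\varPsi(F(\varphi_1), F(\varphi_2 \circ g))$, and by non-expansiveness of $F$ this is at most $D_\varPhi(\varphi_1, \varphi_2 \circ g)$. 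Since $g \in G$ was arbitrary, I may take the infimum over $g$ to obtain $d_{\mathrm{match}}(r_k(F(\varphi_1)), r_k(F(\varphi_2))) \le \inf_{g \in G} D_\varPhi(\varphi_1, \varphi_2 \circ g) = d_G(\varphi_1,\varphi_2)$. As this holds for every $F \in \mathcal{F}$, taking the supremum over $F$ gives $\mathcal{D}^{\mathcal{F},k}_{\mathrm{match}}(\varphi_1,\varphi_2) \le d_G(\varphi_1,\varphi_2)$, completing the proof.

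The main subtlety — and the step I would be most careful about — is the use of the homeomorphism-invariance of persistent Betti numbers to replace $r_k(F(\varphi_2))$ by $r_k(F(\varphi_2 \circ g))$: this is exactly where the group structure and the equivariance of $F$ interact, and it is what makes $\mathcal{D}^{\mathcal{F},k}_{\mathrm{match}}$ genuinely $G$-invariant (consistent with Proposition~\ref{stronglyinv}) rather than merely stable. One must verify that $T(g)$ is indeed a homeomorphism of the domain $Y$ of the functions in $\varPsi$ so that Remark~\ref{r10} applies, which holds because $H \subseteq \Homeo_\varPsi(Y)$. Everything else is a routine combination of stability, non-expansiveness, and passing to the infimum and supremum.
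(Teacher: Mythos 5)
Your proposal is correct and follows essentially the same route as the paper's own proof: the same chain $d_{\mathrm{match}}(r_k(F(\varphi_1)),r_k(F(\varphi_2))) = d_{\mathrm{match}}(r_k(F(\varphi_1)),r_k(F(\varphi_2)\circ T(g))) = d_{\mathrm{match}}(r_k(F(\varphi_1)),r_k(F(\varphi_2\circ g))) \le D_\varPsi(F(\varphi_1),F(\varphi_2\circ g)) \le D_\varPhi(\varphi_1,\varphi_2\circ g)$, justified by homeomorphism-invariance of persistent Betti numbers (Remark~\ref{r10}), equivariance, stability (Theorem~\ref{t12}), and non-expansiveness, followed by taking the infimum over $g\in G$ and the supremum over $F\in\mathcal{F}$. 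The abandoned triangle-inequality detour is harmless, and your final ``clean'' argument is exactly the paper's.
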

(The proof is in Appendix~\ref{AppendixProofs}.)\\

The definitions of the natural pseudo-distance $d_G$ and the pseudo-distance $\mathcal{D}^{\mathcal{F},k}_{\mathrm{match}}$ come from different theoretical concepts. The former is based on a variation approach involving the set of all homeomorphisms in $G$, while the latter refers only to a comparison of persistent homologies depending on a family of group equivariant non-expansive operators. Given those comments, the next result may appear unexpected.

\begin{theorem}
\label{thrEqualityDFall_dmatch}
Let us assume that $\varPhi=\Psi$, every function in $\varPhi$ is non-negative, the $k$-th Betti number of $X$ does not vanish, and $\varPhi$ contains each constant function $c$ for which a function $\varphi \in \varPhi $ exists such that $0\le c \le \|\varphi\|_{\infty}$.
Then $\mathcal{D}^{\mathcal{F}^{\mathrm{all}},k}_{\mathrm{match}}=d_G$.
\end{theorem}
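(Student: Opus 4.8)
The plan is to prove the two inequalities separately. The inequality $\mathcal{D}^{\mathcal{F}^{\mathrm{all}},k}_{\mathrm{match}}\le d_G$ is already available from Theorem~\ref{t15}, so the whole content lies in the reverse inequality $\mathcal{D}^{\mathcal{F}^{\mathrm{all}},k}_{\mathrm{match}}\ge d_G$. Since $\mathcal{D}^{\mathcal{F}^{\mathrm{all}},k}_{\mathrm{match}}(\varphi_1,\varphi_2)$ is a supremum over all $F\in\mathcal{F}^{\mathrm{all}}$, it suffices to exhibit, for each fixed pair $\varphi_1,\varphi_2\in\varPhi$, a \emph{single} GENEO $F$ for which $d_{\mathrm{match}}\bigl(r_k(F(\varphi_1)),r_k(F(\varphi_2))\bigr)$ already equals $d_G(\varphi_1,\varphi_2)$.

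The operator I would use is built from the natural pseudo-distance itself: fix $\varphi_2$ and let $F\colon\varPhi\to\varPhi$ send each $\varphi$ to the constant function of value $d_G(\varphi,\varphi_2)$. Checking that $F$ is a GENEO splits into three parts. For the codomain, I would use non-negativity: since $|a-b|\le\max\{a,b\}$ for $a,b\ge 0$ and right composition preserves the sup-norm (Remark~\ref{remRgisometry}), one gets $d_G(\varphi,\varphi_2)=\inf_{g}\|\varphi-\varphi_2\circ g\|_\infty\le\max\{\|\varphi\|_\infty,\|\varphi_2\|_\infty\}=\|\psi\|_\infty$ for a suitable $\psi\in\varPhi$; hence this value is a constant between $0$ and $\|\psi\|_\infty$, which by the standing hypothesis on constant functions lies in $\varPhi$. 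Equivariance is immediate from strong $G$-invariance of $d_G$ (Remark~\ref{invdg}), which gives $d_G(\varphi\circ g,\varphi_2)=d_G(\varphi,\varphi_2)$, together with the fact that composing a constant function with any map returns the same constant; thus $F(\varphi\circ g)=F(\varphi)=F(\varphi)\circ T(g)$ holds whatever $T$ is. Non-expansiveness follows from the triangle inequality for the pseudo-metric $d_G$ and the inequality $d_G\le D_\varPhi$ (Theorem~\ref{t12}): $\bigl|d_G(\varphi,\varphi_2)-d_G(\varphi',\varphi_2)\bigr|\le d_G(\varphi,\varphi')\le D_\varPhi(\varphi,\varphi')$.

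The remaining step is the persistence computation. Because the $k$-th Betti number of $X$ is nonzero, a constant function of value $c$ has empty sublevel sets for $t<c$ and equals $X$ for $t\ge c$, so in degree $k$ the whole $k$-th homology of $X$ is born at $t=c$ and never dies; consequently $r_k(c)$ is described by a positive number of cornerpoints, all located at $(c,\infty)$, that number being the same (the $k$-th Betti number of $X$) for every constant. Evaluating $F$ gives $F(\varphi_1)$ the constant $d_G(\varphi_1,\varphi_2)$ and $F(\varphi_2)$ the constant $0$. Since a point at infinity can be matched with finite cost only to another point at infinity, with cost equal to the difference of abscissas, the optimal bijection pairs these cornerpoints and yields $d_{\mathrm{match}}\bigl(r_k(F(\varphi_1)),r_k(F(\varphi_2))\bigr)=|d_G(\varphi_1,\varphi_2)-0|=d_G(\varphi_1,\varphi_2)$. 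Taking the supremum over $\mathcal{F}^{\mathrm{all}}$ gives $\mathcal{D}^{\mathcal{F}^{\mathrm{all}},k}_{\mathrm{match}}(\varphi_1,\varphi_2)\ge d_G(\varphi_1,\varphi_2)$, which combined with Theorem~\ref{t15} proves the equality.

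I expect the main obstacle to be the justification of this persistence computation rather than the verification of the GENEO axioms: one must argue precisely that the nonvanishing of the $k$-th Betti number produces exactly the cornerpoints at infinity claimed, and that the definition of $d^{*}$ at infinity forces the optimal matching to identify them abscissa-to-abscissa, since sending any point at infinity to the diagonal (or to a finite point) costs $+\infty$. The three GENEO conditions, by contrast, reduce to the elementary estimate on non-negative functions together with the triangle inequality and strong invariance of $d_G$.
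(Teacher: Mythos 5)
Your proposal is correct and follows essentially the same route as the paper's own proof: the same operator $F_{\varphi'}(\varphi):=d_G(\varphi,\varphi')$ constant function (with $\varphi'=\varphi_2$), the same three GENEO verifications via strong $G$-invariance of $d_G$ and its triangle inequality, and the same persistence-diagram computation placing all nontrivial cornerpoints at $(c,\infty)$ with multiplicity the $k$-th Betti number. If anything, you spell out more explicitly than the paper why the constant $d_G(\varphi,\varphi_2)$ actually lies in $\varPhi$ (via $|a-b|\le\max\{a,b\}$ for non-negative functions), a point the paper dispatches with ``our assumptions guarantee.''
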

(The proof is in Appendix~\ref{AppendixProofs}.)\\

We observe that if $\varPhi$ is bounded, the assumption that every function in $\varPhi$ is non-negative is not quite restrictive. Indeed, we can obtain it by adding a suitable constant value to every admissible function.


\subsection{Pseudo-metrics induced by persistent homology}
\label{PM}

Persistent homology can be seen as a topological method to build new and easily computable pseudo-metrics for the sets $\varPhi$, $G$ and $\mathcal{F}^{\mathrm{all}}$.
These new pseudo-metrics $\Delta_\varPhi$, $\Delta_G$, $\Delta_\mathrm{GENEO}$ can be used as proxies for
$d_G$ (and hence $D_\varPhi$), $D_G$, $D_\mathrm{GENEO}$, respectively:
\begin{itemize}
  \item If $\varphi_1,\varphi_2\in\varPhi$, we can set $\Delta_\varPhi(\varphi_1,\varphi_2):=d_{\mathrm{match}}(r_k(\varphi_1),r_k(\varphi_2))$.
The stability theorem for persistence diagrams (Theorem~\ref{t12}) can be reformulated as the inequalities $\Delta_\varPhi\le d_G\le D_\varPhi$.

  \item If $g_1,g_2\in G$, we can set $\Delta_G(g_1,g_2):=
  \sup_{\varphi \in \varPhi}d_{\mathrm{match}}(r_k(\varphi \circ g_1),r_k(\varphi \circ g_2))$.
  From Theorem~\ref{t12} the inequality $\Delta_G\le D_G$ follows.
  \item If $F_1,F_2\in \mathcal{F}^{\mathrm{all}}$, we can set
  $\Delta_\mathrm{GENEO}\left(F_1,F_2\right):=
  \sup_{\varphi \in \varPhi}d_{\mathrm{match}}(r_k(F_1(\varphi)),r_k(F_2(\varphi)))$.
  From Theorem~\ref{t12} the inequalities $\Delta_\mathrm{GENEO}\le D_\mathrm{GENEO,H}\le D_\mathrm{GENEO}$ follow.
\end{itemize}

In particular, $\Delta_\varPhi$ and a discretized version of the pseudo-metric $\Delta_\mathrm{GENEO}$ will be used in the experiments described in \Cref{sec:results}. We underline that the use of persistent homology is a key tool in our approach: it allows for a fast comparison between functions and between GENEOs. Without persistent homology, this comparison would be much more computationally expensive.

\subsection{Approximating ${\mathcal{D}}^{\mathcal{F},k}_{\mathrm{match}}$ }

The next result will be of use for the approximation of $\mathcal{D}^{\mathcal{F},k}_{\mathrm{match}}$.


\begin{proposition}
\label{14}
Let $\mathcal{F},\mathcal{F}'\subseteq \mathcal{F}^\mathrm{all}$.
If the Hausdorff distance $$HD(\mathcal{F},\mathcal{F}'):=
\max\left\{
\sup_{F\in\mathcal{F}}\inf_{F'\in\mathcal{F}'} D_{\GENEOH}(F,F'),
\sup_{F'\in\mathcal{F}'}\inf_{F\in\mathcal{F}} D_{\GENEOH}(F,F')\right\}$$  is not larger than $\varepsilon$, then
\begin{equation}\left|\mathcal{D}^{\mathcal{F},k}_{\mathrm{match}}(\varphi_1, \varphi_2) - \mathcal{D}^{\mathcal{F}',k}_{\mathrm{match}}(\varphi_1, \varphi_2)\right|\le 2 \varepsilon
\end{equation}
for every $\varphi_1,\varphi_2 \in \varPhi$.
\end{proposition}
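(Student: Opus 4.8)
The proposition says: if the Hausdorff distance between $\mathcal{F}$ and $\mathcal{F}'$ (measured via $D_{\GENEOH}$) is at most $\varepsilon$, then $|\mathcal{D}^{\mathcal{F},k}_{\mathrm{match}} - \mathcal{D}^{\mathcal{F}',k}_{\mathrm{match}}| \le 2\varepsilon$.

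Let me recall the definitions:
- $\mathcal{D}^{\mathcal{F},k}_{\mathrm{match}}(\varphi_1,\varphi_2) = \sup_{F\in\mathcal{F}} d_{\mathrm{match}}(r_k(F(\varphi_1)), r_k(F(\varphi_2)))$
- $D_{\GENEOH}(F_1,F_2) = \sup_{\varphi\in\varPhi} d_H(F_1(\varphi), F_2(\varphi))$
- From the $\Delta$ definitions: $\Delta_{\GENEO} \le D_{\GENEO,H}$, which means $\sup_\varphi d_{\mathrm{match}}(r_k(F_1(\varphi)), r_k(F_2(\varphi))) \le D_{\GENEO,H}(F_1,F_2)$.

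Actually the key inequality I need is: for any single $F, F'$,
$$d_{\mathrm{match}}(r_k(F(\varphi)), r_k(F'(\varphi))) \le d_{\mathrm{match}}(\ldots) \le \ldots$$

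Wait. Let me think about what's available. From Theorem~\ref{t12}: $d_{\mathrm{match}}(r_k(\psi_1), r_k(\psi_2)) \le d_H(\psi_1,\psi_2)$ for $\psi_1,\psi_2 \in \varPsi$. (The first inequality, with $H$ playing the role of the group on $\varPsi$.)

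So for any $F, F' \in \mathcal{F}^{\mathrm{all}}$ and any $\varphi$:
$$d_{\mathrm{match}}(r_k(F(\varphi)), r_k(F'(\varphi))) \le d_H(F(\varphi), F'(\varphi)) \le D_{\GENEOH}(F,F').$$

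**The plan.**

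The strategy is a standard "sup over a set is stable under Hausdorff perturbation" argument, combined with the triangle inequality for $d_{\mathrm{match}}$. First I fix $\varphi_1, \varphi_2 \in \varPhi$ and write $m(F) := d_{\mathrm{match}}(r_k(F(\varphi_1)), r_k(F(\varphi_2)))$, so that $\mathcal{D}^{\mathcal{F},k}_{\mathrm{match}}(\varphi_1,\varphi_2) = \sup_{F\in\mathcal{F}} m(F)$ and similarly for $\mathcal{F}'$. By symmetry it suffices to show $\mathcal{D}^{\mathcal{F},k}_{\mathrm{match}}(\varphi_1,\varphi_2) \le \mathcal{D}^{\mathcal{F}',k}_{\mathrm{match}}(\varphi_1,\varphi_2) + 2\varepsilon$; the reverse inequality follows by swapping the roles of $\mathcal{F}$ and $\mathcal{F}'$, and the two together give the absolute-value bound.

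The heart of the argument is a pointwise Lipschitz-type estimate on $m$. Given $F \in \mathcal{F}$, the Hausdorff hypothesis provides, for every $\eta > 0$, some $F' \in \mathcal{F}'$ with $D_{\GENEOH}(F,F') \le \varepsilon + \eta$. I then bound $|m(F) - m(F')|$ using the triangle inequality for the bottleneck distance $d_{\mathrm{match}}$ twice:
\begin{align}
|m(F) - m(F')| &\le d_{\mathrm{match}}(r_k(F(\varphi_1)), r_k(F'(\varphi_1))) \nonumber\\
&\quad + d_{\mathrm{match}}(r_k(F(\varphi_2)), r_k(F'(\varphi_2))).\nonumber
\end{align}
Each of these two terms is controlled by the first inequality of Theorem~\ref{t12}: $d_{\mathrm{match}}(r_k(F(\varphi_i)), r_k(F'(\varphi_i))) \le d_H(F(\varphi_i), F'(\varphi_i)) \le D_{\GENEOH}(F,F') \le \varepsilon + \eta$. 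Hence $m(F) \le m(F') + 2(\varepsilon+\eta) \le \mathcal{D}^{\mathcal{F}',k}_{\mathrm{match}}(\varphi_1,\varphi_2) + 2(\varepsilon+\eta)$. Taking the supremum over $F\in\mathcal{F}$ and then letting $\eta\to 0$ yields $\mathcal{D}^{\mathcal{F},k}_{\mathrm{match}}(\varphi_1,\varphi_2) \le \mathcal{D}^{\mathcal{F}',k}_{\mathrm{match}}(\varphi_1,\varphi_2) + 2\varepsilon$.

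**Main obstacle.** The only delicate point is justifying the two-term triangle inequality $|m(F)-m(F')|\le d_{\mathrm{match}}(r_k(F(\varphi_1)),r_k(F'(\varphi_1))) + d_{\mathrm{match}}(r_k(F(\varphi_2)),r_k(F'(\varphi_2)))$, which is a "quadrilateral" inequality saying $d_{\mathrm{match}}$ itself is $1$-Lipschitz in each of its two arguments. This is precisely the reverse triangle inequality applied to the pseudo-metric $d_{\mathrm{match}}$, so it holds as soon as $d_{\mathrm{match}}$ is a genuine pseudo-metric on persistent Betti numbers functions — which the paper has already asserted (it induces the pseudo-metric $d_{\mathrm{match}}$ on PBNs, and in particular satisfies the triangle inequality). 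I would state this reverse-triangle step explicitly and invoke the triangle inequality for $d_{\mathrm{match}}$. Everything else is bookkeeping with suprema; the factor $2$ in the conclusion comes exactly from perturbing both arguments $\varphi_1$ and $\varphi_2$ of the match distance, each contributing at most $\varepsilon$.
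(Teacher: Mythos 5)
Your proposal is correct and follows essentially the same route as the paper's proof: pick, for each $F\in\mathcal{F}$ and each $\eta>0$, an $F'\in\mathcal{F}'$ with $D_{\mathrm{GENEO,H}}(F,F')\le\varepsilon+\eta$, control $d_{\mathrm{match}}(r_k(F(\varphi_i)),r_k(F'(\varphi_i)))$ via the stability inequality of Theorem~\ref{t12}, apply the quadrilateral (reverse triangle) inequality for $d_{\mathrm{match}}$ to get the $2(\varepsilon+\eta)$ bound, then take suprema, use symmetry, and let $\eta\to 0$. The only difference is cosmetic: you make explicit the quadrilateral-inequality step that the paper compresses into a single ``Therefore.''
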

(The proof is in Appendix~\ref{AppendixProofs}.)\\

Since the compactness of the space $\mathcal{F}^\mathrm{all}$ guarantees we can cover $\mathcal{F}$ by a finite set of balls in $\mathcal{F}^\mathrm{all}$ of radius $\varepsilon$, centered at points of a finite set $\mathcal{F}'\subseteq \mathcal{F}$, the following proposition states that the approximation of $\mathcal{D}^{\mathcal{F},k}_{\mathrm{match}}(\varphi_1, \varphi_2)$ can be reduced to the computation of $\mathcal{D}^{\mathcal{F}',k}_{\mathrm{match}}(\varphi_1, \varphi_2)$, i.e. the maximum of a finite set of bottleneck distances between persistence diagrams, which are well-known to be computable by means of efficient algorithms.


\begin{proposition}
\label{corapprox}
Let $\mathcal{F}$ be a non-empty subset of $\mathcal{F}^{\mathrm{all}}$. For every $\varepsilon>0$, a finite subset $\mathcal{F}^{*}$ of $\mathcal{F}$ exists, such that
\begin{equation}
|\mathcal{D}^{\mathcal{F^{*}},k}_{\mathrm{match}}(\varphi_1, \varphi_2) - \mathcal{D}^{\mathcal{F},k}_{\mathrm{match}}(\varphi_1, \varphi_2)|\le \varepsilon
\end{equation}
for every $\varphi_1,\varphi_2 \in \varPhi$.
\end{proposition}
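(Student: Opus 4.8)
The plan is to deduce the statement from Proposition~\ref{14}, whose conclusion already gives the desired type of estimate once a sufficiently dense finite subset $\mathcal{F}^{*}$ has been found. Concretely, given $\varepsilon>0$, I would aim to produce a finite $\mathcal{F}^{*}\subseteq\mathcal{F}$ with $HD(\mathcal{F},\mathcal{F}^{*})\le \varepsilon/2$, where the Hausdorff distance is taken with respect to $D_{\GENEOH}$; Proposition~\ref{14} then yields $|\mathcal{D}^{\mathcal{F}^{*},k}_{\mathrm{match}}(\varphi_1,\varphi_2)-\mathcal{D}^{\mathcal{F},k}_{\mathrm{match}}(\varphi_1,\varphi_2)|\le \varepsilon$ uniformly in $\varphi_1,\varphi_2\in\varPhi$, which is exactly the claim.

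First I would invoke Theorem~\ref{t17}, which asserts that $\mathcal{F}^{\mathrm{all}}$ is compact with respect to $D_{\GENEO}$, hence totally bounded. Since $\mathcal{F}\subseteq\mathcal{F}^{\mathrm{all}}$, the subset $\mathcal{F}$ is itself totally bounded for $D_{\GENEO}$. Therefore, for the chosen radius $\varepsilon/2$, I can extract a finite $(\varepsilon/2)$-net of $\mathcal{F}$ whose centers actually lie in $\mathcal{F}$: cover $\mathcal{F}^{\mathrm{all}}$ by finitely many $D_{\GENEO}$-balls of radius $\varepsilon/4$, discard those not meeting $\mathcal{F}$, and select one point of $\mathcal{F}$ from each of the remaining balls; call the resulting finite collection $\mathcal{F}^{*}$. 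By construction every $F\in\mathcal{F}$ lies within $D_{\GENEO}$-distance $\varepsilon/2$ of some element of $\mathcal{F}^{*}$.

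Next I would pass from $D_{\GENEO}$ to $D_{\GENEOH}$ using the inequality $D_{\GENEOH}\le D_{\GENEO}$ established earlier. Consequently each $F\in\mathcal{F}$ is also within $D_{\GENEOH}$-distance $\varepsilon/2$ of $\mathcal{F}^{*}$, so $\sup_{F\in\mathcal{F}}\inf_{F'\in\mathcal{F}^{*}}D_{\GENEOH}(F,F')\le \varepsilon/2$; and since $\mathcal{F}^{*}\subseteq\mathcal{F}$, the reverse term $\sup_{F'\in\mathcal{F}^{*}}\inf_{F\in\mathcal{F}}D_{\GENEOH}(F,F')$ vanishes. Hence $HD(\mathcal{F},\mathcal{F}^{*})\le \varepsilon/2$, and applying Proposition~\ref{14} with this bound closes the argument.

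The argument is essentially a repackaging of total boundedness, so I do not expect a deep obstacle. The only points requiring slight care are ensuring that the net centers belong to $\mathcal{F}$ itself rather than merely to $\mathcal{F}^{\mathrm{all}}$ (needed both so that $\mathcal{F}^{*}\subseteq\mathcal{F}$ as claimed and so that the second Hausdorff term is zero), which is handled by the standard cover-and-select refinement above; and the fact that compactness is stated for $D_{\GENEO}$ while Proposition~\ref{14} measures distances in $D_{\GENEOH}$, a gap bridged precisely by the inequality $D_{\GENEOH}\le D_{\GENEO}$.
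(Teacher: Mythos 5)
Your proof is correct and follows essentially the same route as the paper's: both obtain a finite $(\varepsilon/2)$-net $\mathcal{F}^{*}\subseteq\mathcal{F}$ from the compactness of $\mathcal{F}^{\mathrm{all}}$ (Theorem~\ref{t17}), pass from $D_{\GENEO}$ to $D_{\GENEOH}$ via $D_{\GENEOH}\le D_{\GENEO}$, and conclude by Proposition~\ref{14}. The only cosmetic difference is that the paper extracts a finite subcover of the closure $\bar{\mathcal{F}}$ using balls centered at points of $\mathcal{F}$, whereas you invoke total boundedness of the subset $\mathcal{F}$ directly, which avoids the closure altogether.
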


(The proof is in Appendix~\ref{AppendixProofs}.)\\

\begin{remark}
Theorem~\ref{t17} and the inequalities $\Delta_\mathrm{GENEO}\le D_\mathrm{GENEO,H}\le D_\mathrm{GENEO}$ stated in Subsection~\ref{PM} immediately imply that $\mathcal{F}^{\mathrm{all}}$ is compact also with respect to the topologies induced by $\Delta_\mathrm{GENEO}$ and   $D_\mathrm{\GENEO,H}$.
\end{remark}

\subsection{Beyond group equivariance}
We observe that while the definition of the natural pseudo-distance $d_G$ requires that $G$ has the structure of a group, the definition of $\mathcal{D}^{\mathcal{F},k}_{\mathrm{match}}$ does not need this assumption. In other words, our approach based on GENEOs can be used also when we wish to have equivariance with respect to a \emph{set} instead of a \emph{group} of homeomorphisms. This property is promising for extending the application of our theory to the cases in which the agent is equivariant with respect to each element of a finite set of homeomorphisms that is not closed with respect to composition and computation of the inverse.

\section{Validation on discrete function spaces}
\label{sec:results}

\begin{figure}[tb]
  \centering
  \includegraphics[width=.8\textwidth]{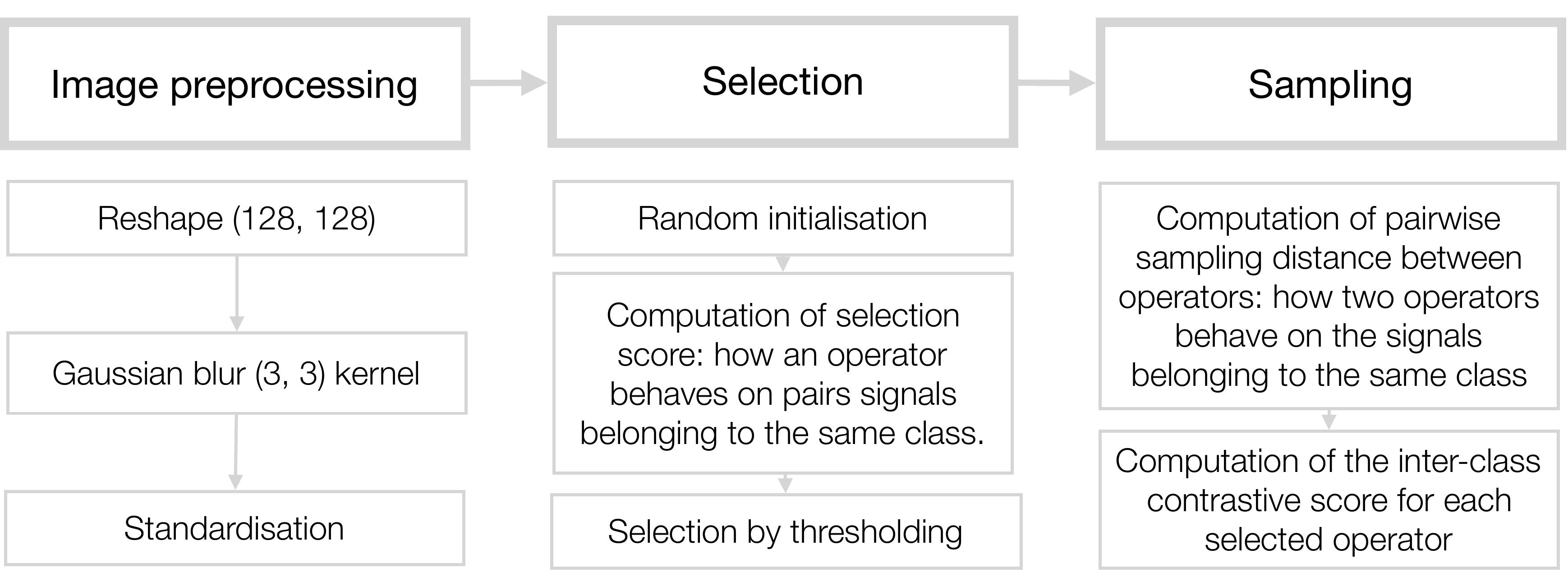}
  \caption{Experimental pipeline. In the preprocessing step images are reshaped, smoothed and standardised. Subsequently, operators are  initialised randomly. Thereafter, they are selected according to their output when evaluated on objects belonging to a chosen class. The final sampling step allows us to exclude operators that appear to be too similar, and thus redundant.\label{fig:pipeline}}
\end{figure}

In summary, we introduced above a theoretical framework allowing to describe an agent acting on data as a collection of suitable operators. We do that by representing data as points of a space of continuous functions with compact support. The density of such space makes the quest for suitable operators for the approximation of a given agent computationally complicated. For this reason, we chose to consider GENEOs: enforcing equivariance with respect to the action of a group causes the dimensionality of the search-space to collapse. Furthermore, in~\Cref{sec:mathmod}, we showed how GENEO spaces can be equipped with suitable metrics and respect properties that are essential in a machine learning context. The results concerning compactness and convexity make it possible to safely explore the space of GENEOs when operating on a labelled dataset. One of the main issue to be addressed when working in the proposed setting is the computability of metrics between operators. In~\Cref{sec:newPH} we show how metrics between GENEOs can be lower approximated via persistent homology. These results should be enough to guarantee approximability, efficacy and computability of GENEOs, when utilised to solve supervised tasks.

Our mathematical model and theorems are based on the assumption that data can be treated as points in a space of continuous functions. In this section, we test the validity of such results on classification of real-world datasets proceeding as follows. First we describe an algorithm allowing to select and sample GENEOs in order to learn the metric induced on a dataset by a labelling function. After that, we define the class of GENEOs we will use to study the MNIST, fashion-MNIST and CIFAR10 datasets. Selection and sampling are then used to approximate an agent able to express the underlying metric of these datasets by observing only $20$ or $40$ examples per class. Thereafter, we show how the metric learned through selection and sampling is still expressive when used to represent distances among validation samples transformed according to the equivariances of the GENEOs of choice. Finally, we use selected and sampled GENEOs to inject knowledge in an artificial neural network.

\subsection{Operators selection and sampling on labelled datasets}

We start from the assumption that data labelled with the same symbol share common features with respect to the agent we want to approximate. Thus, we suggest an algorithm for metric learning based on the metrics introduced on the space of GENEOs in~\Cref{sec:newPH}. Briefly, we start by selecting randomly a certain number of GENEOs. Afterwards, we compare them by taking advantage of the fact that their representation as persistence diagrams is invariant with respect to the action of $G$. These selected operators see those features that are common among the samples associated to the same label. Finally, always profiting from the property of the matching distance to be lower bound of the metric defined on the space of operators, we sample the operators in order to obtain a minimal set of non-redundant operators.

In symbols, let $\Phi = \left\{ \varphi_1, \dots, \varphi_n \right\}$ be a dataset equipped with a labelling function $l: \Phi\rightarrow I\in\mathbb{N}$. We assume that the dataset can be written as the disjoint union $\Phi = \sqcup_{i\in I} \Phi_i$ where $\Phi_i$ contains samples labelled by $i$.
Let $\mathcal{F}$ be the space of operators that will act on the samples. We begin by randomly sampling $N$ candidate operators in $\mathcal{F}$, let us denote them as the set $\mathcal{C} = \left\{F_k\right\}_{k\in\{1,\dots, N\}}$. We then select those operators that consider as similar the objects belonging to the same class. Let us consider the samples in $\Phi_l$, for each of the candidate operators $F\in\mathcal{C}$, we define the label-dependent value

$$
s_{l}\left(F \right) = \max_{\varphi_{i}^{l},\varphi_{j}^l}d_{\mathrm{match}}\left(r_1\left(F\left(\varphi_{i}^{l}\right)\right),r_1\left(F\left(\varphi_{j}^{l}\right)\right)\right).
$$ 
A candidate operator $F$ is \emph{selected} if $s_l\left(F\right)$ is smaller than a fixed threshold $\epsilon$ for every $l$. Let us denote by $\mathcal{S}$ the set of selected operators. In practice, we will show how few examples per class are enough to select operators able to grasp salient topological-geometrical features from the example samples, and can be consequently used to compute reasonable distances between new validation samples.

The selection criterion does not guarantee that the operators are maximally diverse, when evaluated within and in-between classes. The important advantage of working on metric spaces is that we can now sample the elements of $\mathcal{S}$ to avoid storing operators that would focus on the same or similar characteristic. To this end, given a class $l$, we define the distance between two operators $F_{p}$ and $F_{q}$ (cf. Subsection \ref{PM})

$$
\Delta^{l}_\mathrm{GENEO}\left(F_{p},F_{q}\right):=
  \max_{\varphi_{i}^{l}}d_{\mathrm{match}}\left(r_k\left(F_{p}\left(\varphi_{i}^{l}\right)\right),r_k\left(F_{q}\left(\varphi_{i}^{l}\right)\right)\right).$$

For every label $l$, we sort the pairs $\left(F_{p},F_{q}\right)$ in ascending order of $\Delta^{l}_\mathrm{GENEO}$, and assign to each pair of operators its index in the sorted list of distances. We then define the \textit{interclass contrastive score} of the pair $\left(F_{p},F_{q}\right)$ as the sum of its indices over all classes. Finally, we remove from $\mathcal{S}$ redundant operators, i.e. we select only one operators for pairs whose score is below a fixed threshold $t$.

Finally, two objects $\varphi_1$ and $\varphi_2$ can be compared by computing the strongly $G$-invariant pseudo-metric $\mathcal{D}^{\mathcal{S}}_{\mathrm{match}}(\varphi_1, \varphi_2)$, defined in \Cref{sec:newPH}.

\begin{figure}[tb]
  \centering
  \includegraphics[width=.55\textwidth]{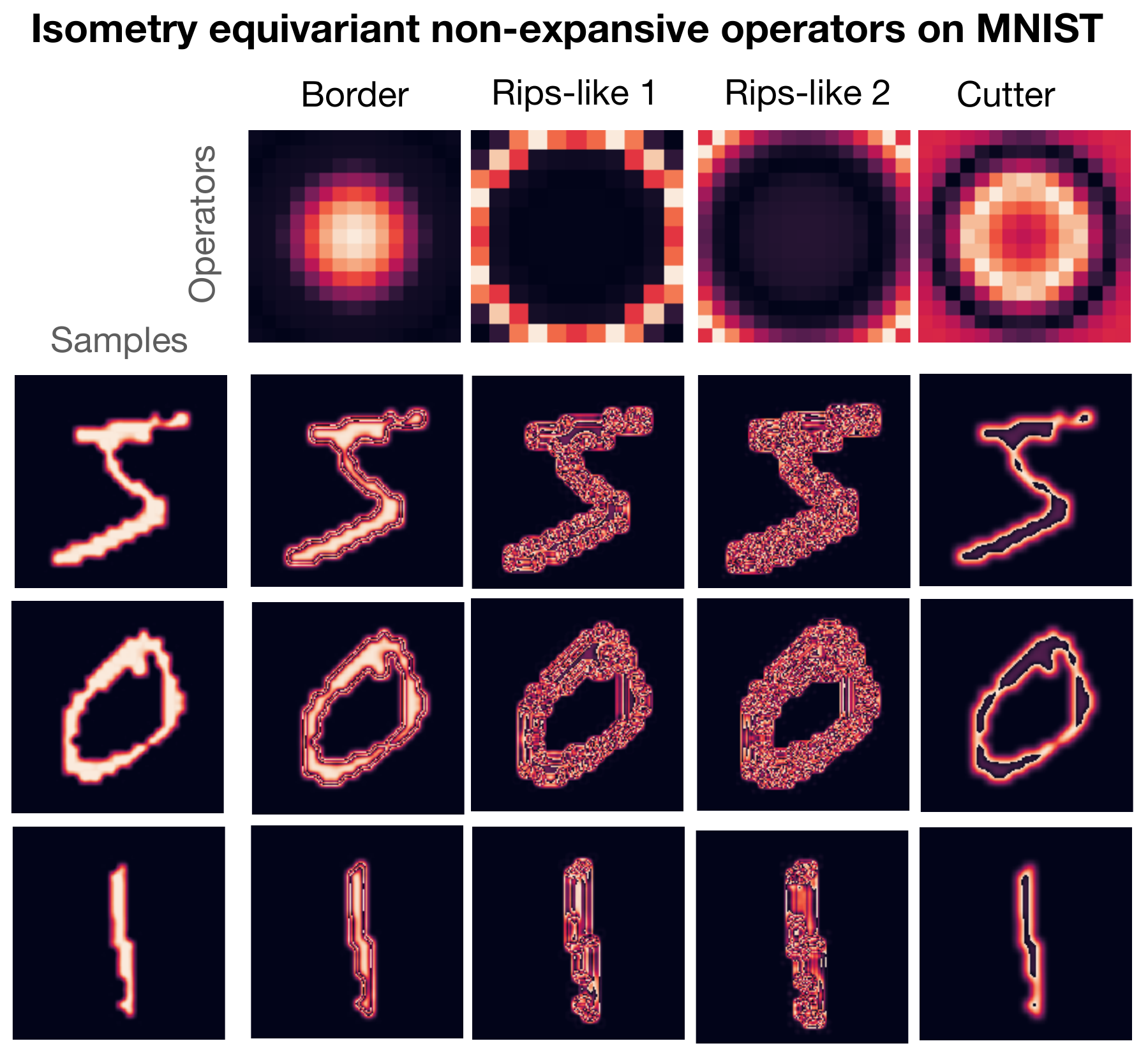}
  \caption{Selected IENEOs on the MNIST dataset. By considering the metrics defined on the space of GENEOs, we select operators able to recognise MNIST digits belonging to the same class. Albeit the operator space is constrained by the symmetries induced by the equivariance with respect to the group of planar isometries, we observe that the selected operators are expressive and have a clear topological interpretation. Among other functions, these operators are specialised in border detection and voids (cycles in topological terms) filling, similarly to the classical Vietoris-Rips construction~\cite{zomorodian2010fast}. Finally, the cutter operator--depicted in the last column--disconnects the image approximately according to its distance transform~\cite{fabbri20082d}. \label{fig:ieneos_mnist}}
\end{figure}

\subsection{Isometry equivariant non-expansive operators}

One of the main strength of convolutional neural networks is the natural equivariance of the convolution operator with respect to the group of planar translations. However, oftentimes when working with images or volumes, invariance with respect to other transformations such as rotations or reflexions can be important. In what follows we define a parametric family of non-expansive operators which are equivariant with respect to Euclidean plane isometries.

Given $\sigma>0$ and $\tau\in\R$, we consider the $1$-dimensional Gaussian function with width $\sigma$ and centre $\tau$
$$g_{\tau}(t):=e^{-\frac{\left(t-\tau\right)^2}{2\sigma^2}},$$
where $g_{\tau}:\R\to\R$. For a positive integer $k$, we take the set $S$ of the $2k$-tuples $(a_1,\tau_1,\ldots,a_k,\tau_k) \in \R^{2k}$ for which $\sum_{i=1}^{k}a_i^2=\sum_{i=1}^{k}\tau_i^2=1$. $S$ is a submanifold of $\R^{2k}$.

For each $p=(a_1,\tau_1,\ldots,a_k,\tau_k)\in S$, we then consider the function $G_p:\R^2\to\R$ defined as
$$G_p(x,y):=\sum_{i=1}^{k}a_i g_{\tau_i}\left(\sqrt{x^2+y^2}\right).$$

If we denote by $F_p$ the convolutional operator mapping each continuous function with compact support $\varphi:\R^2\to\R$ to the continuous and with compactly supported function $\psi:\R^2\to\R$ defined as
$$
\psi(x,y):=\int_{\R^2}\varphi(\alpha,\beta)\cdot\frac{G_p(x-\alpha,y-\beta)}{\|G_p\|_{L^1}}\ d\alpha\ d\beta.
$$
Then, the operator $F_p$ is a group equivariant non-expansive operator with respect to the group $I$ of Euclidean plane isometries. We call $F_p$ a IENEO (Isometry Equivariant Non-Expansive Operator).

The IENEO $F_p$ is parametric with respect to the $2k$-tuple $p=(a_1,\tau_1,\ldots,a_k,\tau_k)\in S$. Therefore, we define a parametric family of IENEOs $\mathcal{F} = \{F_{p}\}_{p \in S}$.

\begin{figure}[tb]
  \centering
  \includegraphics[width=.9\textwidth]{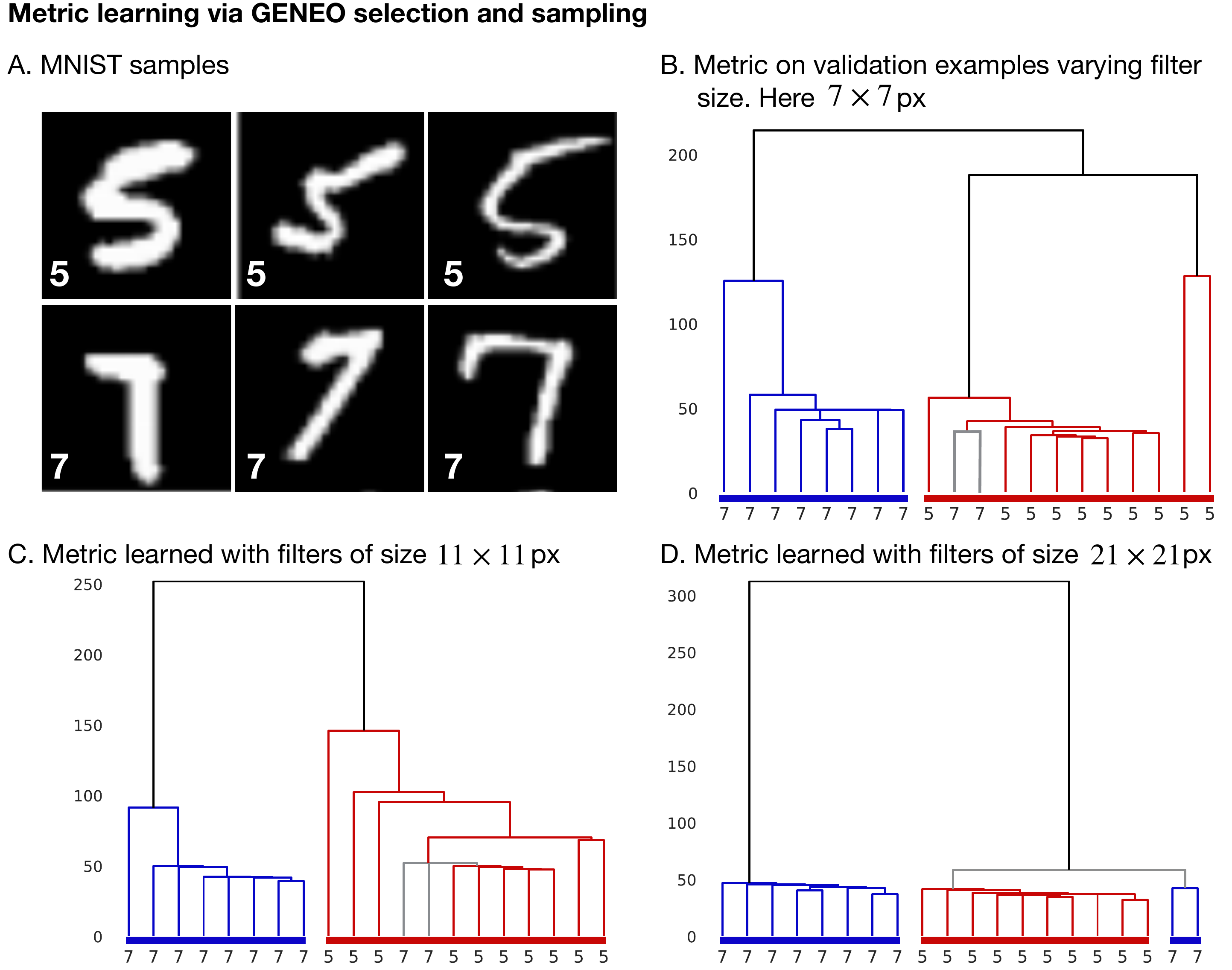}
  \caption{Metric learning via IENEOs selection and sampling. A. Samples from the classes $7$ and $5$ from the MNIST dataset. Panels B, C and D show the hierarchical clustering obtained by using selected and sampled IENEOs of different dimensions to measure the distance between validation samples belonging to the two considered classes. For all filter sizes $500$ operators were randomly initialised, then selected and sampled. We observe how samples belonging to the two classes are clearly separated by filters of all the considered dimensions.\label{fig:metric_MNIST}}
\end{figure}

\subsection{Applications}

We are now ready to utilise the selection and sampling strategy to find operators able to recognise samples belonging to the same class in a discrete dataset. We propose three different applications of our model. First we select and sample operators on two-classes subsets of the MNIST, fashion-MNIST and CIFAR10 datasets, we evaluate the validity of the learned metric by computing pairwise distances of validation samples according to the selected and sampled operators. Let us denote these operators by $\mathcal{S}$. Second, we evaluate on the MNIST dataset the capacity of the operators in $\mathcal{S}$ to discriminate validation examples that have been transformed with random planar isometries. Finally, we use $\mathcal{S}$ to initialise the filters of a convolutional layer and a dense architecture to classify the samples belonging to the classes the IENEOs where selected and sampled on.

\subsubsection{Image preprocessing}
Images are preprocessed according to the pipeline described in the first column of~\Cref{fig:pipeline}. Every image $I$ is first reshaped to size $(128,128)$, then blurred with a $3\times 3$ Gaussian kernel and finally standardised as $I_s = \frac{I-\textrm{mean}(I)}{\textrm{std}(I)}$. The same preprocessing is applied in all experiments and to all datasets.

\subsubsection{Metric learning through selection and sampling}

\begin{figure}[tb]
  \centering
  \includegraphics[width=\textwidth]{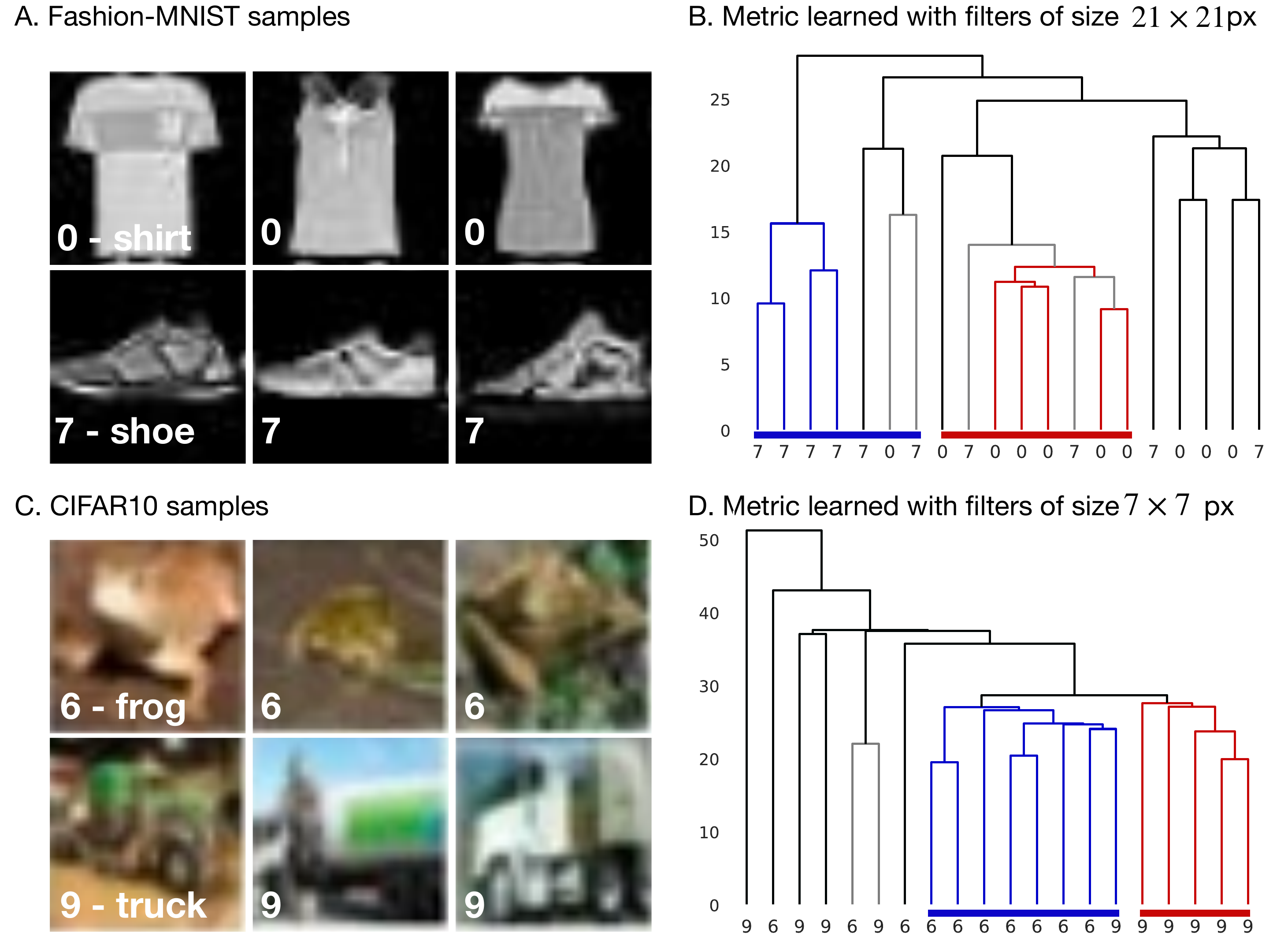}
  \caption{IENEO metric learning on fashion-MNIST and CIFAR10. After selecting and sampling $500$ randomly initialised IENEOs on $20$ examples per class, we evaluated the metric encoded by the selected operators on a validation set consisting of $10$ examples per class.\label{fig:metric_fashionMNISTCIFAR10}}
\end{figure}

\begin{figure}[h!]
  \centering
  \includegraphics[width=\textwidth]{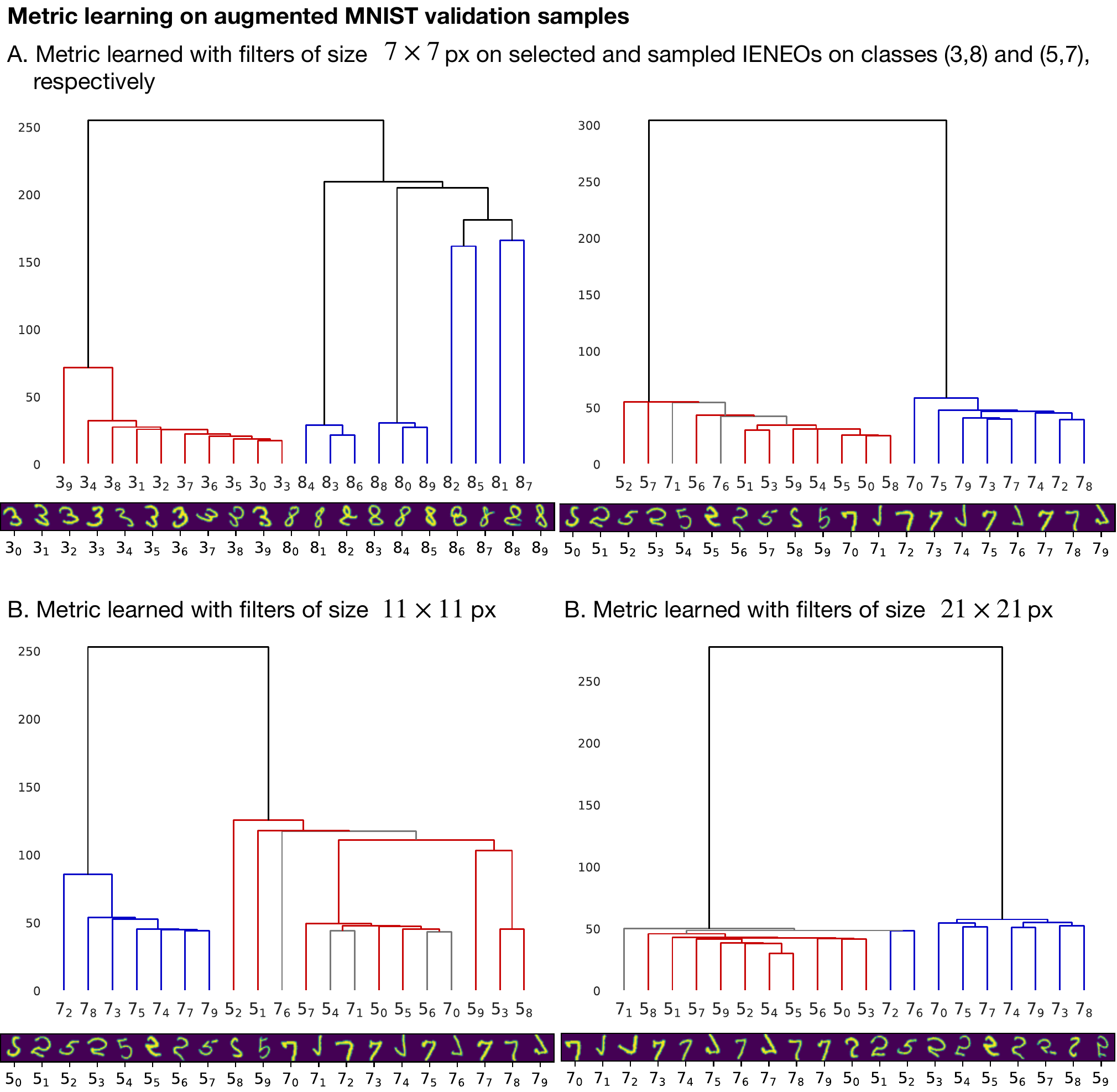}
  \caption{The metric obtained by considering selected and sampled IENEOs can be used to cluster samples transformed according to the group of equivariances (in this case planar isometries) of the family of operators of choice. In panel A we compare the cluster of transformed validation samples obtained on two different pairs of MNIST classes. The dendrogram on the right of panel A, panels B and C show how a variation in the size and number of Gaussian components of the IENEO affects the clustering of validation samples randomly transformed through planar isometries. \label{fig:augmented}}
\end{figure}

Metric learning is a natural application in the framework we describe. Indeed, operators that have been selected on labelled examples should be able to grasp geometrical and topological features that are shared among the examples belonging to the same class. Afterwards, selected and sampled operators $F_i\in\mathcal{S}$ can be used to measure distances between pairs of validation samples as
\begin{equation}\label{eq:metric}
  d_{\mathcal{S}}\left(\varphi, \varphi^\prime\right) = \max_{F\in \mathcal{S}}d_{\mathrm{match}}\left(r_1\left(F\left(\varphi\right)\right),r_1\left(F\left(\varphi^\prime\right)\right)\right).
\end{equation}

This choice implies that two samples $\varphi$ and $\varphi^\prime$ will have distance $0$, and hence are considered the same by the collection of selected operators (agent), only if every operator in $\mathcal{S}$ \textit{sees} them as identical. Note also that $d_{\mathcal{S}}$ is invariant with respect to the action of the group of planar isometries. This invariance is naturally inherited by the usage of $d_{\textrm{match}}$. After computing the pairwise distance between validation examples, we use hierarchical clustering~\cite{langfelder2007defining} to visualise how samples have been organised by the metric as a dendrogram.

For every dataset $\Phi\in\{\textrm{MNIST}, \textrm{fashion-MNIST}, \textrm{CIFAR10}\}$, we select a subset $\Phi_{l_i,l_j}$ of samples belonging to two classes. We start by randomly initialising a parametrised family of IENEOs (of cardinality $500$ or $750$ in the experiments that follow). Afterwards, a small number--typically $20$ or $40$--of samples per class are randomly chosen. These samples are then used to select common within-class geometrical and topological features by selection and sampling. The threshold for the selection algorithm is set to $\tau = 1.5$ and the threshold $t$ for sampling is defined as the $75th$ percentile of all contrastive scores. These parameters are fixed and used in all the following experiments.

We first studied the efficacy of selection and sampling on a binary classification task on the MNIST dataset. After selecting samples belonging to two randomly selected classes of MNIST, we chose $20$ random samples per class to be used as examples in the selection and sampling algorithm. Sampled and selected IENEOs are then used to compute the pairwise distances of $10$ validation samples per class and generate the dendrogram in panel B of~\Cref{fig:metric_MNIST}. We reproduced three times the same experiment by varying the size and the number of $1$-dimensional Gaussians used to initialise the IENEOs. In particular, we considered sizes $s \in \{7,11,21\}$. The number of Gaussians was chosen according to the size as $\frac{s}{2} + 1$ and rounded to the nearest integer. The dendrograms resulting from this manipulation are depicted in panels B, C, D in~\Cref{fig:metric_MNIST}.

Successively, we applied the same strategy and parameters to the fashion-MNIST and CIFAR10 datasets, obtaining the results in~\Cref{fig:metric_fashionMNISTCIFAR10}.

\subsubsection{Validation on augmented samples}
This application aims at testing the aforementioned equivariance of the distance $d_{\mathcal{S}}$ defined in Equation~\ref{eq:metric}. To do this, we consider a set of operators selected and sampled on non-transformed samples, while we transform the set of validation samples by applying a random transformation among translations, rotations and reflections parametrised as follows:
\begin{enumerate}
  \item rotations are selected randomly to be between $1$ and $30$ degrees;
  \item translations can be in both the $x$ and $y$-axis directions in a range between $1$ and $2$ pixels;
  \item reflections are computed randomly with respect to one of the two axes.
\end{enumerate}

The transformed samples along with the dendrograms obtained by considering the metric induced by the selected and sampled operators are shown in~\Cref{fig:augmented}.

\subsubsection{Knowledge injection}

As a final application, we discuss the possibility of using selected and sampled operators $\mathcal{S}$ as fixed feature extractor for a simple artificial neural network model. We do that by using the elements of $\mathcal{S}$ to initialise non-trainable filters of a convolutional layer. On top of this layer, we use two fully-connected layers, the first with ReLu~\cite{nair2010rectified} and the latter softmax activations, two classify samples from pairs of classes of MNIST, fashion-MNIST and CIFAR10 datasets. Then we compare the performance of the classifier operating with the IENEO-initialised filters, with an identical architecture whose filters were initialised randomly with Glorot initilisation~\cite{glorot2010understanding}. The architecture of the model and the performance are shown in~\Cref{fig:conv}.

\begin{figure}[htbp]
  \centering
  \includegraphics[width=.9\textwidth]{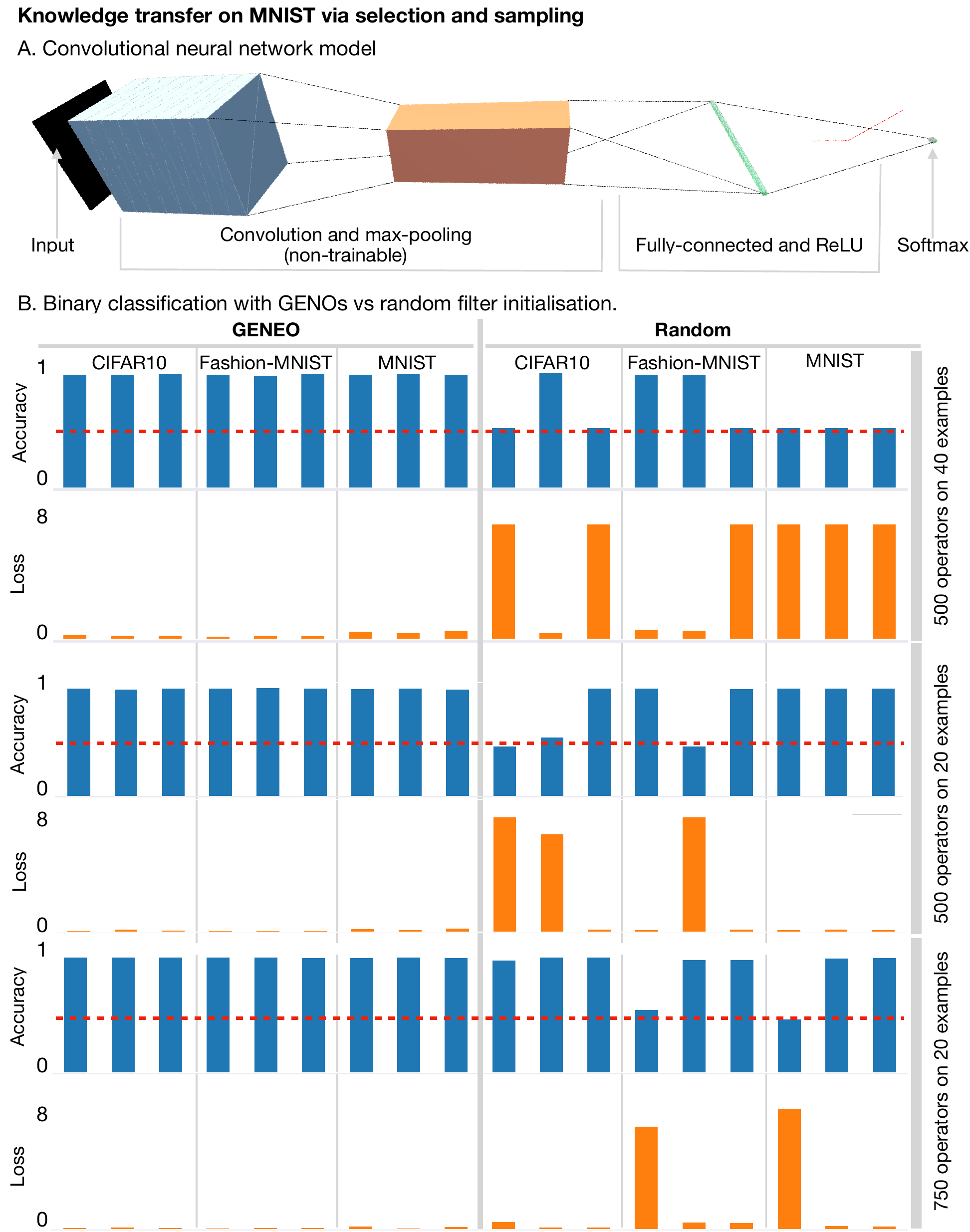}
  \caption{IENEOs versus random: Comparison between the performance obtained by classifying two classes of MNIST, fashion-MNIST and CIFAR10 with a dense classifier fed with convolutional filters obtained by selecting and sampling IENEOs. (A) The convolutional neural network architecture used in this experiment. The two first layers are a convolutional and maxpooling layer, respectively. The parameters of these layers are fixed. Two fully-connected layers counting $64$ and $2$ units are trained to classify the two selected classes. The first fully-connected layer uses a rectified linear unit (ReLU) and the second a softmax as activation functions. (B) Loss value and accuracy for the  validation set. In each row a different initialisation in terms of number of IENEOs and examples used during seletion and sampling is used.\label{fig:conv}}
\end{figure}

\section{Discussion and conclusions}
\label{sec:concls}

The first contribution of this paper consists in giving a novel, formal and sound mathematical framework for machine learning, based on the study of metric and topological properties of operator spaces acting on function spaces. This approach is dual to the classical one: instead of focusing on data, our approach focuses on suitable operators defined on the functions that represent the data. Of all possible type of operators, we study the space of non-expansive, group equivariant operators (GENEOs). When building a machine learning system, choosing to work on a space of operators equivariant with respect to specific transformations allows us to inject in the system pre-existing knowledge. Indeed, the operators will be blind to the action of the group on the data, hence reducing the dimensionality of the space to be explored during optimisation. The choice of working with non-expansive operators is justified both by the possibility of proving the compactness of the spaces of GENEOs (under the assumption of compactness of the spaces of measurements), and by the fact that in practical applications we are usually interested in operators that compress the information we have as an input.
The rationale of our approach is based on the assumption that the main interest in machine learning does not consist in the analysis and the approximation of data, but in the analysis and the approximation of the observers looking at the data. A simple example can make this idea clearer: if we consider images representing skin lesions, we are not mainly interested in the images per se but rather in approximating the judgement given by the physicians about such images.

Presenting our mathematical model, we first show how the space of GENEOs is suitable for machine learning. By using pseudo-metrics, we define a topology on the space of GENEOs which is induced by the one we define on the function space of data. We build the necessary machinery to define maps between GENEOs whose groups of equivariance are different. This definition is fundamental, because it allows one to compose operators hierarchically, in the same fashion as computational units are linked in an artificial neural network. Thereafter, by taking advantage of known and novel results in persistent homology, we prove compactness and convexity of the space of GENEOs under suitable hypotheses.
Moreover and importantly, we show how the suggested framework can be used to study operators that are equivariant with respect to set of transformations, rather than groups. In particular, we observe that the pseudo-metric $\mathcal{D}^{\mathcal{F},k}_{\mathrm{match}}$ defined in Subsection~\ref{sec:SGIC} can be used also in the case that the operators in $\mathcal{F}$ are equivariant with respect to a \emph{set} instead of a \emph{group} of homeomorphisms. This possibility appears to be promising for future research. It is important to stress the use of persistent homology in our model: the metric comparison of GENEOs is a key point in our approach and persistent homology allows for a fast comparison of functions, so allowing for a fast comparison of GENEOs.

We give two algorithms that allow to select and sample from a space of operators given a dataset labelled for a classification task. These procedures allow to first select a subset of operators belonging to a certain GENEOs space, that give meaningful representation of the data with respect to their labelling, always invariant under the transformations induced by the action of $G$. Thenceforth, the sampling algorithm allows to eliminate redundant operators. These two strategies are used to perform metric learning and kernel on MNIST and fashion-MNIST. In addition, we show how convolutional filters initialised by selecting and sampling on few samples effectively grasp useful knowledge, that can be utilised to classify the remainder of the samples, for instance by a dense classifier.

Our forward-looking goal is the one of defining a novel artificial neural network model based on functional modules. Modules would be more complex computational units than the standard artificial neuron. The core of each module would be a GENEO, thus each module would be defined a priori to be equivariant with respect to a set of transformations. On one hand, this choice would allow us to dramatically reduce the dimensionality of the manifold to be studied during optimisation. On the other hand, choosing the transformation equivariances to be respected at each layer would allow us to inject knowledge in the networks before training, and would assure that information is not acquired by relying on unwanted noisy regularities in the training data.
Module networks would learn optimal transformations of the data to achieve a task, rather than operating on data themselves.

Module networks could be built by composing modules hierarchically and knowledge could be injected in the model by engineering the proper set of equivariances. These transformations would be easily interpretable and could offer a rigorous way to compare learning dynamics of different architectures during optimisation. In particular, we are investigating the possibility to generalize capsule networks~\cite{Hi11,Hi17} and modify the dynamic routing algorithm, by using the metrics on the space of GENEOs to update the connectivity strength between modules.

We conclude by observing that several interesting problems and new lines of research naturally arise in our mathematical model. First of all some sets of GENEOs appear to have a structure of a Lie group and a Riemannian manifold: these structures seem worth study and analysis. Secondly, new methods for building GENEOs should be developed, in order to get good approximations of the spaces of GENEOs for given equivariance groups and function spaces.
We plan to devote further research to these issues.

\newpage

\begin{appendices}
\section{Additional propositions}
\label{appendix-D_X}

\begin{proposition}
The function $D_X$ is an extended pseudo-metric on $X$.
\end{proposition}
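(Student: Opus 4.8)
The plan is to verify directly the four defining properties of an extended pseudo-metric, treating the possible value $+\infty$ with the usual conventions for the extended reals. Since $D_X(x_1,x_2)$ is a supremum of the quantities $|\varphi(x_1)-\varphi(x_2)|\ge 0$, it is immediately non-negative (possibly $+\infty$, since $\varPhi$ need not be uniformly bounded), so its codomain is $[0,+\infty]$ as required for an \emph{extended} pseudo-metric. Vanishing on the diagonal is equally immediate: for every $x\in X$ and every $\varphi\in\varPhi$ we have $|\varphi(x)-\varphi(x)|=0$, hence $D_X(x,x)=\sup_{\varphi\in\varPhi}0=0$. Symmetry follows from the symmetry of the absolute value, since $|\varphi(x_1)-\varphi(x_2)|=|\varphi(x_2)-\varphi(x_1)|$ for every $\varphi$, so the two suprema coincide.

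The only property requiring an argument is the triangle inequality. First I would fix $x_1,x_2,x_3\in X$ and an arbitrary $\varphi\in\varPhi$, and use the pointwise triangle inequality for the absolute value:
\begin{equation*}
|\varphi(x_1)-\varphi(x_3)|\le |\varphi(x_1)-\varphi(x_2)|+|\varphi(x_2)-\varphi(x_3)|\le D_X(x_1,x_2)+D_X(x_2,x_3).
\end{equation*}
The right-hand side does not depend on $\varphi$, so taking the supremum over $\varphi\in\varPhi$ on the left yields $D_X(x_1,x_3)\le D_X(x_1,x_2)+D_X(x_2,x_3)$.

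The main (and only) obstacle is purely a matter of bookkeeping with the value $+\infty$: when one of the terms $D_X(x_1,x_2)$ or $D_X(x_2,x_3)$ equals $+\infty$, the displayed inequality holds trivially because its right-hand side is $+\infty$, while when both are finite the bound above is an ordinary inequality between real numbers and the passage to the supremum is standard. Finally, I would remark that, as already observed in the main text, if $\varPhi$ is bounded---say $\|\varphi\|_\infty\le L$ for all $\varphi\in\varPhi$---then $D_X(x_1,x_2)\le 2L<+\infty$ for all $x_1,x_2\in X$, so $D_X$ is in fact a (finite) pseudo-metric; the extended formulation is needed only to accommodate the case of an unbounded $\varPhi$.
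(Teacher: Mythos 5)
Your proof is correct and follows essentially the same route as the paper's: both reduce everything to the pointwise triangle inequality for the absolute value and standard properties of suprema (the paper bounds $\sup_\varphi$ of the sum by the sum of the suprema, while you bound each $\varphi$-term by the two distances before taking the supremum, a trivial variation). Your explicit handling of the value $+\infty$ and the remark on non-negativity are minor additions the paper leaves implicit, but they do not change the substance of the argument.
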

\begin{remark}
We recall that a pseudo-metric is just a distance $d$ without the property: if $d(a,b)=0$, then $a=b$.
\end{remark}
\begin{proof}
\begin{itemize}
\item  $D_X$ is obviously symmetrical.
\item The definition of $D_X$ immediately implies that $D_X(x,x)=0$ for any $x\in X$.
\item The triangle inequality holds, since
\begin{align*}
 D_X(x_1,x_2) & =\sup_{\varphi \in \varPhi}|\varphi (x_1)- \varphi (x_2)  |\\
 & \le \sup_{\varphi \in \varPhi}(|\varphi (x_1) - \varphi (x_3)| + |\varphi(x_3) - \varphi (x_2)  |)\\
 & \le \sup_{\varphi \in \varPhi}|\varphi (x_1) - \varphi (x_3)  | + \sup_{\varphi \in \varPhi} |\varphi (x_3) - \varphi (x_2)  |\\
 &= D_X(x_1,x_3) + D_X(x_3,x_2)
\end{align*}
for any $x_1,x_2,x_3 \in X$.
\end{itemize}

\end{proof}

\begin{proposition}
\label{stimaphi}
If $\varPhi$ is totally bounded, then for any $\delta >0$ there exists a finite subset $\varPhi_{\delta}$ of $\varPhi$ such that
\begin{equation*}
\left|\sup_{\varphi \in \varPhi}|\varphi(x_1)-\varphi(x_2)| - \max_{\varphi \in \varPhi_{\delta}}|\varphi(x_1) - \varphi(x_2)|\right| < 2\delta
\end{equation*}
for every $x_1, x_2\in X$.
\end{proposition}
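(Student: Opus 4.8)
The plan is to exploit the total boundedness of $\varPhi$ in the sup-norm so as to replace the supremum over all of $\varPhi$ by a maximum over a finite net, paying only (twice) the net radius through the triangle inequality. First I would fix $\delta>0$ and choose an auxiliary radius $\delta'$ with $0<\delta'<\delta$. Since $(\varPhi,D_\varPhi)$ is totally bounded, there is a finite subset $\varPhi_\delta=\{\bar\varphi_1,\dots,\bar\varphi_m\}\subseteq\varPhi$ whose open $D_\varPhi$-balls of radius $\delta'$ cover $\varPhi$; equivalently, for every $\varphi\in\varPhi$ there is an index $j$ with $\|\varphi-\bar\varphi_j\|_\infty<\delta'$. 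This finite set will play the role of $\varPhi_\delta$ in the statement.

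Next I would dispose of the absolute value. Because $\varPhi_\delta\subseteq\varPhi$, for all $x_1,x_2\in X$ we have $\max_{\varphi\in\varPhi_\delta}|\varphi(x_1)-\varphi(x_2)|\le\sup_{\varphi\in\varPhi}|\varphi(x_1)-\varphi(x_2)|$, so the quantity inside the absolute value is nonnegative and the absolute value may simply be dropped. It therefore suffices to bound the supremum from above by the maximum plus $2\delta$, uniformly in $x_1,x_2$.

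For the key estimate I would fix $x_1,x_2\in X$ and an arbitrary $\varphi\in\varPhi$, select $\bar\varphi_j\in\varPhi_\delta$ with $\|\varphi-\bar\varphi_j\|_\infty<\delta'$, and apply the triangle inequality pointwise:
\begin{equation*}
|\varphi(x_1)-\varphi(x_2)|\le|\varphi(x_1)-\bar\varphi_j(x_1)|+|\bar\varphi_j(x_1)-\bar\varphi_j(x_2)|+|\bar\varphi_j(x_2)-\varphi(x_2)|<|\bar\varphi_j(x_1)-\bar\varphi_j(x_2)|+2\delta'.
\end{equation*}
Bounding the middle term by $\max_{\varphi\in\varPhi_\delta}|\varphi(x_1)-\varphi(x_2)|$ and then taking the supremum over $\varphi\in\varPhi$ on the left yields $\sup_{\varphi\in\varPhi}|\varphi(x_1)-\varphi(x_2)|\le\max_{\varphi\in\varPhi_\delta}|\varphi(x_1)-\varphi(x_2)|+2\delta'<\max_{\varphi\in\varPhi_\delta}|\varphi(x_1)-\varphi(x_2)|+2\delta$. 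Combined with the reverse inequality of the second paragraph, this gives the claimed bound for every $x_1,x_2\in X$.

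I do not expect any genuine obstacle here: the argument is the standard finite-net approximation of a supremum. The only point requiring a little care is securing the \emph{strict} inequality $<2\delta$ rather than $\le2\delta$, which is exactly why I would fix the net radius $\delta'$ strictly smaller than $\delta$ at the outset, so that the factor $2\delta'$ produced by the two triangle-inequality terms stays strictly below $2\delta$ even after passing to the supremum.
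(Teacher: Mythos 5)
Your proof is correct and follows essentially the same route as the paper's: extract a finite net from the total boundedness of $\varPhi$, then use the pointwise triangle inequality to pass between an arbitrary $\varphi$ and its nearest net element, paying $2\times$ the net radius. One detail where you are actually more careful than the paper: the paper uses a net of radius $\delta$, picks $\bar\varphi$ realizing the supremum up to $\varepsilon$, and concludes by letting $\varepsilon \to 0$ --- a limit step that, taken literally, only yields the non-strict inequality $\sup_{\varphi \in \varPhi}|\varphi(x_1)-\varphi(x_2)| \le \max_{\varphi \in \varPhi_\delta}|\varphi(x_1)-\varphi(x_2)| + 2\delta$; your device of fixing a net radius $\delta' < \delta$ bounds every $\varphi \in \varPhi$ uniformly by $\max_{\varphi \in \varPhi_\delta}|\varphi(x_1)-\varphi(x_2)| + 2\delta'$, so the supremum stays at most $\max + 2\delta' < \max + 2\delta$ and the strict inequality claimed in the statement is genuinely secured.
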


\begin{proof}
Let us fix $x_1,x_2 \in X.$
Since $\varPhi$ is totally bounded, we can find a finite subset $\varPhi_{\delta}=\{ \varphi_1, \dots, \varphi_n\}$ such that for each $\varphi \in \varPhi$ there exists $\varphi_i \in \varPhi_{\delta}$, for which $\|\varphi-\varphi_i\|_{\infty} < \delta$. It follows that for any $x \in X, \ |\varphi(x) -\varphi_i(x)| < \delta$.
Because of the definition of supremum of a subset of the set $\mathbb{R}^+$ of all positive real numbers,  for any $\varepsilon>0$ we can choose a $\bar{\varphi} \in \varPhi$ such that $$\sup_{\varphi \in \varPhi}|\varphi(x_1)- \varphi(x_2)| -|\bar{\varphi}(x_1)- \bar{\varphi}(x_2)| \le \varepsilon.$$
Now, if we take an index $i$, for which  $\|\bar{\varphi}-\varphi_i\|_{\infty} < \delta$, we have that:
\begin{align*}
|\bar{\varphi}(x_1)- \bar{\varphi}(x_2)| & =|\bar{\varphi}(x_1)-\varphi_i(x_1) + \varphi_i(x_1)- \varphi_i(x_2) + \varphi_i(x_2)- \bar{\varphi}(x_2)|  \\
 &  \le|\bar{\varphi}(x_1)-\varphi_i(x_1)| +| \varphi_i(x_1)- \varphi_i(x_2)| + |\varphi_i(x_2)- \bar{\varphi}(x_2)|\\
 & < | \varphi_i(x_1)- \varphi_i(x_2)| +2 \delta \\
  & \le \max_{\varphi_j\in \varPhi_{\delta}}| \varphi_j(x_1)- \varphi_j(x_2)| +2 \delta.
\end{align*}
Hence,
\begin{equation*}
\sup_{\varphi \in \varPhi}|\varphi(x_1)- \varphi(x_2)|- \varepsilon  < |\bar{\varphi}(x_1)- \bar{\varphi}(x_2)| < \max_{\varphi_j\in \varPhi_{\delta}}| \varphi_j(x_1)- \varphi_j(x_2)| +2 \delta.
\end{equation*}

Finally, as $\varepsilon$ goes to zero, we have that
\begin{equation*}
\sup_{\varphi \in \varPhi}|\varphi(x_1)- \varphi(x_2)|< \max_{\varphi_j\in \varPhi_{\delta}}| \varphi_j(x_1)- \varphi_j(x_2)| +2 \delta.
\end{equation*}
On the other hand, since $\varPhi_{\delta} \subseteq \varPhi$:
\begin{equation*}
\sup_{\varphi \in \varPhi}|\varphi(x_1)- \varphi(x_2)| > \max_{\varphi_j\in \varPhi_{\delta}}| \varphi_j(x_1)- \varphi_j(x_2)| - 2\delta.
\end{equation*}

Therefore we proved the statement.

\end{proof}

\begin{proposition}
\label{propD_Gpseudometric}
The function $D_G$ is a pseudo-metric on $G$.
\end{proposition}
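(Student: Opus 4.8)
The plan is to verify directly the defining properties of a real-valued pseudo-metric—finiteness, vanishing on the diagonal, symmetry, and the triangle inequality—by unwinding the definition $D_G(g_1,g_2)=\sup_{\varphi\in\varPhi}D_\varPhi(\varphi\circ g_1,\varphi\circ g_2)$ and exploiting that, for each fixed $\varphi\in\varPhi$, the quantity $D_\varPhi(\varphi\circ g_1,\varphi\circ g_2)=\|\varphi\circ g_1-\varphi\circ g_2\|_\infty$ already behaves like a metric. Note that each term in the supremum is well defined: since $g\in G\subseteq\Homeo_\varPhi(X)$ we have $\varphi\circ g\in\varPhi$ for every $\varphi\in\varPhi$, so $D_\varPhi(\varphi\circ g_1,\varphi\circ g_2)$ makes sense.

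First I would record finiteness, so that $D_G$ is an honest pseudo-metric rather than an extended one. Since $\varPhi$ is compact it is bounded, say $\|\varphi\|_\infty\le L$ for every $\varphi\in\varPhi$; hence $\|\varphi\circ g_1-\varphi\circ g_2\|_\infty\le 2L$ for all $\varphi$ and all $g_1,g_2$, giving $0\le D_G(g_1,g_2)\le 2L<\infty$. Non-negativity and the vanishing $D_G(g,g)=\sup_{\varphi\in\varPhi}\|\varphi\circ g-\varphi\circ g\|_\infty=0$ are then immediate, and symmetry follows because $\|\varphi\circ g_1-\varphi\circ g_2\|_\infty=\|\varphi\circ g_2-\varphi\circ g_1\|_\infty$ for each $\varphi$, an equality that is preserved when passing to the supremum.

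The only step deserving a word of care is the triangle inequality, which I would handle by the standard ``sup of a sum is at most the sum of the sups'' argument. For each fixed $\varphi\in\varPhi$, the metric property of the sup-norm distance $D_\varPhi$ gives
\begin{align*}
D_\varPhi(\varphi\circ g_1,\varphi\circ g_3)
&\le D_\varPhi(\varphi\circ g_1,\varphi\circ g_2)+D_\varPhi(\varphi\circ g_2,\varphi\circ g_3)\\
&\le \sup_{\varphi'\in\varPhi}D_\varPhi(\varphi'\circ g_1,\varphi'\circ g_2)+\sup_{\varphi'\in\varPhi}D_\varPhi(\varphi'\circ g_2,\varphi'\circ g_3)\\
&= D_G(g_1,g_2)+D_G(g_2,g_3).
\end{align*}
Taking the supremum over $\varphi\in\varPhi$ on the left-hand side yields $D_G(g_1,g_3)\le D_G(g_1,g_2)+D_G(g_2,g_3)$, as required.

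I do not expect any genuine obstacle here: the statement reduces entirely to the metric properties of the sup-norm on $\varPhi$ together with elementary manipulations of suprema, with the boundedness of $\varPhi$ (guaranteed by its compactness) being the only hypothesis that is actually used, and only to ensure that $D_G$ is real-valued.
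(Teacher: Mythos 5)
Your proof is correct and follows essentially the same route as the paper's: finiteness via the bound $2L$ from compactness of $\varPhi$, immediate symmetry and vanishing on the diagonal, and the triangle inequality by bounding each fixed-$\varphi$ term through the metric property of $D_\varPhi$ before passing to the supremum. The only cosmetic difference is that you bound the two terms by their suprema before taking the supremum on the left, whereas the paper takes the supremum of the sum first; these are the same standard argument.
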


\begin{proof}
\begin{itemize}
 \item The value $D_G(g_1,g_2)$ is finite for every $g_1,g_2 \in G$ , because $\varPhi$ is compact and hence bounded. Indeed, a finite constant
 $L$ exists such that $\left \|\varphi \right \|_{\infty} \le L$ for every $\varphi \in \varPhi$. Hence, $\left \|\varphi \circ g_1 - \varphi \circ g_2  \right \|_{\infty}
 \le \left \|\varphi \right \|_{\infty}  + \left \|\varphi \right \|_{\infty} \le 2L $ for any $\varphi \in \varPhi$ and any $g_1,g_2 \in G$, since  $\varphi \circ g_1 , \varphi \circ g_2 \in \varPhi$.
 This implies that $D_G(g_1,g_2) \le 2L$ for every $g_1,g_2 \in G$.
\item  $D_G$ is obviously symmetrical.
\item The definition of $D_G$ immediately implies that $D_G(g,g)=0$ for any $g\in G$.
\item The triangle inequality holds, since
\begin{align}
 D_G(g_1,g_2) & =\sup_{\varphi \in \varPhi}\left \|\varphi \circ g_1 - \varphi \circ g_2  \right \|_{\infty}\nonumber\\
 & \le \sup_{\varphi \in \varPhi}(\left \|\varphi \circ g_1 - \varphi \circ g_3  \right \|_{\infty} + \left \|\varphi \circ g_3 - \varphi \circ g_2  \right \|_{\infty})\\
 & \le \sup_{\varphi \in \varPhi}\left \|\varphi \circ g_1 - \varphi \circ g_3  \right \|_{\infty} + \sup_{\varphi \in \varPhi} \left \|\varphi \circ g_3 - \varphi \circ g_2  \right \|_{\infty}\nonumber\\
 &= D_G(g_1,g_3) + D_G(g_3,g_2)\nonumber
\end{align}
for any $g_1,g_2,g_3 \in G$.
\end{itemize}
\end{proof}

\section{Our approach in terms of slice categories}
\label{appendix-slice}
In this section, we will apply the concept of slice category to our framework in order to formalize the concept of perception pairs, which are considered as subcategories of a larger category denoted by $\faktor{\text{PMet}}{(\R,d_e)}$, as we explain further. Moreover we explore the link between GENEOs and functors between categories of this kind.

Let $\text{PMet}$ be the category whose objects are pseudo-metric spaces and morphisms are the continuous functions between them. Let us fix the space $(\R,d_e)$, that is the real line equipped with the usual Euclidean metric, and consider the slice category over $(\R,d_e)$.

Now we recall the definition of slice category:

\begin{definition}
The slice category $\faktor{C}{c}$ of a category $C$ over an object $c \in C$ has
\begin{itemize}
    \item objects that are all arrows $f \in C$ such that $\text{cod}(f)=c$,
    \item morphisms that are all triples $g_{f,f'}:=(f, g , f' )$ where  $f: X \longrightarrow c$ and $f': X' \longrightarrow c$ are two objects of $\faktor{C}{c}$, $g: X \longrightarrow X'$ is a morphism of $C$ such that $f = f' \circ g$; $\text{dom}(g_{f,f'})=f \text{\ and } \text{cod}(g_{f,f'})= f'$.
\end{itemize}
The slice category is a special case of a comma category.
\end{definition}

\begin{remark}
There is a forgetful functor $U_c: \faktor{C}{c} \longrightarrow C$ which maps each object $f: X \longrightarrow c$ to its domain $X$ and each  morphism $g_{f,f'}$ between $f: X \longrightarrow c$ and $f': X' \longrightarrow c$ to the morphism $g: X \longrightarrow X'$.
\end{remark}

We are going to associate a perception pair $(\varPhi,G)$ with a subcategory $C(\varPhi,G)$ of $\faktor{\text{PMet}}{(\R,d_e)}$ defined as follows:
\begin{itemize}
    \item the objects of $C(\varPhi,G)$ are the elements of $\varPhi$;
    \item the arrows of $C(\varPhi,G)$ are the triples $( f, g , f \circ g )$ , where $f \in \varPhi$ and $g \in G$.
\end{itemize}
We observe that the action of $G$ on $\varPhi$ ensures us that the arrow $( f, g , f \circ g )$ is well-defined for any $f \in \varPhi$ and any $g \in G$.

Now we can define a ``functorial'' version of the concept of GENEO.
\begin{definition}
Let us consider two categories $C(\varPhi, G)$ and $C(\Psi, H)$. A functor $F$ from $C(\varPhi, G)$ to $C(\Psi, H)$ is a $C$-GENEO if:
\begin{itemize}
    \item $D_\Psi(F(\varphi), F(\varphi')) \le D_\varPhi(\varphi, \varphi')$ for any $\varphi, \ \varphi' \in \varPhi$;
    \item for any pair of morphisms $m, \ m' \in \mathrm{hom}(C(\varPhi,G))$ such that $U_\R(m) = U_\R(m')$ we have that $U_\R(F(m)) = U_\R(F(m')).$
\end{itemize}
\end{definition}
GENEOs and $C$-GENEOs share the non-expansivity condition. The proposition below shows that the second conditions respectively required in the definitions of GENEO and $C$-GENEO correspond to each other in a suitable sense. We omit its trivial proof.
\begin{proposition}
Let $F$ be a functor from $C(\varPhi,G)$ to $C(\Psi,H)$. The following conditions are equivalent:
\begin{itemize}
    \item there exists a group homomorphism $T: G \longrightarrow H$ such that $F( \varphi \circ g) = F(\varphi) \circ T(g)$ for any $\varphi \in \varPhi$ and any $g \in G$;
    \item for any pair of morphisms $m, \ m' \in \mathrm{hom}(C(\varPhi,G))$ such that $U_\R(m) = U_\R(m')$ we have that $U_\R(F(m)) = U_\R(F(m'))$.
\end{itemize}
\end{proposition}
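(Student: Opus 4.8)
The plan is to unwind both conditions into concrete statements about how $F$ acts on the morphisms of $C(\varPhi,G)$, and then read off the equivalence. First I would fix notation: for a morphism $m\in\mathrm{hom}(C(\varPhi,G))$ with domain $\varphi\in\varPhi$ and label $g\in G$, i.e. $m=g_{(\varphi,\varphi\circ g)}$, one has $U_\R(m)=g$, and its image $F(m)$ is a morphism of $C(\Psi,H)$ with domain $F(\varphi)$. Hence $F(m)=h_{(F(\varphi),\,F(\varphi)\circ h)}$ for a unique $h=U_\R(F(m))\in H$, and since functors preserve codomains, $F(\varphi\circ g)=\mathrm{cod}(F(m))=F(\varphi)\circ h$. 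Thus the second condition is precisely the assertion that $U_\R(F(m))$ depends only on $U_\R(m)$ and not on the domain $\varphi$ of $m$.

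For the implication from the second condition to the first, I would use this independence to define a map $T\colon G\to H$ by setting $T(g):=U_\R(F(m))$ for any morphism $m$ with $U_\R(m)=g$; such an $m$ always exists (take $g_{(\varphi,\varphi\circ g)}$ for an arbitrary $\varphi$, which is legitimate because $\varphi\circ g\in\varPhi$), and the value is well defined exactly by the second condition. The identity $F(\varphi\circ g)=F(\varphi)\circ T(g)$ then holds for every $\varphi$ and $g$ by the codomain computation above. To see that $T$ is a group homomorphism I would invoke functoriality together with the composition rule $g'_{(\varphi_2,\varphi_3)}\circ g_{(\varphi_1,\varphi_2)}=(g\circ g')_{(\varphi_1,\varphi_3)}$ recorded in Remark~\ref{remark:ppasacategory}: applying $F$ to both sides and comparing labels yields $T(g\circ g')=T(g)\circ T(g')$, while preservation of identities gives $T(\mathrm{Id})=\mathrm{Id}$.

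For the reverse implication I would argue that once a homomorphism $T$ with $F(\varphi\circ g)=F(\varphi)\circ T(g)$ is given, the functor sends each morphism $g_{(\varphi,\varphi\circ g)}$ to the morphism $T(g)_{(F(\varphi),\,F(\varphi)\circ T(g))}=T(g)_{(F(\varphi),\,F(\varphi\circ g))}$, exactly as described in Remark~\ref{remark:ppasacategory}; consequently $U_\R(F(m))=T(g)$ depends only on $g=U_\R(m)$, so $U_\R(m)=U_\R(m')$ forces $U_\R(F(m))=U_\R(F(m'))$, which is the second condition.

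The routine parts are the well-definedness of $T$ and the codomain bookkeeping. The one place deserving care, and the step I expect to be the main (if mild) obstacle, is matching the morphism \emph{labels} rather than merely the objects: the first condition is phrased as an equality of objects $F(\varphi\circ g)=F(\varphi)\circ T(g)$, so one must keep track of the fact that $F$, acting as a functor, carries the $g$-labelled arrow to the $T(g)$-labelled arrow, which is what ties the object-level equality to the identity $U_\R(F(m))=T(g)$ underlying strong invariance. Once this identification is made explicit, both directions are immediate.
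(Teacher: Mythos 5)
Your argument for the implication from the second condition to the first is correct and complete: $T(g):=U_\R(F(m))$ is well defined precisely by the second condition, preservation of codomains gives $F(\varphi\circ g)=F(\varphi)\circ T(g)$, and functoriality combined with the composition rule of Remark~\ref{remark:ppasacategory} gives $T(g\circ g')=T(g)\circ T(g')$. (The paper omits its own proof as ``trivial,'' so there is nothing to compare against on this point.)

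The reverse implication, however, has a genuine gap, located exactly at the step you yourself flagged as the delicate one and then asserted instead of proving. You claim that, given $T$ as in the first condition, the functor ``sends each morphism $g_{(\varphi,\varphi\circ g)}$ to the morphism $T(g)_{(F(\varphi),F(\varphi)\circ T(g))}$.'' This does not follow: the first condition constrains only the object map of $F$. Writing $h:=U_\R\bigl(F(g_{(\varphi,\varphi\circ g)})\bigr)$, preservation of codomains yields only the equality of functions $F(\varphi)\circ h=F(\varphi\circ g)=F(\varphi)\circ T(g)$, and this does not force $h=T(g)$ in $H$ whenever $F(\varphi)$ fails to separate labels, e.g.\ when $F(\varphi)$ is constant. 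Indeed the implication is false as literally stated: take $X=Y=\{0,1\}$, $\varPhi=\{\varphi_a\equiv 0,\varphi_b\equiv 1\}$, $\Psi=\{\psi_0\equiv 0\}$, and $G=H=\{\mathrm{id},t\}$ the swap group (all compactness and completeness hypotheses hold trivially, the relevant pseudo-distances being zero or realized on finite sets). Then $C(\varPhi,G)$ has two objects, each with endomorphism group $\{\mathrm{id},t\}$ and no morphisms between them, while $C(\Psi,H)$ has one object with endomorphism group $\{\mathrm{id},s\}$; the functor $F$ sending both objects to $\psi_0$, acting as the isomorphism $t\mapsto s$ on $\hom(\varphi_a,\varphi_a)$ and as the trivial homomorphism on $\hom(\varphi_b,\varphi_b)$, is a genuine functor satisfying the first condition with any homomorphism $T$ (all functions in $\Psi$ are constant), yet violating the second, since the two morphisms labelled $t$ have images with different labels. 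So no argument can close this gap as the statement stands: the first condition must be read as a condition on morphisms, namely $F(g_{(\varphi,\varphi\circ g)})=T(g)_{(F(\varphi),F(\varphi\circ g))}$ as in Remark~\ref{remark:ppasacategory} (which is how the paper actually associates functors with perception maps); under that reading your proof of both implications is correct and essentially immediate.
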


\section{Proofs}
\label{AppendixProofs}

\begin{theorem*}[\ref{tau=tau'}]
The topology  $\tau_{D_X}$ on $X$ induced by the pseudo-metric $D_X$ is finer than the initial topology $\tau_{\mathrm{in}}$ on  $X$ with respect to $\varPhi$.
If $\varPhi$ is totally bounded, then the topology  $\tau_{D_X}$ coincides with $\tau_{\mathrm{in}}$.
\end{theorem*}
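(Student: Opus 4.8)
The plan is to prove the two assertions separately: the inclusion $\tau_{\mathrm{in}}\subseteq\tau_{D_X}$ by a direct continuity argument (valid without any boundedness hypothesis), and the reverse inclusion, under total boundedness, by exploiting Proposition~\ref{stimaphi} to approximate $D_X$ by a finite sub-family of $\varPhi$.

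For the first assertion, that $\tau_{D_X}$ is finer than $\tau_{\mathrm{in}}$, I would observe that since $\tau_{\mathrm{in}}$ is by definition the coarsest topology on $X$ making every $\varphi\in\varPhi$ continuous, it suffices to show that each $\varphi\in\varPhi$ is continuous with respect to $\tau_{D_X}$. This is immediate from the definition of $D_X$: for any $x_1,x_2\in X$ we have $|\varphi(x_1)-\varphi(x_2)|\le \sup_{\psi\in\varPhi}|\psi(x_1)-\psi(x_2)|=D_X(x_1,x_2)$, so every $\varphi$ is $1$-Lipschitz, hence continuous, with respect to $D_X$. Since $\tau_{D_X}$ is a topology making all the functions in $\varPhi$ continuous and $\tau_{\mathrm{in}}$ is the coarsest such topology, we conclude $\tau_{\mathrm{in}}\subseteq\tau_{D_X}$.

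For the second assertion, under the hypothesis that $\varPhi$ is totally bounded, I would prove the reverse inclusion $\tau_{D_X}\subseteq\tau_{\mathrm{in}}$ by showing that every basic ball $B_X(x,\varepsilon)$ is open in $\tau_{\mathrm{in}}$. Fix such a ball and an arbitrary point $x_0\in B_X(x,\varepsilon)$; by the triangle inequality it is enough to find a $\tau_{\mathrm{in}}$-neighbourhood of $x_0$ inside the smaller ball $B_X(x_0,r)$, where $r:=\varepsilon-D_X(x,x_0)>0$. Here I would invoke Proposition~\ref{stimaphi}: choosing $\delta:=r/4$ yields a finite subset $\varPhi_\delta=\{\varphi_1,\dots,\varphi_n\}$ of $\varPhi$ with $\sup_{\varphi\in\varPhi}|\varphi(y_1)-\varphi(y_2)|<\max_{i}|\varphi_i(y_1)-\varphi_i(y_2)|+2\delta$ for all $y_1,y_2\in X$. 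I would then set
\begin{equation*}
V:=\bigcap_{i=1}^{n}\varphi_i^{-1}\bigl((\varphi_i(x_0)-r/4,\ \varphi_i(x_0)+r/4)\bigr),
\end{equation*}
which is a basic open set of $\tau_{\mathrm{in}}$ containing $x_0$. For any $x'\in V$ we have $\max_i|\varphi_i(x_0)-\varphi_i(x')|<r/4$, whence $D_X(x_0,x')<r/4+2\delta=3r/4<r$, so $V\subseteq B_X(x_0,r)\subseteq B_X(x,\varepsilon)$. As $x_0$ was arbitrary, $B_X(x,\varepsilon)$ is $\tau_{\mathrm{in}}$-open, which, combined with the first part, gives the desired equality $\tau_{D_X}=\tau_{\mathrm{in}}$.

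I expect the main obstacle to be the second assertion, and specifically the step of replacing the supremum over the entire family $\varPhi$ (which in general cannot be captured by finitely many preimages) with a maximum over a finite sub-family; this is precisely where total boundedness is indispensable and where Proposition~\ref{stimaphi} does the essential work. The first inclusion, by contrast, is a soft consequence of the $1$-Lipschitz bound and requires no boundedness assumption, which also explains why finiteness only enters in establishing the coincidence of the two topologies.
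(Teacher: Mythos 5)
Your proof is correct; it diverges from the paper's on the first assertion and essentially matches it on the second, with one improvement. For the inclusion $\tau_{\mathrm{in}}\subseteq\tau_{D_X}$ the paper argues at the level of bases: it takes a basic set $\bigcap_{i\in I}\varphi_i^{-1}(U_i)$ of $\tau_{\mathrm{in}}$, picks intervals $]a_i,b_i[$ with $\varphi_i(y)\in\,]a_i,b_i[\,\subseteq U_i$, and fits inside it the ball $B_X(y,\varepsilon)$ with $\varepsilon:=\min_{i\in I}\min\{\varphi_i(y)-a_i,b_i-\varphi_i(y)\}$; you instead observe that each $\varphi\in\varPhi$ is $1$-Lipschitz for $D_X$ and invoke the universal property of the initial topology. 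The two arguments carry the same content, but yours is shorter, makes the ``softness'' of this half transparent, and highlights that no hypothesis on $\varPhi$ is needed there. For the reverse inclusion under total boundedness, both proofs run through Proposition~\ref{stimaphi} in the same way: a finite subfamily $\varPhi_\delta$, a finite intersection of preimages of intervals centred at the values of the given point, and the $2\delta$ error estimate to recover a $D_X$-ball. Here your write-up is in fact more careful than the paper's: to prove that $B_X(x,\varepsilon)$ is $\tau_{\mathrm{in}}$-open you first pass to the smaller ball $B_X(x_0,r)$ with $r=\varepsilon-D_X(x,x_0)$ around an arbitrary point $x_0$ of the ball, which is exactly what the triangle inequality demands. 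The paper works with an arbitrary $y\in B_X(x,\varepsilon)$ but deduces $\bigcap_{\varphi_i\in\varPhi_\delta}\varphi_i^{-1}(U_i)\subseteq B_X(x,\varepsilon)$ from an inclusion into $B_X(y,\varepsilon)$, a step that is only justified after the recentering you perform (or when $y=x$); so your version quietly repairs a small slip in the published argument.
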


\begin{proof}
We know that the set $\mathcal{B}_{D_X}=\left\{B_{X}(x, \varepsilon):x\in X,\varepsilon>0\right\}$ is a base for the topology $\tau_{D_X}$ and the set $\mathcal{B}_{\mathrm{in}}=\left\{\bigcap_{i \in I}\varphi^{-1}_i( U_i ):|I|<\infty, U_i \in \mathcal{T}_E \ \forall i\in I \right\}$ is a base for the topology $\tau_{\mathrm{in}}$.

First of all we have to show that the topology  $\tau_{D_X}$ is finer than the initial topology $\tau_{\mathrm{in}}$.
Let us take a set in the base $\mathcal{B}_{\mathrm{in}}$ of $\tau_{\mathrm{in}}$, i.e. a set $\bigcap_{i\in I}\varphi_i^{-1}(U_i)$, where $I$ is a finite set of indexes and $U_i\in \mathcal{T}_E$ for every index $i\in I$.
It will be sufficient to show that for every $y\in \bigcap_{i\in I}\varphi_i^{-1}(U_i)$ a ball $B_{X}(y,\varepsilon)\in \mathcal{B}_{D_X}$ exists, such that $B_{X}(y, \varepsilon)\subseteq\bigcap_{i \in I} \varphi_{i}^{-1}(U_i)$. Since $y\in \bigcap_{i\in I}\varphi_i^{-1}(U_i)$, we have that
$\varphi_i(y)\in U_i$ for every $i\in I$. Therefore, for each $i\in I$
we can find an open interval $]a_i,b_i[$ such that $\varphi_i(y)\in ]a_i,b_i[\subseteq U_i$. Let us set $\varepsilon:=\min_{i\in I}\min\{\varphi_i(y)-a_i,b_i-\varphi_i(y)\}$, and observe that $\varepsilon>0$.
If $z\in B_{X}(y,\varepsilon)$, then $|\varphi(y)-\varphi(z)|<\varepsilon$ for every $\varphi\in\varPhi$, and in particular
$|\varphi_i(y)-\varphi_i(z)|<\varepsilon$ for every $i\in I$. Hence
the definition of $\varepsilon$ immediately implies that $\varphi_i(z)\in]a_i,b_i[$ for every $i\in I$, so that $z\in \bigcap_{i \in I} \varphi_{i}^{-1}(]a_i,b_i[)$. It follows that $B_{X}(y, \varepsilon)\subseteq \bigcap_{i \in I} \varphi_{i}^{-1}(]a_i,b_i[)\subseteq \bigcap_{i \in I} \varphi_{i}^{-1}(U_i)$. Therefore, $y\in B_{X}(y, \varepsilon)
\subseteq
\bigcap_{i \in I} \varphi_{i}^{-1}(U_i)$, and our first statement is proved.

If $\varPhi$ is totally bounded, Proposition~\ref{stimaphi} in Appendix A guarantees that for every $\delta>0$
a finite subset $\varPhi_{\delta}$ of $\varPhi$ exists such that
\begin{equation}
\label{approxineq}
\left|\sup_{\varphi \in \varPhi}|\varphi(x_1)-\varphi(x_2)| - \max_{\varphi \in \varPhi_{\delta}}|\varphi(x_1) - \varphi(x_2)|\right| < 2\delta
\end{equation}
 for every $x_1, x_2\in X$.
Let us now set $B_{\delta}(x, r):= \left\{ x' \in X \Big| \max_{\varphi_i \in \varPhi_{\delta}}|\varphi_i(x)- \varphi_i(x')|< r\right\}$ for every $x\in X$ and every $r>0$.
We have to prove that the initial topology $\tau_{\mathrm{in}}$ is finer than the topology $\tau_{D_X}$.
In order to do this, it will be sufficient to show that for every $y\in B_{X}(x, \varepsilon)\in \mathcal{B}_{D_X}$ a set $\bigcap_{i \in I} \varphi_{i}^{-1}(U_i)\in\mathcal{B}_{\mathrm{in}}$ exists, such that $y\in \bigcap_{i \in I} \varphi_{i}^{-1}(U_i)\subseteq B_{X}(x, \varepsilon)$.

Let us choose a positive $\delta$ such that $2\delta< \varepsilon$.
Inequality~(\ref{approxineq}) implies that $B_{\delta}(y, \varepsilon-2\delta)\subseteq B_{X}(y,\varepsilon)$.
We now set $U_i:=]\varphi_i(y) - \varepsilon + 2\delta, \varphi_i(y)+ \varepsilon-2\delta [$ for $i\in I$.
Obviously, $y\in \bigcap_{\varphi_i \in \varPhi_{\delta}}\varphi_i^{-1} (U_i)$.
 If $z \in \bigcap_{\varphi_i \in \varPhi_{\delta}}\varphi_i^{-1} (U_i)$, then $|\varphi_i(z)-\varphi_i(y)|<\varepsilon-2\delta$ for every $\varphi_i \in \varPhi_{\delta}$. Hence, $z \in B_{\delta}(y, \varepsilon-2\delta)$. It follows that
 $\bigcap_{\varphi_i \in \varPhi_{\delta}}\varphi_i^{-1} (U_i)\subseteq B_{\delta}(y, \varepsilon-2\delta)$.
Therefore, $y\in \bigcap_{\varphi_i \in \varPhi_{\delta}}\varphi_i^{-1} (U_i) \subseteq B_{X}(x,\varepsilon)$ because of the inclusion $B_{\delta}(y, \varepsilon-2\delta)\subseteq B_{X}(y,\varepsilon)$.
This means that $\tau_{\mathrm{in}}$ is finer than $\tau_{D_X}$.
Since we already know that $\tau_{D_X}$ is finer than $\tau_{\mathrm{in}}$, it follows that
$\tau_{D_X}$ coincides with $\tau_{\mathrm{in}}$.
\end{proof}

\begin{remark*}
The second statement of Theorem~\ref{tau=tau'} becomes false if $\varPhi$ is not totally bounded. For example, assume $\varPhi$ equal to the set of all functions from $X=[0,1]$ to $\mathbb{R}$ that are continuous with respect to the Euclidean topologies on $[0,1]$ and $\mathbb{R}$. Indeed, it is easy to check that in this case $\tau_{D_X}$ is the discrete topology, while the initial topology $\tau_{\mathrm{in}}$ is the Euclidean topology on $[0,1]$.
\end{remark*}

\begin{remark*}
The pseudo-metric space $(X,D_X)$ may not be a $T_0$-space. For example, this happens if $X$ is a space containing at least two points and $\varPhi$ is the set of all the constant functions from $X$ to $[0,1]$.
\end{remark*}


\begin{theorem*}[\ref{Xcomplete}]
If $\varPhi$ is compact and $X$ is complete then $X$ is also compact.
\end{theorem*}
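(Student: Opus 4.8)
The plan is to deduce compactness from total boundedness. A pseudo-metric space that is complete and totally bounded is compact: passing to the Kolmogorov quotient $\bar X=X/R$ described in \ref{psm}, total boundedness transfers to $\bar X$ (the projection $\pi$ preserves distances, so nets map to nets), and so does completeness (a Cauchy sequence in $\bar X$ lifts through chosen preimages to a Cauchy sequence in $X$, which converges and projects back). Hence the genuine metric space $\bar X$ is complete and totally bounded, therefore compact, and then $X=\pi^{-1}(\bar X)$ is compact by the observation in \ref{psm} that $\pi^{-1}$ carries compact sets to compact sets. Since $X$ is assumed complete, it thus suffices to prove that $(X,D_X)$ is totally bounded.

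To prove total boundedness I would exploit the compactness---hence total boundedness---of $\varPhi$ together with Proposition~\ref{stimaphi}. Fix $\varepsilon>0$ and choose $\delta>0$ with $2\delta<\varepsilon/2$. Since $\varPhi$ is totally bounded there is a finite $\delta$-net $\varPhi_\delta=\{\varphi_1,\dots,\varphi_n\}$, and Proposition~\ref{stimaphi} gives
\begin{equation}
D_X(x,x')=\sup_{\varphi\in\varPhi}|\varphi(x)-\varphi(x')|<\max_{1\le i\le n}|\varphi_i(x)-\varphi_i(x')|+2\delta
\end{equation}
for all $x,x'\in X$. Consider the map $\Psi\colon X\to\R^n$, $\Psi(x)=(\varphi_1(x),\dots,\varphi_n(x))$, where $\R^n$ carries the $\ell^\infty$-distance $d_\infty$; the right-hand maximum above is exactly $d_\infty(\Psi(x),\Psi(x'))$. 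Because $\varPhi$ is bounded, say $\|\varphi\|_\infty\le L$ for all $\varphi\in\varPhi$, the image $\Psi(X)$ lies in the cube $[-L,L]^n$, which is totally bounded for $d_\infty$. Hence $\Psi(X)$ admits a finite $(\varepsilon/2)$-net whose centres may be taken inside $\Psi(X)$, say $\Psi(x_1),\dots,\Psi(x_m)$ with $x_j\in X$.

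Now given any $x\in X$, pick $j$ with $d_\infty(\Psi(x),\Psi(x_j))<\varepsilon/2$; then $\max_i|\varphi_i(x)-\varphi_i(x_j)|<\varepsilon/2$, and the displayed inequality yields $D_X(x,x_j)<\varepsilon/2+2\delta<\varepsilon$. Thus $\{x_1,\dots,x_m\}$ is a finite $\varepsilon$-net of $(X,D_X)$, proving total boundedness and completing the argument.

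The only delicate point is the passage from the genuine supremum defining $D_X$ to the finite maximum over the net $\varPhi_\delta$, which is precisely the estimate furnished by Proposition~\ref{stimaphi}; everything else is routine net bookkeeping together with the standard ``complete $+$ totally bounded $\Rightarrow$ compact'' principle. As an alternative to the net argument, one may observe that $x\mapsto(\varphi\mapsto\varphi(x))$ is an isometric embedding of $(X,D_X)$ into $C(\varPhi)$ with the sup-norm, and that its image is uniformly bounded and uniformly $1$-Lipschitz, hence relatively compact by the Arzel\`a--Ascoli theorem (here the compactness of $\varPhi$ is exactly what makes the common domain of these functions compact); total boundedness of $X$ follows at once. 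Either route isolates the compactness of $\varPhi$ as the essential ingredient.
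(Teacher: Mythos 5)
Your proof is correct, but it takes a genuinely different route from the paper's. The paper proves sequential compactness directly: given an arbitrary sequence $(x_i)$ in $X$, it fixes a finite $\varepsilon$-net $\varPhi_\varepsilon=\{\varphi_1,\dots,\varphi_n\}$ of the compact space $\varPhi$, applies Bolzano--Weierstrass repeatedly to the bounded real sequences $\big(\varphi_k(x_i)\big)$ to extract a single subsequence $(x_{i_j})$ along which every $\varphi_k$ converges, and then runs the three-term triangle inequality through the net to conclude that $(x_{i_j})$ is Cauchy in $(X,D_X)$; completeness gives convergence, and sequential compactness coincides with compactness for pseudo-metric spaces. You instead isolate \emph{total boundedness} of $(X,D_X)$ as the key property: Proposition~\ref{stimaphi} replaces $D_X$, up to an error of $2\delta$, by the $\ell^\infty$-distance between the evaluation vectors $(\varphi_1(x),\dots,\varphi_n(x))$, and the total boundedness of the cube $[-L,L]^n$ is pulled back to $X$; compactness then follows from the standard principle that complete plus totally bounded implies compact. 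Both arguments rest on the same two ingredients (a finite net of $\varPhi$ and a three-term estimate), but your decomposition is more modular and yields a slightly stronger structural statement: $X$ is totally bounded whenever $\varPhi$ is compact, with no completeness assumption, completeness entering only through the final general principle. It also quietly repairs a small imprecision in the paper's write-up, where the extracted subsequence is shown to satisfy $D_X(x_{i_r},x_{i_s})\le 3\varepsilon$ for one fixed $\varepsilon$ only, so that a further diagonalization over $\varepsilon\to 0$ is implicitly needed to obtain a genuine Cauchy subsequence; in your proof that diagonalization is absorbed into the cited theorem. Two minor remarks: the detour through the Kolmogorov quotient is unnecessary, since ``complete and totally bounded implies compact'' holds verbatim for pseudo-metric spaces (the paper itself argues directly in the pseudo-metric setting, and the quotient machinery of Subsection~\ref{psm} adds nothing here); and your Arzel\`a--Ascoli alternative, via the isometric embedding $x\mapsto(\varphi\mapsto\varphi(x))$ into $C(\varPhi)$, is a clean repackaging of the same estimate, with the uniform $1$-Lipschitz bound playing the role of equicontinuity and the compactness of $\varPhi$ entering exactly where it must.
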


\begin{proof}
First of all we want to prove that every sequence $(x_i)$ in $X$ admits a Cauchy subsequence in $X$. After that, the statement follows immediately because every Cauchy sequence in a complete space is convergent, so that $X$ is sequentially compact, and hence compact, since $X$ is a pseudo-metric space~\cite{Ga64}.

Let us consider an arbitrary sequence $(x_i)$ in $X$ and an arbitrarily small $\varepsilon > 0$. Since $\varPhi$ is compact, we can find a finite subset $\varPhi_{\varepsilon} = \{ \varphi_1, \dots , \varphi_n \}$ such that $\varPhi = \bigcup_{i=1}^n B_{\varPhi}(\varphi_i, \varepsilon)$, where
$B_{\varPhi}(\varphi,\varepsilon)=\{\varphi' \in \varPhi : D_\varPhi(\varphi',\varphi) < \varepsilon \}$.
In particular, we can say that for any $\varphi \in \varPhi$ there exists $\varphi_{\bar k} \in \varPhi_\varepsilon$ such that $\|\varphi - \varphi_{\bar k}\|_{\infty} < \varepsilon$. Now, we consider the real sequence $\varphi_1(x_i)$ that is bounded because all the functions in $\varPhi$ are bounded. From Bolzano-Weierstrass Theorem it follows that we can extract a convergent subsequence $\varphi_1(x_{i_h})$. Then we consider the sequence $\varphi_2(x_{i_h})$. Since $\varphi_2$ is bounded, we can extract a convergent subsequence $\varphi_2(x_{i_{h_t}})$. We can repeat the same argument for any $\varphi_k \in \varPhi_{\varepsilon}$. Thus, we obtain a subsequence $(x_{i_{j}})$ of $(x_i)$, such that $\varphi_k(x_{i_j})$ is a real convergent sequence for any $k \in \{1, \dots, n \}$, and hence a Cauchy sequence in $\mathbb{R}$. Moreover, since $\varPhi_{\varepsilon}$ is a finite set, there exists an index $\bar{\jmath}$ such that for any $k \in \{1, \dots, n \}$ we have that

\begin{equation}
|\varphi_k(x_{i_r}) - \varphi_k(x_{i_s})| < \varepsilon, \ \ \forall \ r,s \geq \bar{\jmath}.
\end{equation}

We observe that $\bar{\jmath}$ does not depend on $\varphi$, but only on $\varepsilon$ and $\varPhi_{\varepsilon}$.

In order to prove that $(x_{i_{j}})$ is a Cauchy sequence in $X$, we observe that for any $r,s \in \mathbb{N}$ and any $\varphi \in \varPhi$ we have:

\begin{align}
|\varphi(x_{i_r}) - \varphi(x_{i_s})| &= |\varphi(x_{i_r}) - \varphi_k(x_{i_r}) + \varphi_k(x_{i_r}) - \varphi_k(x_{i_s}) + \varphi_k(x_{i_s}) - \varphi(x_{i_s})| \nonumber\\
& \leq  |\varphi(x_{i_r}) - \varphi_k(x_{i_r})| + |\varphi_k(x_{i_r}) - \varphi_k(x_{i_s})| + |\varphi_k(x_{i_s}) - \varphi(x_{i_s})|\\
& \leq  \|\varphi - \varphi_k\|_{\infty} + |\varphi_k(x_{i_r}) - \varphi_k(x_{i_s})| + \|\varphi_k - \varphi\|_{\infty}.  \nonumber
\end{align}

It follows that $|\varphi(x_{i_r}) - \varphi(x_{i_s})| < 3\varepsilon$ for every $\varphi \in \varPhi$ and every $r,s \geq \bar{\jmath}$.
Thus, $\sup_{\varphi \in \varPhi}|\varphi(x_{i_r}) - \varphi(x_{i_s})| = D_X(x_{i_r},x_{i_s}) \le 3\varepsilon$. Hence, the sequence $(x_{i_j})$ is a Cauchy sequence in $X$. The completeness of $X$ implies that the statement of Theorem~\ref{Xcomplete} is true.
\end{proof}

\begin{example*}
Let $\varPhi$ be the set containing all the $1$-Lipschitz functions from $X=\{(x,y)\in\R^3:x^2+y^2=1,\arcsin(x)\in \mathbb{Q}\}$ to $[0,1]$, and $G$ be the group of all rotations $\rho_{2\pi q}$ of $2\pi q$ radians with $q\in \mathbb{Q}$. The topological space $X$ is neither complete nor compact.
\end{example*}


\begin{proposition*}[\ref{propgisometry}]
If $g$ is a bijection from $X$ to $X$ such that $\varphi\circ g\in\varPhi$ and $\varphi\circ g^{-1}\in\varPhi$ for every $\varphi\in\varPhi$, then $g$ is an isometry (and hence a homeomorphism) with respect to $D_X$.
\end{proposition*}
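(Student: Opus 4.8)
The plan is to show directly that $g$ preserves the pseudo-metric $D_X$, i.e.\ that $D_X(g(x_1),g(x_2)) = D_X(x_1,x_2)$ for every $x_1,x_2 \in X$. By the footnote characterization accompanying the statement, a distance-preserving bijection between pseudo-metric spaces is an isometry, and every isometry is automatically a homeomorphism with isometric inverse, so this single identity yields the whole conclusion.

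First I would unfold the definition of $D_X$ at the transformed points and rewrite the composition that appears inside the supremum:
\[
D_X(g(x_1),g(x_2)) = \sup_{\varphi\in\varPhi}\,|\varphi(g(x_1))-\varphi(g(x_2))| = \sup_{\varphi\in\varPhi}\,|(\varphi\circ g)(x_1)-(\varphi\circ g)(x_2)|.
\]

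The key step is to recognize that this supremum is unchanged if reindexed over $\varPhi$ itself. I would consider the map $R_g\colon\varPhi\to\varPhi$, $R_g(\varphi):=\varphi\circ g$, which is well defined precisely because $\varphi\circ g\in\varPhi$ by hypothesis, and argue that it is a bijection with inverse $\psi\mapsto\psi\circ g^{-1}$: the latter sends $\varPhi$ into $\varPhi$ because $\psi\circ g^{-1}\in\varPhi$ by hypothesis, while $(\psi\circ g^{-1})\circ g=\psi$ and $(\varphi\circ g)\circ g^{-1}=\varphi$ since $g$ is a bijection of $X$. Consequently $\{\varphi\circ g : \varphi\in\varPhi\}=\varPhi$, so substituting $\psi=\varphi\circ g$ in the displayed supremum gives
\[
\sup_{\varphi\in\varPhi}\,|(\varphi\circ g)(x_1)-(\varphi\circ g)(x_2)| = \sup_{\psi\in\varPhi}\,|\psi(x_1)-\psi(x_2)| = D_X(x_1,x_2),
\]
which is exactly the isometry identity.

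I expect no genuine obstacle here; the only point demanding care is the bijectivity of right composition by $g$ on $\varPhi$, and this is where both hypotheses are really used. Dropping either $\varphi\circ g\in\varPhi$ or $\varphi\circ g^{-1}\in\varPhi$ would leave only one inclusion between the two index sets, and hence only an inequality between the suprema, rather than the equality needed for an isometry. Once $D_X(g(x_1),g(x_2))=D_X(x_1,x_2)$ is established, invoking the footnote completes the proof.
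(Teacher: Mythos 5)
Your proof is correct and takes essentially the same route as the paper: both arguments hinge on showing that right composition $R_g(\varphi)=\varphi\circ g$ maps $\varPhi$ onto $\varPhi$ (the paper deduces surjectivity from $\varphi=R_g\left(R_{g^{-1}}(\varphi)\right)$, you exhibit $\psi\mapsto\psi\circ g^{-1}$ as a two-sided inverse, which is the same use of the two hypotheses) and then reindex the supremum defining $D_X(g(x_1),g(x_2))$ to obtain $D_X(x_1,x_2)$. Your closing remark about why both inclusions are needed is a nice addition, though the paper technically only needs surjectivity of $R_g$ for the reindexing step.
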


\begin{proof}
Let us fix two arbitrary points $x,x'$ in $X$. Obviously, the map $R_g:\varPhi\to\varPhi$ taking each function $\varphi$ to $\varphi\circ g$ is surjective, since $\varphi=R_g\left(R_{g^{-1}}(\varphi)\right)$.  Hence $R_g(\varPhi)=\varPhi$. Therefore, $g$ preserves the pseudo-distance $D_X$:
\begin{align}
D_X(g(x),g(x')) &= \sup_{\varphi \in \varPhi}|\varphi(g(x))-\varphi(g(x'))| \nonumber \\
&= \sup_{\varphi \in \varPhi}|(\varphi \circ g)(x)-(\varphi \circ g)(x')|\\
&= \sup_{\varphi \in R_g(\varPhi)}|\varphi(x)-\varphi(x')|\\
&= \sup_{\varphi\in \varPhi}|\varphi(x)-\varphi(x')|= D_X(x,x'). \nonumber
\end{align}
Since $g$ is bijective, it follows that $g$ is an isometry with respect to $D_X$.
\end{proof}


\begin{theorem*}[\ref{thmRgcontinuous}]
$G$ is a topological group with respect to the pseudo-metric topology and the action of $G$ on $\varPhi$ through right composition is continuous.
\end{theorem*}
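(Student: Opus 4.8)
The plan is to reduce the whole statement to one structural fact: the pseudo-metric $D_G$ is \emph{bi-invariant}. Once this is in place, continuity of composition and of inversion become one-line consequences of the triangle inequality, and continuity of the action reduces to a single two-term estimate. The one ingredient I would invoke is Proposition~\ref{propgisometry}, which tells us that every element of $G\subseteq\Homeo_\varPhi(X)$ is a \emph{bijective isometry} of $(X,D_X)$; I would also use the alternative form $D_G(g_1,g_2)=\sup_{x\in X}D_X(g_1(x),g_2(x))$ recorded in the Remark following the definition of $D_G$. Right-invariance follows because each $h\in G$ is a bijection of $X$:
\[
D_G(g_1 h,g_2 h)=\sup_{x}D_X(g_1(h(x)),g_2(h(x)))=\sup_{y}D_X(g_1(y),g_2(y))=D_G(g_1,g_2),
\]
while left-invariance follows because each $h\in G$ is a $D_X$-isometry:
\[
D_G(h g_1,h g_2)=\sup_{x}D_X(h(g_1(x)),h(g_2(x)))=\sup_{x}D_X(g_1(x),g_2(x))=D_G(g_1,g_2).
\]

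With bi-invariance in hand, I would obtain continuity of the group operations for free. For multiplication, a single triangle inequality combined with the two identities gives
\[
D_G(g_1 g_2,\bar g_1\bar g_2)\le D_G(g_1 g_2,\bar g_1 g_2)+D_G(\bar g_1 g_2,\bar g_1\bar g_2)=D_G(g_1,\bar g_1)+D_G(g_2,\bar g_2),
\]
so the product map $G\times G\to G$ is jointly continuous (indeed $1$-Lipschitz for the sum metric). For inversion, bi-invariance yields
\[
D_G(g^{-1},\bar g^{-1})=D_G(\bar g g^{-1},\mathrm{Id})=D_G(\bar g,g)=D_G(g,\bar g),
\]
so $g\mapsto g^{-1}$ is in fact a $D_G$-isometry, hence continuous. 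Together these show that $G$ is a topological group.

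It remains to prove that the action $(g,\varphi)\mapsto\varphi\circ g$ from $G\times\varPhi$ to $\varPhi$ is continuous. For $(g,\varphi)$ near $(\bar g,\bar\varphi)$ I would split
\[
D_\varPhi(\varphi\circ g,\bar\varphi\circ\bar g)\le D_\varPhi(\varphi\circ g,\bar\varphi\circ g)+D_\varPhi(\bar\varphi\circ g,\bar\varphi\circ\bar g).
\]
The first term equals $D_\varPhi(\varphi,\bar\varphi)$ because $g$ is a bijection, so the supremum over $g(x)$ is the supremum over $x$. The second term is the crux: since $\bar\varphi\in\varPhi$, for each $x$ we have $|\bar\varphi(g(x))-\bar\varphi(\bar g(x))|\le\sup_{\varphi'\in\varPhi}|\varphi'(g(x))-\varphi'(\bar g(x))|=D_X(g(x),\bar g(x))$, and taking the supremum over $x$ gives $D_\varPhi(\bar\varphi\circ g,\bar\varphi\circ\bar g)\le D_G(g,\bar g)$. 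Combining, $D_\varPhi(\varphi\circ g,\bar\varphi\circ\bar g)\le D_\varPhi(\varphi,\bar\varphi)+D_G(g,\bar g)\to 0$, which is exactly joint continuity.

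The only genuinely delicate step is this last estimate, bounding $\|\bar\varphi\circ g-\bar\varphi\circ\bar g\|_\infty$ by $D_G(g,\bar g)$: it works precisely because $D_G$ is defined as a supremum over \emph{all} of $\varPhi$, which in particular contains the fixed function $\bar\varphi$, so the displacement of $\bar\varphi$ is dominated by the worst-case displacement measured by $D_G$. Everything else is formal manipulation of the invariance identities, so I would expect no further obstacles.
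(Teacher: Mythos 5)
Your proof is correct, and it takes a genuinely different route from the paper's. The paper argues sequentially: it fixes convergent sequences $f_i \to f$, $g_i \to g$ in $G$ and shows $g_i \circ f_i \to g \circ f$ by combining the compactness of $\varPhi$ (through the finite $\delta$-net of Proposition~\ref{stimaphi}) with the uniform continuity of each $\varphi \circ g$ on the compact space $X$; for inversion it runs a proof by contradiction, extracting convergent subsequences $\varphi_j \to \bar{\varphi}$ and $\varphi_j \circ f_{i_j}^{-1} \to \hat{\varphi}$ (again via compactness of $\varPhi$) to contradict the injectivity of $R_f$. Your bi-invariance observation bypasses all of this machinery: right-invariance of $D_G$ needs only that each $h \in G$ is a bijection of $X$, left-invariance needs only Proposition~\ref{propgisometry} (each $h \in G$ is a $D_X$-isometry), and then continuity of composition is a $1$-Lipschitz estimate while inversion is literally a $D_G$-isometry, both by pure triangle-inequality algebra. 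The notable consequence is that your argument never invokes the standing hypotheses that $\varPhi$ is compact and $X$ is complete, so it establishes the topological-group part of the statement in strictly greater generality, and it yields quantitative (Lipschitz/isometric) information that the sequential argument does not; what the paper's approach buys is mainly stylistic uniformity with its other compactness proofs (Theorems~\ref{Xcomplete}, \ref{thmGcompact}, \ref{t17}), which reuse the same subsequence-extraction template. The final part on the action is essentially identical in the two proofs: the paper splits $D_\varPhi(\varphi\circ f,\psi\circ g) \le D_G(f,g)+D_\varPhi(\varphi,\psi)$, which is exactly your two-term estimate with the roles of the terms interchanged.
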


\begin{proof}
It will suffice to prove that if $f=\lim_{i \to +\infty} f_i$ and $g=\lim_{i \to +\infty} g_i$ in $G$ with respect to the pseudo-metric $D_G$,
then $g  \circ f=\lim_{i \to +\infty} g_i \circ f_i$ and $f^{-1}=\lim_{i \to +\infty} f_i^{-1}$.

Because of the compactness of $\varPhi$ and Proposition~\ref{stimaphi}, for every $\delta>0$ we can take a finite subset $\varPhi_{\delta}$ of $\varPhi$ such that
\begin{equation*}
    \left|\sup_{\varphi \in \varPhi}|\varphi(x_1)-\varphi(x_2)| - \max_{\varphi \in \varPhi_{\delta}}|\varphi(x_1) - \varphi(x_2)|\right| < 2\delta
\end{equation*}
for every $x_1, x_2\in X$.
We have that
\begin{align}
&D_G(g_i \circ f_i,g \circ f)  \le D_G(g_i \circ f_i,g \circ f_i)+  D_G(g \circ f_i,g \circ f)=\nonumber
 \\& = \sup_{\varphi \in \varPhi}\left \|\varphi \circ( g_i \circ f_i) - \varphi \circ (g \circ f_i)  \right \|_{\infty} + \sup_{\varphi \in \varPhi} \left \|\varphi \circ( g \circ f_i) - \varphi \circ (g \circ f)  \right \|_{\infty}
 \\& = \sup_{\varphi \in \varPhi}\ \sup_{x \in X} | \varphi(g_i(f_i(x))- \varphi(g(f_i(x))| + \sup_{\varphi \in \varPhi}\ \sup_{x \in X} | \varphi(g(f_i(x))- \varphi(g(f(x))|\nonumber
 \\&= \sup_{\varphi \in \varPhi}\ \sup_{y \in X} | \varphi (g_i (y))- \varphi (g(y))| + \sup_{\varphi \in \varPhi}\ \sup_{x \in X} | \varphi (g(f_i(x))- \varphi (g(f(x))|\nonumber
\\ & < D_G(g_i,g) + \max_{\varphi \in \varPhi_\delta }\ \sup_{x \in X} | \varphi (g(f_i(x))- \varphi (g(f(x))| + 2\delta.
\end{align}
Since $g=\lim_{i \to +\infty} g_i$,
$\lim_{i \to +\infty} D_G(g_i,g)=0$.
Because of Theorem~\ref{Xcomplete}, $X$ is compact and hence $\varphi\circ g:X\to \R$ is a uniformly continuous function.
Since $f=\lim_{i \to +\infty} f_i$,
it follows that $\lim_{i \to +\infty} \sup_{x \in X} | \varphi (g(f_i(x))- \varphi (g(f(x))|=0$ for every $\varphi\in\varPhi_\delta$, and hence
$\lim_{i \to +\infty} \max_{\varphi \in \varPhi_\delta}\ \sup_{x \in X} | \varphi (g(f_i(x))- \varphi (g(f(x))|=0$.
Given that $\delta$ can be taken arbitrarily small, we get $g  \circ f=\lim_{i \to +\infty} g_i \circ f_i$.

We also want to prove that $f^{-1}=\lim_{i \to +\infty} f_i^{-1}$.
By contradiction, if we had not that $\lim_{i \to \infty} D_G(f_i^{-1},f^{-1})=0$, then there would exist a constant $c>0$ and a subsequence $(f_{i_j})$ of $(f_i)$
such that $D_G(f_{i_j}^{-1},f^{-1}) \geq c >0$ for every index $j$. However, we should still have $\lim_{j \to \infty} D_G(f_{i_j},f)=0$ because $(f_{i_j})$ is a subsequence of $(f_i)$.
Since $D_G(f_{i_j}^{-1},f^{-1}) \geq c >0$ for every index $j$, a $\varphi_j \in \varPhi$ should exist such that $\|\varphi_j \circ f_{i_j}^{-1}- \varphi_j \circ f^{-1}\|_{\infty} \geq c >0$.

Because of the compactness of $\varPhi$, it would not be restrictive to assume (possibly by considering subsequences) the existence of the following limits: $\bar{\varphi}= \lim_{j \to \infty} \varphi_j$ and $\hat{\varphi}=\lim_{j \to \infty} \varphi_j \circ f_{i_j}^{-1}$. We would have that
\begin{align}
 D_\varPhi(\hat{\varphi},\bar{\varphi} \circ f^{-1}) & =D_\varPhi(\lim_{j \to \infty} \varphi_j \circ f_{i_j}^{-1},\lim_{j \to \infty} \varphi_j \circ f^{-1})\nonumber\\
& = \lim_{j \to \infty} D_\varPhi(\varphi_j \circ f_{i_j}^{-1},\varphi_j \circ f^{-1})\geq c > 0
\end{align}
so that $\hat{\varphi}\neq \bar{\varphi} \circ f^{-1}$.\\
On the other hand, we should have
\begin{align}
 D_\varPhi(\hat{\varphi} \circ f,\bar{\varphi}) & =D_\varPhi((\lim_{j \to \infty} \varphi_j \circ f_{i_j}^{-1}) \circ f,\lim_{j \to \infty} \varphi_j)\nonumber\\
& = \lim_{j \to \infty} D_\varPhi((\varphi_j \circ f_{i_j}^{-1}) \circ f,(\varphi_j \circ f_{i_j}^{-1})\circ f_{i_j})\\
& \le \lim_{j \to \infty} D_G(f_{i_j},f)=0\nonumber
\end{align}
so that $\hat{\varphi} \circ f= \bar{\varphi}$.

It follows that $R_f$ is not injective, against our assumptions.

This contradiction proves that $\lim_{i \to \infty}f_i^{-1}= f^{-1}$.

Therefore, $G$ is a topological group.

Let now $\varepsilon$ be a positive real number. If $D_\varPhi(\varphi,\psi),D_G(f,g) < \delta:=\varepsilon/2$ then

\begin{align}
D_\varPhi(\varphi\circ f,\psi\circ g)
& \le D_\varPhi(\varphi\circ f,\varphi\circ g) +D_\varPhi(\varphi\circ g,\psi\circ g)\nonumber\\
& = D_\varPhi(\varphi\circ f,\varphi\circ g) +D_\varPhi(\varphi,\psi)\\
& \le D_G(f,g)+D_\varPhi(\varphi,\psi) < \varepsilon/2+\varepsilon/2=\varepsilon.\nonumber
\end{align}
This proves that the action of $G$ on $\varPhi$ through right composition is continuous.
\end{proof}


\begin{theorem*}[\ref{thmGcompact}]
If $G$ is complete then it is also compact with respect to $D_G$.
\end{theorem*}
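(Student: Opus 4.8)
The plan is to exploit the fact that, being a pseudo-metric space, $G$ is compact as soon as it is sequentially compact, and that in a complete space sequential compactness follows once we know that every sequence admits a Cauchy subsequence. So it suffices to prove that an arbitrary sequence $(g_i)$ in $G$ has a subsequence that is Cauchy with respect to $D_G$; completeness of $G$ will then force this subsequence to converge in $G$. Two facts do the heavy lifting: first, by Theorem~\ref{Xcomplete} the space $(X,D_X)$ is compact (hence totally bounded and separable); second, by Proposition~\ref{propgisometry} every $g\in G$ is an isometry of $(X,D_X)$, so the family $\{g_i\}$ is uniformly equicontinuous. The argument is thus an Arzel\`a--Ascoli type extraction adapted to the pseudo-metric setting, closely paralleling the proof of Theorem~\ref{Xcomplete}.

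First I would fix a countable dense subset $\{x_1,x_2,\dots\}$ of $X$, which exists because $X$ is compact and therefore totally bounded. For each fixed $n$, sequential compactness of $X$ lets me extract from $(g_i(x_n))$ a subsequence converging in $X$. Applying this successively and passing to the diagonal, I obtain a single subsequence $(g_{i_j})$ of $(g_i)$ such that $(g_{i_j}(x_n))$ converges in $X$ for every $n$. At this stage I only control the behaviour of the maps on a dense set.

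The decisive step is to upgrade this pointwise control on a dense set to uniform control, i.e.\ to a bound on $D_G(g_{i_r},g_{i_s})=\sup_{x\in X}D_X(g_{i_r}(x),g_{i_s}(x))$; this is exactly where equicontinuity is needed and is the main obstacle. Given $\varepsilon>0$, I would use total boundedness of $X$ to pick a finite subset $\{x_{n_1},\dots,x_{n_m}\}$ of the dense set that is an $\varepsilon$-net. Since the finitely many sequences $(g_{i_j}(x_{n_l}))$ converge, there is an index $J$ with $D_X\left(g_{i_r}(x_{n_l}),g_{i_s}(x_{n_l})\right)<\varepsilon$ for all $l$ and all $r,s\ge J$. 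For an arbitrary $x\in X$, choosing $x_{n_l}$ with $D_X(x,x_{n_l})<\varepsilon$ and using that each $g_{i_j}$ preserves $D_X$ gives
\begin{equation*}
D_X\left(g_{i_r}(x),g_{i_s}(x)\right)\le D_X\left(g_{i_r}(x),g_{i_r}(x_{n_l})\right)+D_X\left(g_{i_r}(x_{n_l}),g_{i_s}(x_{n_l})\right)+D_X\left(g_{i_s}(x_{n_l}),g_{i_s}(x)\right)<3\varepsilon,
\end{equation*}
where the two outer terms equal $D_X(x,x_{n_l})<\varepsilon$ by the isometry property of Proposition~\ref{propgisometry}.

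Taking the supremum over $x\in X$ then yields $D_G(g_{i_r},g_{i_s})\le 3\varepsilon$ for all $r,s\ge J$, so $(g_{i_j})$ is a Cauchy sequence in $G$. By the assumed completeness of $G$, it converges to some element of $G$. Hence every sequence in $G$ has a convergent subsequence, i.e.\ $G$ is sequentially compact, and therefore compact, since $G$ is a pseudo-metric space.
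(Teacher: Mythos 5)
Your proof is correct, but it runs on the opposite side of a duality from the paper's own argument: you work in the measurement space $(X,D_X)$, while the paper works in the function space $(\varPhi,D_\varPhi)$. The paper fixes $\varepsilon$, takes a finite $\varepsilon$-net $\varPhi_\varepsilon=\{\varphi_1,\dots,\varphi_n\}$ of the compact space $\varPhi$, extracts a subsequence $(g_{i_j})$ so that each of the finitely many sequences $(\varphi_k\circ g_{i_j})$ converges in $\varPhi$, and then upgrades this to a bound on $D_G$ using that right composition $R_g$ is an isometry of $(\varPhi,D_\varPhi)$ (Remark~\ref{remRgisometry}); note that this argument needs only the compactness of $\varPhi$ and never invokes completeness or compactness of $X$. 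You instead rely on the compactness of $X$ (via Theorem~\ref{Xcomplete} together with the standing assumption that $X$ is complete), on the fact that each $g\in G$ is an isometry of $(X,D_X)$ (Proposition~\ref{propgisometry}), and on the identity $D_G(g_1,g_2)=\sup_{x\in X}D_X(g_1(x),g_2(x))$ stated in the remark following the definition of $D_G$, carrying out the Arzel\`a--Ascoli extraction over a countable dense subset of $X$. Both routes are legitimate under the paper's standing hypotheses; the paper's is more economical in what it assumes (it would survive dropping the completeness of $X$), while yours is tighter on one technical point: your diagonal subsequence is chosen once, independently of $\varepsilon$, so the $\varepsilon$-net step applies to it for every $\varepsilon$ simultaneously and it is genuinely Cauchy, whereas the paper's subsequence is adapted to a fixed $\varepsilon$ and is only shown to be eventually $3\varepsilon$-close, leaving implicit a further diagonalization over $\varepsilon\to 0$.
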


\begin{proof}
We want to show that $G$ is sequentially compact, and hence compact.
Let $(g_i)$ be a sequence in $G$ and take a real number $\varepsilon > 0$. Given that $\varPhi$ is compact, we can find a finite subset $\varPhi_{\varepsilon} = \{ \varphi_1, \dots , \varphi_n \}$ such that for every $\varphi\in\varPhi$ there exists $\varphi_h \in \varPhi_\varepsilon$ for which $D_\varPhi(\varphi_h,\varphi) < \varepsilon$. For any fixed $k \in \{1, \dots , n\}$, let us consider the sequence $(\varphi_k \circ g_i)$ in $\varPhi$. Applying the same argument as in the proof of Theorem~\ref{Xcomplete}, we can extract a subsequence $(g_{i_j})$ of $(g_i)$ such that $(\varphi_k \circ g_{i_j})$ converges in $\varPhi$ with respect to $D_\varPhi$ and hence it is a Cauchy sequence for any $k \in \{1, \dots , n\}$. For the finiteness of set $\varPhi_\varepsilon$, we can find an index $\bar{\jmath}$ such that

\begin{equation}
D_\varPhi(\varphi_k \circ g_{i_r},\varphi_k \circ g_{i_s}) < \varepsilon, \ \textnormal{for every} \ s,r \geq \bar{\jmath}.
\end{equation}

In order to prove that $(g_{i_{j}})$ is a Cauchy sequence, we observe that for any $\varphi\in\varPhi$, any $\varphi_k\in\varPhi_\varepsilon$, and any $r,s \in \mathbb{N}$ we have
\begin{align}
& D_\varPhi(\varphi \circ g_{i_r},\varphi \circ g_{i_s})) \nonumber \\
& \leq D_\varPhi(\varphi \circ g_{i_r},\varphi_k \circ g_{i_r}) + D_\varPhi(\varphi_k \circ g_{i_r},\varphi_k \circ g_{i_s}) + D_\varPhi(\varphi_k \circ g_{i_s},\varphi \circ g_{i_s}) \\
& =  D_\varPhi(\varphi,\varphi_k) + D_\varPhi(\varphi_k \circ g_{i_r},\varphi_k \circ g_{i_s}) + D_\varPhi(\varphi_k, \varphi).\nonumber
\end{align}

We observe that $\bar{\jmath}$ does not depend on $\varphi$, but only on $\varepsilon$ and $\varPhi_{\varepsilon}$. By choosing a $\varphi_k\in\varPhi_\varepsilon$ such that $D_\varPhi(\varphi_k,\varphi) < \varepsilon$, we get $D_\varPhi(\varphi \circ g_{i_r},\varphi \circ g_{i_s})) < \varepsilon$ for every $\varphi \in \varPhi$ and every $r,s \geq \bar{\jmath}$.
Thus, $D_G(g_{i_r},g_{i_s}) < 3\varepsilon$. Hence, the sequence $(g_{i_j})$ is a Cauchy sequence. Finally, given that $G$ is complete, $(g_{i_j})$ is convergent. Therefore, $G$ is sequentially compact.
\end{proof}

\begin{example*}
Let $\varPhi$ be the set containing all the $1$-Lipschitz functions from $X=S^1=\{(x,y)\in\R^3:x^2+y^2=1\}$ to $[0,1]$, and $G$ be the group of all rotations $\rho_{2\pi q}$ of $X$ of $2\pi q$ radians with $q$ rational number. The space $(G,D_G)$ is neither complete nor compact.
\end{example*}


\begin{proposition*}[\ref{thmContraction}]
If $F$ is a GENEO from $(\varPhi, G)$ to $(\varPsi, H)$ associated with $T:G\to H$, then it is a contraction with respect to the natural pseudo-distances $d_G$, $d_H$.
\end{proposition*}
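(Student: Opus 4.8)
The plan is to establish directly the inequality $d_H(F(\varphi_1),F(\varphi_2))\le d_G(\varphi_1,\varphi_2)$ for every $\varphi_1,\varphi_2\in\varPhi$, which is exactly the assertion that $F$ is a contraction between the natural pseudo-distances $d_G$ and $d_H$. The strategy is to bound the infimum defining $d_H$ from above by restricting the homeomorphisms of $H$ over which it ranges to those of the form $T(g)$ with $g\in G$, and then to exploit the equivariance and non-expansivity of $F$.

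Concretely, I would first fix an arbitrary $g\in G$ and use $T(g)\in H$ as a particular competitor in the infimum $\inf_{h\in H}D_\varPsi(F(\varphi_1),F(\varphi_2)\circ h)$ that defines $d_H(F(\varphi_1),F(\varphi_2))$. This immediately yields
\[
d_H(F(\varphi_1),F(\varphi_2))\le D_\varPsi\bigl(F(\varphi_1),F(\varphi_2)\circ T(g)\bigr).
\]
Next I would invoke the equivariance relation $F(\varphi_2\circ g)=F(\varphi_2)\circ T(g)$ to rewrite the right-hand side as $D_\varPsi\bigl(F(\varphi_1),F(\varphi_2\circ g)\bigr)$, and then apply the non-expansivity of $F$ to obtain $D_\varPsi\bigl(F(\varphi_1),F(\varphi_2\circ g)\bigr)\le D_\varPhi(\varphi_1,\varphi_2\circ g)$. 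Chaining these estimates shows that $d_H(F(\varphi_1),F(\varphi_2))\le D_\varPhi(\varphi_1,\varphi_2\circ g)$ holds for every $g\in G$.

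Finally, since the left-hand side does not depend on $g$, I would pass to the infimum over $g\in G$ on the right, which gives
\[
d_H(F(\varphi_1),F(\varphi_2))\le \inf_{g\in G}D_\varPhi(\varphi_1,\varphi_2\circ g)=d_G(\varphi_1,\varphi_2),
\]
as desired. I do not anticipate any genuine obstacle: the proof is a clean combination of the two defining properties of a GENEO. The only point requiring a small idea is the choice of $T(g)$ as a competitor in the infimum defining $d_H$; this is precisely what the fixed homomorphism $T$ is for, and it is what makes the restriction from all of $H$ to the subset $T(G)$ a legitimate upper bound.
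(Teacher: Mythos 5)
Your proof is correct and follows essentially the same route as the paper's: bounding the infimum defining $d_H$ by restricting to competitors of the form $T(g)$, then applying equivariance and non-expansivity of $F$, and finally taking the infimum over $g\in G$. The only cosmetic difference is that you fix $g$ first and pass to the infimum at the end, whereas the paper carries the infima through the whole chain of inequalities; the content is identical.
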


\begin{proof}%

Since $F$ is a GENEO, it follows that
\begin{align}
d_H(F(\varphi_1), F(\varphi_2)) & =\inf_{h \in H}D_\Psi\left(F(\varphi_1),F(\varphi_2) \circ h\right)\nonumber\\
& \leq \inf_{g \in G} D_\Psi\left( F(\varphi_1), F(\varphi_2) \circ T(g)\right)\\
& = \inf_{g \in G} D_\Psi\left(F(\varphi_1),F(\varphi_2 \circ g)\right)\nonumber\\
& \leq \inf_{g \in G} D_\varPhi\left(\varphi_1,\varphi_2 \circ g\right) = d_G(\varphi_1, \varphi_2).\nonumber
\end{align}
\end{proof}


\begin{proposition*}[\ref{propzerof}]
For every $F\in \mathcal{F}^{\mathrm{all}}$ and every $\varphi \in \varPhi$:
$\|F(\varphi)\|_{\infty}\le \|\varphi\|_{\infty}+ \|F(\textbf{0})\|_{\infty}$, where $\textbf{0}$ denotes the function taking the value 0 everywhere.
\end{proposition*}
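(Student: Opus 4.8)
The plan is to estimate $\|F(\varphi)\|_\infty$ by comparing $F(\varphi)$ with the single fixed element $F(\textbf{0})$ and then exploiting only the non-expansiveness of $F$; the equivariance property plays no role here. The argument rests on the fact that both $D_\varPhi$ and $D_\varPsi$ are the sup-norm distances, so that norms of functions and distances between functions interact through the ordinary triangle inequality.

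First I would apply the triangle inequality for the sup-norm to the function $F(\varphi)$, inserting $F(\textbf{0})$, to obtain
$$\|F(\varphi)\|_\infty \le \|F(\varphi)-F(\textbf{0})\|_\infty + \|F(\textbf{0})\|_\infty.$$
The purpose of introducing $F(\textbf{0})$ is that the first summand on the right is exactly a distance between two images under $F$, namely $\|F(\varphi)-F(\textbf{0})\|_\infty = D_\varPsi\bigl(F(\varphi),F(\textbf{0})\bigr)$, and is therefore amenable to the non-expansiveness hypothesis.

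Next I would invoke that $F$ is non-expansive, which gives
$$D_\varPsi\bigl(F(\varphi),F(\textbf{0})\bigr) \le D_\varPhi(\varphi,\textbf{0}) = \|\varphi-\textbf{0}\|_\infty = \|\varphi\|_\infty.$$
Substituting this bound into the previous display yields the claimed inequality $\|F(\varphi)\|_\infty \le \|\varphi\|_\infty + \|F(\textbf{0})\|_\infty$. I do not anticipate any genuine obstacle: the proof reduces to a single application of the triangle inequality followed by the non-expansiveness condition. The only point meriting a word of care is that the statement tacitly presupposes $\textbf{0}\in\varPhi$, so that $F(\textbf{0})$ is well defined; once this is granted, the estimate goes through verbatim.
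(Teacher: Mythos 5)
Your proof is correct and coincides with the paper's own argument: insert $F(\textbf{0})$, apply the sup-norm triangle inequality, then use non-expansiveness to bound $\|F(\varphi)-F(\textbf{0})\|_\infty$ by $\|\varphi\|_\infty$. Your remark that the statement tacitly assumes $\textbf{0}\in\varPhi$ is a fair observation, also left implicit in the paper.
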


\begin{proof}
Since $F$ is non-expansive, we have that
\begin{align*}
    \|F(\varphi)\|_{\infty} &= \|F(\varphi) - F(\textbf{0}) + F(\textbf{0})\|_{\infty}\\
    & \le \|F(\varphi) - F(\textbf{0})\|_{\infty} + \|F(\textbf{0})\|_{\infty}\\
    & \le \|\varphi - \textbf{0}\|_{\infty} + \|F(\textbf{0})\|_{\infty}= \|\varphi\|_{\infty} + \|F(\textbf{0})\|_{\infty}.
\end{align*}
\end{proof}


\begin{theorem*}[\ref{t17}]
$\mathcal{F}^{\mathrm{all}}$ is compact with respect to $D_{\GENEO}$.
\end{theorem*}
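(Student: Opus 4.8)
The plan is to prove \emph{sequential} compactness, which coincides with compactness since $(\mathcal{F}^{\mathrm{all}}, D_{\GENEO})$ is a pseudo-metric space~\cite{Ga64}. Given an arbitrary sequence $(F_i)$ in $\mathcal{F}^{\mathrm{all}}$, I would extract a subsequence converging in $D_{\GENEO}$ to some $F$ and then verify that $F$ is again a GENEO. This is essentially an Arzel\`a--Ascoli argument: the non-expansiveness condition furnishes equicontinuity for free, since every $F_i$ is $1$-Lipschitz from $(\varPhi, D_\varPhi)$ to $(\varPsi, D_\varPsi)$, while the compactness of $\varPsi$ provides pointwise relative compactness of the values $F_i(\varphi)$.

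Concretely, I would first use that $\varPhi$, being compact, is totally bounded and hence admits a countable dense subset $\{\bar\varphi_1, \bar\varphi_2, \ldots\}$ (take the union of finite $\frac{1}{m}$-nets). By a diagonal argument, using the compactness of $\varPsi$ at each $\bar\varphi_n$, I extract a subsequence $(F_{i_j})$ for which $(F_{i_j}(\bar\varphi_n))$ converges in $\varPsi$ for every $n$. Equicontinuity then upgrades this to uniform convergence: given $\varepsilon>0$ and any $\varphi\in\varPhi$, choosing a net point $\bar\varphi_n$ with $D_\varPhi(\varphi,\bar\varphi_n)<\varepsilon$ and applying non-expansiveness twice gives $D_\varPsi(F_{i_j}(\varphi),F_{i_k}(\varphi)) \le 2\varepsilon + D_\varPsi(F_{i_j}(\bar\varphi_n),F_{i_k}(\bar\varphi_n))$, so $(F_{i_j})$ is uniformly Cauchy. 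Since $\varPsi$ is compact and hence complete, this defines a limit $F(\varphi):=\lim_j F_{i_j}(\varphi)\in\varPsi$ with $D_{\GENEO}(F_{i_j},F)\to 0$.

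It then remains to check that $F\in\mathcal{F}^{\mathrm{all}}$. Non-expansiveness passes to the limit immediately, since $D_\varPsi(F(\varphi_1),F(\varphi_2))=\lim_j D_\varPsi(F_{i_j}(\varphi_1),F_{i_j}(\varphi_2))\le D_\varPhi(\varphi_1,\varphi_2)$. For equivariance I would exploit that right composition by an element of $H$ is an isometry of $(\varPsi,D_\varPsi)$, by the same argument as in Remark~\ref{remRgisometry}; hence the convergence $F_{i_j}(\varphi)\to F(\varphi)$ forces $F_{i_j}(\varphi)\circ T(g)\to F(\varphi)\circ T(g)$, and passing to the limit in the identity $F_{i_j}(\varphi\circ g)=F_{i_j}(\varphi)\circ T(g)$ yields $F(\varphi\circ g)=F(\varphi)\circ T(g)$. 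Thus $F$ is a GENEO and $\mathcal{F}^{\mathrm{all}}$ is sequentially compact.

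I expect the main obstacle to be organising the Arzel\`a--Ascoli machinery cleanly in the pseudo-metric setting --- in particular, producing the countable dense subset from total boundedness and confirming that the limit genuinely lands inside $\varPsi$ rather than merely in $C^0(Y,\mathbb{R})$ --- but the conceptual content is routine once equicontinuity is identified with the non-expansiveness of the operators.
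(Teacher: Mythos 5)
Your proof is correct and follows essentially the same Arzel\`a--Ascoli strategy as the paper: diagonal extraction over a countable dense subset of $\varPhi$, non-expansiveness playing the role of equicontinuity to upgrade pointwise to uniform convergence, and verification that the limit operator is again a GENEO. The only differences are organizational and harmless: you establish the uniform Cauchy property before defining the limit at every point, and you verify equivariance by passing to the limit in $F_{i_j}(\varphi\circ g)=F_{i_j}(\varphi)\circ T(g)$ using that right composition by $T(g)$ is a $D_\varPsi$-isometry, whereas the paper first defines the limit on the dense subset, extends it by continuity (checking independence of the approximating sequence), and then verifies equivariance via a double limit through the dense set.
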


\begin{proof}
We know that $(\mathcal{F}^{\mathrm{all}},D_{\GENEO})$ is a metric space. Therefore it will suffice to prove that $\mathcal{F}^{\mathrm{all}}$ is sequentially compact.
In order to do this, let us assume that a sequence $(F_i)$ in $\mathcal{F}^{\mathrm{all}}$ is given.
Given that $\varPhi$ is a compact (and hence separable) metric space, we can find a countable and dense subset $\varPhi^*=\{\varphi_j \}_{j\in \mathbb{N}}$ of $\varPhi$.
By means of a diagonalization process, we can extract a subsequence $(F'_{i})$ from $(F_i)$, such that for every fixed index $j$ the sequence $(F'_{i}(\varphi_j))$ converges to a function in $\varPsi$ with respect to $D_\Psi$.
%
Now, let us consider the function $\bar{F}: \varPhi \rightarrow\ \varPsi$ defined by setting
$\bar{F}(\varphi_j):= \lim_{i \to \infty}F'_{i}(\varphi_j) $ for each $\varphi_j \in \varPhi^{*}$.

We extend $\bar{F}$ to $\varPhi$ as follows. For every $\varphi \in \varPhi$ we choose a sequence $(\varphi_{j_r})$ in $\varPhi^{*}$, converging to $\varphi \in \varPhi$, and set $\bar{F}(\varphi):= \lim_{r \to \infty} \bar{F}(\varphi_{j_r})$. We claim that such a limit exists in $\varPsi$ and does not depend on the sequence that we have chosen, converging to $\varphi \in \varPhi$. In order to prove that the previous limit exists, we observe that for every $r, \ s \in \mathbb{N}$
\begin{align*}
 D_\Psi\left(\bar{F}(\varphi_{j_r}),\bar{F}(\varphi_{j_s})\right) & = D_\Psi\left(\lim_{i \to \infty}F'_{i}(\varphi_{j_r}),\lim_{i \to \infty}F'_{i}(\varphi_{j_s})\right) \\
 & = \lim_{i \to \infty} D_\Psi\left(F'_{i}(\varphi_{j_r}),F'_{i}(\varphi_{j_s})\right) \\
 & \le \lim_{i \to \infty} D_\varPhi\left(\varphi_{j_r},\varphi_{j_s}\right)= D_\varPhi\left(\varphi_{j_r},\varphi_{j_s}\right),
\end{align*}
because each $F'_{i}$ is non-expansive.

Since the sequence $(\varphi_{j_r})$ converges to $\varphi \in \varPhi$, it follows that $(\bar{F}(\varphi_{j_r}))$ is a Cauchy sequence with respect to $D_\Psi$. The compactness of $\varPsi$ implies that $(\bar{F}(\varphi_{j_r}))$ converges in $\varPsi$.\\
If another sequence $(\varphi_{k_r})$ in given in $\varPhi^{*}$, converging to $\varphi \in \varPhi$, then for every index $r \in \mathbb{N}$
\begin{align*}
 D_\Psi\left(\bar{F}(\varphi_{j_r}),\bar{F}(\varphi_{k_r})\right) & = D_\Psi\left(\lim_{i \to \infty}F'_{i}(\varphi_{j_r}),\lim_{i \to \infty}F'_{i}(\varphi_{k_r})\right) \\
 & = \lim_{i \to \infty} D_\Psi\left(F'_{i}(\varphi_{j_r}),F'_{i}(\varphi_{k_r})\right) \\
 & \le \lim_{i \to \infty} D_\varPhi\left(\varphi_{j_r},\varphi_{k_r}\right) \\
 & = D_\varPhi\left(\varphi_{j_r},\varphi_{k_r}\right).
\end{align*}

Since both $(\varphi_{j_r})$ and $(\varphi_{k_r})$ converge to $\varphi$ it follows that
$\lim_{r \to \infty} \bar{F}(\varphi_{j_r})=\lim_{r \to \infty} \bar{F}(\varphi_{k_r})$. Therefore the definition of $\bar{F}(\varphi)$ does not depend on the sequence $(\varphi_{j_r})$ that we have chosen, converging to $\varphi$.

Now we have to prove that $\bar{F} \in \mathcal{F}^{\mathrm{all}}$, i.e., that $\bar{F}$ verifies the properties defining this set of operators.
We have already seen that $\bar{F}:\varPhi \rightarrow\ \varPsi$.

For every $\varphi, \varphi'$ we can consider two sequences $(\varphi_{j_r})$, $(\varphi_{k_r})$ in $\varPhi^{*}$, converging to $\varphi$ and $\varphi'$, respectively. Due to the fact that the operators $F'_{i}$ are non-expansive, we have that
\begin{align*}
 D_\Psi\left(\bar{F}(\varphi),\bar{F}(\varphi')\right) & = D_\Psi\left(\lim_{r \to \infty}\bar{F}(\varphi_{j_r}),\lim_{r \to \infty}\bar{F}(\varphi_{k_r})\right) \\
& = D_\Psi\left(\lim_{r \to \infty}\lim_{i \to \infty}F'_{i}(\varphi_{j_r}),\lim_{r \to \infty} \lim_{i \to \infty}F'_{i}(\varphi_{k_r})\right) \\
 & =  \lim_{r \to \infty} \lim_{i \to \infty} D_\Psi\left(F'_{i}(\varphi_{j_r}),F'_{i}(\varphi_{k_r})\right) \\
 & \le \lim_{r \to \infty} \lim_{i \to \infty} D_\varPhi\left(\varphi_{j_r},\varphi_{k_r}\right) \\
  & = \lim_{r \to \infty} D_\varPhi\left(\varphi_{j_r},\varphi_{k_r}\right) \\
 & = D_\varPhi\left(\varphi,\varphi'\right).
\end{align*}
Therefore, $\bar{F}:\varPhi\to\Psi$ is non-expansive. As a consequence, it is also continuous.

We can now prove that the sequence $(F'_{i})$ converges to $\bar{F}$ with respect to $D_{\GENEO}$.

Let us consider an arbitrarily small $\varepsilon>0$. Since $\varPhi$ is compact and $\varPhi^{*}$ is dense in $\varPhi$, we can find a finite subset $\{ \varphi_{j_1},\dots, \varphi_{j_n} \}$ of $\varPhi^{*}$  such that for every $\varphi \in \varPhi$, there exists an index $r \in \{1, \dots, n \}$, for which $D_\varPhi\left(\varphi,\varphi_{j_r} \right) < \varepsilon$.

Since the sequence  $(F'_{i})$ converges pointwise to $\bar{F}$ on the set $\varPhi^{*}$, an index $\bar\imath$ exists, such that $D_\Psi\left(\bar{F}(\varphi_{j_r}),F'_{i}(\varphi_{j_r})\right) < \varepsilon$ for any $i \ge \bar\imath$ and any $r \in \{ 1, \dots , n \}$.
Therefore, for every $\varphi \in \varPhi$ we can find an index $r \in \{ 1, \dots, n \}$ such that $D_\varPhi\left(\varphi,\varphi_{j_r} \right) < \varepsilon$ and the following inequalities hold for every index $i\ge \bar\imath$, because of the non-expansivity of $\bar{F}$ and $F'_{i}$:
\begin{align*}
    D_\Psi\left(\bar{F}(\varphi),F'_{i}(\varphi)\right)& \le D_\Psi\left(\bar{F}(\varphi),\bar{F}(\varphi_{j_r})\right) +D_\Psi\left(\bar{F}(\varphi_{j_r}),F'_{i}(\varphi_{j_r})\right) +D_\Psi\left(F'_{i}(\varphi_{j_r}),F'_{i}(\varphi)\right) \\
    & \le D_\varPhi\left(\varphi,\varphi_{j_r}\right) + D_\Psi\left(\bar{F}(\varphi_{j_r}),F'_{i}(\varphi_{j_r})\right) + D_\varPhi\left(\varphi_{j_r},\varphi\right) < 3 \varepsilon.
\end{align*}

We observe that $\bar\imath$ does not depend on $\varphi$, but only on $\varepsilon$ and on the set $\{ \varphi_{j_1},\dots, \varphi_{j_n} \}$. It follows that  $D_\Psi\left(\bar{F}(\varphi),F'_{i}(\varphi)\right) < 3\varepsilon$ for every $\varphi \in \varPhi$ and every $i \ge \bar\imath$.

Hence, $\sup_{\varphi \in \varPhi}D_\Psi\left(\bar{F}(\varphi),F'_{i}(\varphi)\right)\le 3\varepsilon$ for every $i \ge \bar\imath$.
Therefore, the sequence $(F'_{i})$ converges to $\bar{F}$ with respect to $D_{\GENEO}$.

The last thing that we have to show is that $\bar{F}$ is group equivariant. Let us consider a $\varphi \in \varPhi$, a sequence $(\varphi_{j_r})$ in $\varPhi^{*}$ converging to $\varphi$ in $\varPhi$ and a $g \in G$.
Obviously, $D_\varPhi(\varphi_{j_r} \circ g ,\varphi \circ g)=D_\varPhi(\varphi_{j_r},\varphi)$ and hence the sequence $(\varphi_{j_r} \circ g )$ converges to $\varphi \circ g$ in $\varPhi$ with respect to $D_\varPhi$. We recall that the right action of $G$ on $\varPhi$ is continuous, $\bar{F}$ is continuous and each $F'_{i}$ is group equivariant. Hence, given that the sequence $(F'_{i})$ converges to $\bar{F}$ with respect to $D_{\GENEO}$, the following equalities hold:
\begin{align*}
    \bar{F}(\varphi \circ g) & = \bar{F}(\lim_{r \to \infty} (\varphi_{j_r} \circ g))\\
    & = \lim_{r \to \infty} \bar{F}(\varphi_{j_r} \circ g)\\
    & = \lim_{r \to \infty} \lim_{i \to \infty}F'_{i}(\varphi_{j_r} \circ g)\\
    & = \lim_{r \to \infty} \lim_{i \to \infty}  F'_{i}(\varphi_{j_r}) \circ T(g)\\
    & = \lim_{r \to \infty} \bar{F}(\varphi_{j_r}) \circ T(g)\\
    & = \bar{F}(\varphi) \circ T(g).
\end{align*}
This proves that $\bar{F}$ is group equivariant, and hence a perception map.
In conclusion, $\bar{F}$ is a GENEO.
From the fact that the sequence $F'_{i}$ converges to $\bar{F}$ with respect to $D_{\GENEO}$, it follows that $(\mathcal{F}^{\mathrm{all}},D_{\GENEO})$ is sequentially compact.
\end{proof}


\begin{proposition*}[\ref{propconvex}]
If $F_{\Sigma}(\varPhi)\subseteq \varPsi$, then $F_{\Sigma}$ is a GENEO from $(\varPhi, G)$ to $(\varPsi, H)$ with respect to $T$.
\end{proposition*}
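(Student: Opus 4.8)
The plan is to verify the three defining conditions of a GENEO for $F_\Sigma$: that it maps $\varPhi$ into $\varPsi$, that it is group equivariant with respect to $T$, and that it is non-expansive. The first condition is exactly the hypothesis $F_\Sigma(\varPhi)\subseteq\varPsi$, so I would take it for granted and concentrate on the remaining two, both of which reduce to elementary manipulations of the defining formula $F_\Sigma(\varphi)=\sum_{i=1}^n a_i F_i(\varphi)$.

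For equivariance I would fix $\varphi\in\varPhi$ and $g\in G$ and use that right-composition with the homeomorphism $T(g)$ acts pointwise, hence commutes with the formation of $\R$-linear combinations of functions. Concretely, for each point $y$ in the domain $Y$ of the functions in $\varPsi$,
\[
F_\Sigma(\varphi\circ g)(y) = \sum_{i=1}^n a_i\, F_i(\varphi\circ g)(y) = \sum_{i=1}^n a_i\bigl(F_i(\varphi)\circ T(g)\bigr)(y),
\]
where the second equality uses that each $F_i$ is a GENEO associated with $T$. Pulling the evaluation at $T(g)(y)$ outside the sum gives $\bigl(\sum_i a_i F_i(\varphi)\bigr)(T(g)(y))=\bigl(F_\Sigma(\varphi)\circ T(g)\bigr)(y)$, so that $F_\Sigma(\varphi\circ g)=F_\Sigma(\varphi)\circ T(g)$ and $F_\Sigma$ is a perception map.

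For non-expansivity I would fix $\varphi_1,\varphi_2\in\varPhi$ and estimate, using the triangle inequality and homogeneity of $\|\cdot\|_\infty$,
\[
D_\varPsi(F_\Sigma(\varphi_1),F_\Sigma(\varphi_2)) = \Bigl\|\sum_{i=1}^n a_i\bigl(F_i(\varphi_1)-F_i(\varphi_2)\bigr)\Bigr\|_\infty \le \sum_{i=1}^n |a_i|\, D_\varPsi(F_i(\varphi_1),F_i(\varphi_2)).
\]
Since each $F_i$ is non-expansive, each summand satisfies $D_\varPsi(F_i(\varphi_1),F_i(\varphi_2))\le D_\varPhi(\varphi_1,\varphi_2)$; substituting and invoking the constraint $\sum_i|a_i|\le 1$ then yields $D_\varPsi(F_\Sigma(\varphi_1),F_\Sigma(\varphi_2))\le\bigl(\sum_i|a_i|\bigr)D_\varPhi(\varphi_1,\varphi_2)\le D_\varPhi(\varphi_1,\varphi_2)$.

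I do not expect any serious obstacle: the whole argument is routine once one notices that the hypothesis $\sum_i|a_i|\le 1$ is precisely the $\ell^1$ condition that makes the triangle-inequality estimate collapse into non-expansivity. The only place where the statement genuinely relies on an assumption, rather than following formally, is the requirement $F_\Sigma(\varPhi)\subseteq\varPsi$: a linear combination of elements of $\varPsi$ need not return to $\varPsi$, and this is exactly why the membership is assumed here rather than derived — it is the point later upgraded using the convexity of $\varPsi$ in Theorem~\ref{thmconvex}.
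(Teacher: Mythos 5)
Your proof is correct and follows essentially the same route as the paper's: equivariance via the fact that right-composition with $T(g)$ commutes with linear combinations, and non-expansivity via the triangle inequality together with the constraint $\sum_{i=1}^n |a_i|\le 1$. Your closing remark about the role of the hypothesis $F_\Sigma(\varPhi)\subseteq\varPsi$ also matches how the paper uses convexity of $\varPsi$ in Theorem~\ref{thmconvex}.
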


\begin{proof}
First we prove that $F_{\Sigma}$ is a perception map with respect to $T$. Since every $F_i$ is a perception map we have that:
\begin{equation}
F_{\Sigma}(\varphi \circ g) = \sum_{i=1}^n a_i F_i (\varphi \circ g)=\sum_{i=1}^n a_i (F_i (\varphi) \circ T(g))= \sum_{i=1}^n (a_i F_i (\varphi)) \circ T(g)= F_{\Sigma}(\varphi) \circ T(g).
\end{equation}
Since every $F_i$ is non-expansive, $F_{\Sigma}$ is non-expansive:
\begin{align}
D_\Psi\left( F_{\Sigma}(\varphi_1),F_{\Sigma}(\varphi_2)\right) & = \left\|\sum_{i=1}^n a_i F_i(\varphi_1) - \sum_{i=1}^n a_i F_i(\varphi_2) \right\|_{\infty}\\
 & =  \left\|\sum_{i=1}^n a_i (F_i(\varphi_1) - F_i(\varphi_2))\right\|_{\infty} \\
 & \le  \sum_{i=1}^n |a_i| \left\|(F_i(\varphi_1) - F_i(\varphi_2))\right\|_{\infty} \\
 & \le  \sum_{i=1}^n |a_i| \left\|\varphi_1 - \varphi_2 \right\|_{\infty}\le D_\varPhi\left(\varphi_1,\varphi_2\right).
\end{align}
Therefore $F_{\Sigma}$ is a GENEO.
\end{proof}


\begin{theorem*}[\ref{thmconvex}]
If $\Psi$ is convex, then the set of GENEOs from $(\varPhi, G)$ to $(\varPsi, H)$ with respect to $T$ is convex.
\end{theorem*}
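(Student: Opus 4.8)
The plan is to reduce the statement entirely to Proposition~\ref{propconvex}, whose hypotheses were tailored for exactly this purpose. Recall that the set of GENEOs from $(\varPhi, G)$ to $(\varPsi, H)$ with respect to $T$ is convex precisely when, for any two such GENEOs $F_1, F_2$ and any $t \in [0,1]$, the operator $F_\Sigma := t F_1 + (1-t) F_2$ is again a GENEO with respect to $T$. So first I would fix $F_1, F_2 \in \mathcal{F}^{\mathrm{all}}$ and $t \in [0,1]$, and cast $F_\Sigma$ in the form appearing in Proposition~\ref{propconvex} by taking $n = 2$, $a_1 = t$, and $a_2 = 1-t$.

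With this choice the coefficient condition of Proposition~\ref{propconvex} is trivially met, since $|a_1| + |a_2| = t + (1-t) = 1 \le 1$. The only remaining hypothesis of that proposition that must be checked is the codomain condition $F_\Sigma(\varPhi) \subseteq \varPsi$. This is where the assumption that $\varPsi$ is convex enters: for each $\varphi \in \varPhi$ the values $F_1(\varphi)$ and $F_2(\varphi)$ both belong to $\varPsi$ (because $F_1$ and $F_2$ are GENEOs with codomain $\varPsi$), and
\begin{equation}
F_\Sigma(\varphi) = t F_1(\varphi) + (1-t) F_2(\varphi)
\end{equation}
is simply a convex combination of these two elements of $\varPsi$. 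By convexity of $\varPsi$, this combination lies in $\varPsi$, so $F_\Sigma(\varPhi) \subseteq \varPsi$.

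Having verified both hypotheses, Proposition~\ref{propconvex} applies directly and yields that $F_\Sigma$ is a GENEO from $(\varPhi, G)$ to $(\varPsi, H)$ with respect to $T$. Since $F_1, F_2$ and $t$ were arbitrary, this establishes the convexity of $\mathcal{F}^{\mathrm{all}}$.

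I do not expect any genuine obstacle here: the non-expansivity and equivariance of $F_\Sigma$ are already handled inside Proposition~\ref{propconvex}, so the entire content of the argument is the single observation that a convex combination of functions in $\varPsi$ again lands in $\varPsi$. The convexity of $\varPsi$ is used in exactly one place—guaranteeing $F_\Sigma(\varPhi) \subseteq \varPsi$—and that is the only step that could fail were the hypothesis dropped.
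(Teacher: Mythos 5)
Your proof is correct and follows exactly the paper's own argument: both reduce the statement to Proposition~\ref{propconvex} with $n=2$, $a_1=t$, $a_2=1-t$, using the convexity of $\varPsi$ only to verify the codomain condition $F_{\Sigma}(\varPhi)\subseteq\varPsi$. Your write-up merely spells out the details (the coefficient bound and the pointwise convex-combination argument) that the paper leaves implicit.
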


\begin{proof}
It is sufficient to apply Proposition~\ref{propconvex} for $n=2$, by setting $a_1=t$, $a_2=1-t$ for $0\le t\le 1$, and observing that the convexity of $\Psi$ implies $F_{\Sigma}(\varPhi)\subseteq \varPsi$.
\end{proof}

\begin{proposition*}
  $\mathcal{D}^{\mathcal{F},k}_{\mathrm{match}}$ is a strongly $G$-invariant pseudo-metric on $\varPhi$.
\end{proposition*}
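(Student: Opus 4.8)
The plan is to verify separately the two assertions packed into the statement: that $\mathcal{D}^{\mathcal{F},k}_{\mathrm{match}}$ is a pseudo-metric on $\varPhi$, and that it is strongly $G$-invariant. Both reduce to manipulating the defining supremum $\sup_{F\in\mathcal{F}} d_{\mathrm{match}}(r_k(F(\varphi_1)),r_k(F(\varphi_2)))$ together with two external facts: the pseudo-metric nature of $d_{\mathrm{match}}$ and the homeomorphism invariance of persistent homology.

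For the pseudo-metric property I would use a pullback-and-supremum argument. For each fixed $F\in\mathcal{F}$, set $d_F(\varphi_1,\varphi_2):=d_{\mathrm{match}}(r_k(F(\varphi_1)),r_k(F(\varphi_2)))$; since $d_{\mathrm{match}}$ is a pseudo-metric on persistent Betti numbers functions, its pullback $d_F$ along the map $\varphi\mapsto r_k(F(\varphi))$ is again a pseudo-metric on $\varPhi$ (symmetry and the vanishing $d_F(\varphi,\varphi)=0$ are immediate, and the triangle inequality is inherited). I would then check that the family $\{d_F\}_{F\in\mathcal{F}}$ is uniformly bounded: by Theorem~\ref{t12} applied to the pair $F(\varphi_1),F(\varphi_2)\in\varPsi$ and the non-expansivity of $F$, one has $d_F(\varphi_1,\varphi_2)\le D_\varPsi(F(\varphi_1),F(\varphi_2))\le D_\varPhi(\varphi_1,\varphi_2)$, which is finite because $\varPhi$ is compact and hence bounded. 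Since the supremum of a family of bounded pseudo-metrics is a pseudo-metric, $\mathcal{D}^{\mathcal{F},k}_{\mathrm{match}}=\sup_{F\in\mathcal{F}}d_F$ is a pseudo-metric.

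For strong $G$-invariance the two ingredients are the equivariance of each $F$ and the homeomorphism invariance of persistent homology. Fix $g\in G$; note $\varphi\circ g\in\varPhi$ since $g\in G\subseteq\Homeo_\varPhi(X)$. Each $F\in\mathcal{F}$ is associated with $T:G\to H$, so $F(\varphi\circ g)=F(\varphi)\circ T(g)$, where $T(g)\in H$ is a homeomorphism of $Y$. By the homeomorphism invariance of the $k$th persistent Betti numbers function (Remark~\ref{r10}, applied on $Y$), this gives $r_k(F(\varphi\circ g))=r_k(F(\varphi)\circ T(g))=r_k(F(\varphi))$. Substituting into the definition and taking the supremum over $F$ yields $\mathcal{D}^{\mathcal{F},k}_{\mathrm{match}}(\varphi_1\circ g,\varphi_2)=\mathcal{D}^{\mathcal{F},k}_{\mathrm{match}}(\varphi_1,\varphi_2)$; the identical computation on the second argument gives invariance in $\varphi_2$, and chaining the two gives the simultaneous equality $\mathcal{D}^{\mathcal{F},k}_{\mathrm{match}}(\varphi_1\circ g,\varphi_2\circ g)=\mathcal{D}^{\mathcal{F},k}_{\mathrm{match}}(\varphi_1,\varphi_2)$, which is exactly the strong $G$-invariance condition.

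I do not expect a genuine obstacle here; the proof is essentially bookkeeping. The only two points requiring care are (i) confirming that the defining supremum is finite, so that it really produces a real-valued pseudo-metric rather than an extended one, which is handled by the boundedness estimate via Theorem~\ref{t12}; and (ii) keeping track of which function spaces the intermediate objects live in, namely that $\varphi\circ g\in\varPhi$ because $g\in\Homeo_\varPhi(X)$ and that $F(\varphi)\circ T(g)\in\varPsi$ because $T(g)\in H$, both of which follow directly from the definitions of perception pair and of GENEO.
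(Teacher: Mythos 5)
Your proof is correct and follows essentially the same route as the paper's: both establish the pseudo-metric property by bounding each $d_{\mathrm{match}}(r_k(F(\varphi_1)),r_k(F(\varphi_2)))$ via Theorem~\ref{t12} and non-expansivity and then invoking that a supremum of pointwise-bounded pseudo-metrics is a pseudo-metric, and both derive strong $G$-invariance from the equivariance $F(\varphi\circ g)=F(\varphi)\circ T(g)$ combined with the homeomorphism invariance of persistent homology (Remark~\ref{r10}). Your write-up is in fact slightly more careful than the paper's on two points the paper leaves implicit: that $T(g)$ acts as a homeomorphism of $Y$ rather than $X$, and the explicit pullback formulation of the family $\{d_F\}$.
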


\begin{proof}
Theorem~\ref{t12} and the non-expansivity of every $F \in \mathcal{F}$ imply that
\begin{align*}
   d_{\mathrm{match}} (r_k(F(\varphi_1)),r_k(F(\varphi_2)))& \le D_\Psi\left(F(\varphi_1),F(\varphi_2)\right)\\
   & \le D_\varPhi\left(\varphi_1,\varphi_2\right).
\end{align*}
Therefore $\mathcal{D}^{\mathcal{F},k}_{\mathrm{match}}$ is a pseudo-metric, since it is the supremum of a family of pseudo-metrics that are bounded at each pair $(\varphi_1,\varphi_2)$. Moreover, for every $\varphi_1,\varphi_2 \in \varPhi$ and every $g \in G$
\begin{align*}
  \mathcal{D}^{\mathcal{F},k}_{\mathrm{match}}(\varphi_1,\varphi_2\circ g)
  & := \sup_{F \in \mathcal{F}} d_{\mathrm{match}}(r_k(F(\varphi_1)),r_k(F(\varphi_2 \circ g)))\\
  & = \sup_{F \in \mathcal{F}} d_{\mathrm{match}}(r_k(F(\varphi_1)),r_k(F(\varphi_2) \circ T( g)))\\
  & = \sup_{F \in \mathcal{F}} d_{\mathrm{match}}(r_k(F(\varphi_1)),r_k(F(\varphi_2))\\
  & =\mathcal{D}^{\mathcal{F},k}_{\mathrm{match}}(\varphi_1,\varphi_2)\\
\end{align*}
because of the equality $F(\varphi \circ g)=F(\varphi) \circ T(g)$ for every $\varphi \in \varPhi$ and every $g\in G$ and the invariance of persistent homology under the action of the homeomorphisms. Since the function $\mathcal{D}^{\mathcal{F},k}_{\mathrm{match}}$ is symmetric, this is sufficient to guarantee that $\mathcal{D}^{\mathcal{F},k}_{\mathrm{match}}$ is strongly $G$-invariant.
\end{proof}


\begin{theorem*}[\ref{t15}]
If $\mathcal{F}$ is a non-empty subset of $\mathcal{F}^{\mathrm{all}}$, then
\begin{equation}\mathcal{D}^{\mathcal{F},k}_{\mathrm{match}} \le d_G \le D_\varPhi.
\end{equation}
\end{theorem*}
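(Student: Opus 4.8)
The plan is to establish the two inequalities separately. The second one, $d_G\le D_\varPhi$, is immediate: since $G$ is a subgroup of $\Homeo_\varPhi(X)$ it contains the identity, so taking $g=\mathrm{Id}$ in the infimum defining $d_G$ gives $d_G(\varphi_1,\varphi_2)\le D_\varPhi(\varphi_1,\varphi_2\circ\mathrm{Id})=D_\varPhi(\varphi_1,\varphi_2)$; this is in fact already recorded in the chain of inequalities stated just after the definition of $d_G$. All the genuine work is therefore in the first inequality $\mathcal{D}^{\mathcal{F},k}_{\mathrm{match}}\le d_G$.

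For that inequality I would fix $\varphi_1,\varphi_2\in\varPhi$, an arbitrary $g\in G$, and an arbitrary $F\in\mathcal{F}$, and aim to bound $d_{\mathrm{match}}(r_k(F(\varphi_1)),r_k(F(\varphi_2)))$ by $D_\varPhi(\varphi_1,\varphi_2\circ g)$. The key observation is that $F(\varphi_2)$ and $F(\varphi_2\circ g)$ carry the same persistent Betti numbers function: by equivariance $F(\varphi_2\circ g)=F(\varphi_2)\circ T(g)$, and since $T(g)\in H$ is a homeomorphism of the domain $Y$ of the functions in $\varPsi$, Remark~\ref{r10} yields $r_k(F(\varphi_2)\circ T(g))=r_k(F(\varphi_2))$. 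Hence
\[
d_{\mathrm{match}}(r_k(F(\varphi_1)),r_k(F(\varphi_2)))=d_{\mathrm{match}}(r_k(F(\varphi_1)),r_k(F(\varphi_2\circ g))).
\]
Now the stability inequality of Theorem~\ref{t12}, read on the perception pair $(\varPsi,H)$, bounds the right-hand side by $D_\Psi(F(\varphi_1),F(\varphi_2\circ g))$, and the non-expansivity of $F$ bounds this in turn by $D_\varPhi(\varphi_1,\varphi_2\circ g)$. Having this for every $F\in\mathcal{F}$, I would take the supremum over $F$ to get $\mathcal{D}^{\mathcal{F},k}_{\mathrm{match}}(\varphi_1,\varphi_2)\le D_\varPhi(\varphi_1,\varphi_2\circ g)$, and since $g\in G$ was arbitrary, pass to the infimum over $g$ to conclude $\mathcal{D}^{\mathcal{F},k}_{\mathrm{match}}(\varphi_1,\varphi_2)\le\inf_{g\in G}D_\varPhi(\varphi_1,\varphi_2\circ g)=d_G(\varphi_1,\varphi_2)$.

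I do not expect a serious obstacle here; the argument is a matter of chaining three facts in the right order. The only point that requires care is the bookkeeping of where each estimate lives: the matching distance and the stability bound must be applied on the target pair $(\varPsi,H)$, non-expansivity is what transfers the bound back to the source pair $(\varPhi,G)$, and it is precisely the group-equivariance of $F$ together with the homeomorphism-invariance of persistent homology — the same mechanism underlying the strong $G$-invariance already proved in Proposition~\ref{stronglyinv} — that allows the replacement of $\varphi_2\circ g$ by $\varphi_2$ inside $\mathcal{D}^{\mathcal{F},k}_{\mathrm{match}}$ before the infimum over $g$ is taken.
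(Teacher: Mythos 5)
Your proposal is correct and follows essentially the same route as the paper's proof: the same chain of (i) homeomorphism-invariance of persistent homology (Remark~\ref{r10}) combined with equivariance to replace $r_k(F(\varphi_2))$ by $r_k(F(\varphi_2\circ g))$, (ii) the stability inequality of Theorem~\ref{t12} on the target pair, and (iii) non-expansivity, followed by the supremum over $F$ and the infimum over $g$. The only cosmetic difference is that you justify $d_G\le D_\varPhi$ directly by taking $g=\mathrm{Id}$, whereas the paper simply cites Theorem~\ref{t12}.
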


\begin{proof}
For every $F \in \mathcal{D}^{\mathcal{F},k}_{\mathrm{match}}$, every $g \in G$ and every $\varphi_1, \varphi_2 \in \varPhi$, we have that
\begin{align*}
    d_{\mathrm{match}}(r_k(F(\varphi_1)),r_k(F(\varphi_2))) & =d_{\mathrm{match}}(r_k(F(\varphi_1)),r_k(F(\varphi_2) \circ T( g)))\\
    & =d_{\mathrm{match}}(r_k(F(\varphi_1)),r_k(F(\varphi_2 \circ g)))\\
    & \le D_\Psi\left(F(\varphi_1),F(\varphi_2 \circ g)\right)\le D_\varPhi\left(\varphi_1,\varphi_2 \circ g\right).
\end{align*}
The first equality follows from the invariance of persistent homology under action of $\Homeo(X)$ (see Remark~\ref{r10}), and the second equality follows from the fact F is a group equivariant operator. The first inequality follows from the stability of persistent homology (Theorem~\ref{t12}), while the second inequality follows from the non-expansivity of $F$.
It follows that, if $\mathcal{F}\subseteq \mathcal{F}^{\mathrm{all}}$, then for every $g \in G$ and every $\varphi_1, \varphi_2 \in \varPhi$
\begin{equation}
\mathcal{D}^{\mathcal{F},k}_{\mathrm{match}}(\varphi_1,\varphi_2)\le D_\varPhi\left(\varphi_1,\varphi_2 \circ g\right).
\end{equation}
Hence, the inequality $\mathcal{D}^{\mathcal{F},k}_{\mathrm{match}} \le d_G$ follows, while $d_G \le D_\varPhi$ is stated in Theorem~\ref{t12}.
\end{proof}


\begin{theorem*}[\ref{thrEqualityDFall_dmatch}]
Let us assume that $\varPhi=\Psi$, every function in $\varPhi$ is non-negative, the $k$-th Betti number of $X$ does not vanish, and $\varPhi$ contains each constant function $c$ for which a function $\varphi \in \varPhi $ exists such that $0\le c \le \|\varphi\|_{\infty}$.
Then $\mathcal{D}^{\mathcal{F}^{\mathrm{all}},k}_{\mathrm{match}}=d_G$.
\end{theorem*}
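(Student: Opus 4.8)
The plan is to combine the inequality already supplied by Theorem~\ref{t15}, namely $\mathcal{D}^{\mathcal{F}^{\mathrm{all}},k}_{\mathrm{match}}\le d_G$, with a matching lower bound $\mathcal{D}^{\mathcal{F}^{\mathrm{all}},k}_{\mathrm{match}}\ge d_G$. For the latter it suffices, for each fixed pair $\varphi_1,\varphi_2\in\varPhi$, to exhibit a single operator $F\in\mathcal{F}^{\mathrm{all}}$ whose action separates $\varphi_1$ and $\varphi_2$ in bottleneck distance by exactly $d_G(\varphi_1,\varphi_2)$. Fixing $\varphi_2$, I would define $c\colon\varPhi\to\R$ by $c(\varphi):=d_G(\varphi,\varphi_2)$ and let $F$ send each $\varphi\in\varPhi$ to the \emph{constant} function of value $c(\varphi)$.

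First I would verify that $F$ is an admissible GENEO. Non-expansivity follows from the triangle inequality for the pseudo-metric $d_G$ together with $d_G\le D_\varPhi$ (Theorem~\ref{t12}): indeed $D_\varPsi(F(\varphi),F(\varphi'))=|c(\varphi)-c(\varphi')|\le d_G(\varphi,\varphi')\le D_\varPhi(\varphi,\varphi')$. Equivariance is automatic once one notes that $d_G$ is $G$-invariant in its first argument: a change of variables using that each $g\in G$ is a $D_X$-isometry (Proposition~\ref{propgisometry}) gives $d_G(\varphi\circ g,\varphi_2)=d_G(\varphi,\varphi_2)$, so $c(\varphi\circ g)=c(\varphi)$; and since a constant function is fixed by post-composition with any homeomorphism, $F(\varphi\circ g)=F(\varphi)=F(\varphi)\circ T(g)$ for every homomorphism $T\colon G\to H$. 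The genuinely delicate point is that $F$ must take values in $\varPsi=\varPhi$. Here I would invoke non-negativity: $0\le c(\varphi)=d_G(\varphi,\varphi_2)\le\|\varphi-\varphi_2\|_\infty\le\max\{\|\varphi\|_\infty,\|\varphi_2\|_\infty\}$, the last inequality holding because the pointwise difference of two non-negative functions is bounded in absolute value by their pointwise maximum. Since the larger of $\varphi,\varphi_2$ lies in $\varPhi$, the hypothesis that $\varPhi$ contains every constant $c$ with $0\le c\le\|\psi\|_\infty$ for some $\psi\in\varPhi$ guarantees $F(\varphi)\in\varPhi$.

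Next I would compute the persistence diagrams of the two constant functions $F(\varphi_1)$ and $F(\varphi_2)$. For a constant function of value $c$ the sublevel set $\varphi^{-1}((-\infty,t])$ is empty for $t<c$ and equals $X$ for $t\ge c$; hence all of the $k$-th homology of $X$ is born at level $c$ and never dies. Because the $k$-th Betti number $\beta_k(X)$ does not vanish, $r_k$ of the constant $c$ consists of a cornerpoint at $(c,\infty)$ of multiplicity $\beta_k(X)\ge 1$ (together with the diagonal). Thus $r_k(F(\varphi_2))$ places its point(s) at abscissa $0$ and $r_k(F(\varphi_1))$ at abscissa $d_G(\varphi_1,\varphi_2)$, both at infinite ordinate.

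Finally I would evaluate $d_{\mathrm{match}}$ between these two diagrams. By the definition of $d^*$, a point at infinity can be matched only to another point at infinity, and the cost is the absolute difference of abscissas; since both diagrams carry the same number of points at infinity, the optimal bijection pairs them and yields $d_{\mathrm{match}}(r_k(F(\varphi_1)),r_k(F(\varphi_2)))=|d_G(\varphi_1,\varphi_2)-0|=d_G(\varphi_1,\varphi_2)$. Taking the supremum over $\mathcal{F}^{\mathrm{all}}$ gives $\mathcal{D}^{\mathcal{F}^{\mathrm{all}},k}_{\mathrm{match}}(\varphi_1,\varphi_2)\ge d_G(\varphi_1,\varphi_2)$, which with Theorem~\ref{t15} yields the asserted equality. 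I expect the main obstacle to be the verification that the constant-valued operator actually lands in $\varPhi$ — this is precisely where the non-negativity and the constant-function hypotheses enter — and, secondarily, the careful bookkeeping of points at infinity in the bottleneck matching, which is what forces the assumption $\beta_k(X)\neq 0$.
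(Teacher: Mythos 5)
Your proposal is correct and follows essentially the same route as the paper's own proof: both construct the GENEO $F_{\varphi_2}$ sending each $\varphi$ to the constant function of value $d_G(\varphi,\varphi_2)$, verify equivariance via the strong $G$-invariance of $d_G$ and non-expansivity via the reverse triangle inequality, note that the resulting diagrams consist of points $(d_G(\varphi_i,\varphi_2),\infty)$ with multiplicity $\beta_k(X)\neq 0$, and combine the resulting lower bound with Theorem~\ref{t15}. The only difference is cosmetic: you fix $\varphi'=\varphi_2$ from the outset and spell out (correctly, via $|a-b|\le\max\{a,b\}$ for $a,b\ge 0$) why the constant value lies in $\varPhi$, a detail the paper leaves implicit.
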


\begin{proof}
For every $\varphi' \in \varPhi$ let us consider the operator $F_{\varphi'}: \varPhi \to \varPhi$ defined by setting $F_{\varphi'}(\varphi)$ equal to the constant function taking everywhere the value $d_G(\varphi, \varphi')$ for every $\varphi \in \varPhi$ (i.e., $F_{\varphi'}(\varphi)(x)=d_G(\varphi,\varphi')$ for any $x \in X$). Our assumptions guarantee that such a constant function belongs to $\varPhi=\Psi$. We also set $T=\mathrm{id}:G\to G$.

We observe that
\begin{enumerate}
    \item $F_{\varphi'}$ is a group equivariant operator on $\varPhi$, because the strong invariance of the natural pseudo-distance $d_G$ with respect to the group $G$ (Remark~\ref{invdg}) implies that if $\varphi \in \varPhi$ and $g \in G$, then $F_{\varphi'}(\varphi \circ g)(x) = d_G(\varphi\circ g,\varphi') = F_{\varphi'}(\varphi)(g(x)) = (F_{\varphi'}(\varphi)\circ g)(x) = (F_{\varphi'}(\varphi)\circ T(g))(x)$, for every $x\in X$.
    \item $F_{\varphi'}$ is non-expansive on $\varPhi$, because for every $\varphi_1, \varphi_2 \in \varPhi$
        \begin{align*}
D_\Psi\left(F_{\varphi'}(\varphi_1),F_{\varphi'}(\varphi_2)\right) & = |d_G(\varphi_1, \varphi') - d_G(\varphi_2,\varphi')|\\
& \le d_G(\varphi_1,\varphi_2) \le D_\varPhi\left(\varphi_1,\varphi_2 \right).
\end{align*}
\end{enumerate}
Therefore, $F_{\varphi'}$ is a GENEO.

For every $\varphi_1,\varphi_2,\varphi' \in \varPhi$ we have that
\begin{equation}d_{\mathrm{match}}(r_k(F_{\varphi'}(\varphi_1)),r_k(F_{\varphi'}(\varphi_2)))=|d_G(\varphi_1, \varphi') - d_G(\varphi_2,\varphi')|.
\end{equation}
Indeed, apart from the trivial points on the line $\{(u,v) \in \mathbb{R}^2 \ : \ u=v \}$, the persistence diagram associated with $r_k(F_{\varphi'}(\varphi_1))$ contains only the point $(d_G(\varphi_1,\varphi'),\infty)$, while the persistence diagram associated with $r_k(F_{\varphi'}(\varphi_2))$ contains only the point $(d_G(\varphi_2,\varphi'),\infty)$. Both the points have the same multiplicity, which equals the (non-null) $k$-th Betti number of $X$.

Setting $\varphi'=\varphi_2$, we have that
\begin{equation}
d_{\mathrm{match}}(r_k(F_{\varphi'}(\varphi_1)),r_k(F_{\varphi'}(\varphi_2)))=d_G(\varphi_1,\varphi_2).
\end{equation}
As a consequence, we have that \begin{equation}
\mathcal{D}^{\mathcal{F}^{\mathrm{all}},k}_{\mathrm{match}}(\varphi_1,\varphi_2)\ge d_G(\varphi_1,\varphi_2).
\end{equation}
By applying Theorem~\ref{t15}, we get
\begin{equation}
\mathcal{D}^{\mathcal{F}^{\mathrm{all}},k}_{\mathrm{match}}(\varphi_1,\varphi_2)= d_G(\varphi_1,\varphi_2)
\end{equation}
for every $\varphi_1,\varphi_2$.
\end{proof}


\begin{proposition*}[\ref{14}]
Let $\mathcal{F},\mathcal{F}'\subseteq \mathcal{F}^\mathrm{all}$.
If the Hausdorff distance $$HD(\mathcal{F},\mathcal{F}'):=
\max\left\{
\sup_{F\in\mathcal{F}}\inf_{F'\in\mathcal{F}'} D_{\GENEOH}(F,F'),
\sup_{F'\in\mathcal{F}'}\inf_{F\in\mathcal{F}} D_{\GENEOH}(F,F')\right\}$$  is not larger than $\varepsilon$, then
\begin{equation}\left|\mathcal{D}^{\mathcal{F},k}_{\mathrm{match}}(\varphi_1, \varphi_2) - \mathcal{D}^{\mathcal{F}',k}_{\mathrm{match}}(\varphi_1, \varphi_2)\right|\le 2 \varepsilon
\end{equation}
for every $\varphi_1,\varphi_2 \in \varPhi$.
\end{proposition*}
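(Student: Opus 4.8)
The plan is to reduce the statement to a pointwise Lipschitz estimate comparing, for a single pair of operators, the bottleneck distances of their outputs, and then to run a standard Hausdorff/supremum argument. First I would fix $\varphi_1,\varphi_2\in\varPhi$ and abbreviate, for each operator $F\in\mathcal{F}^{\mathrm{all}}$, the quantity $d_F:=d_{\mathrm{match}}(r_k(F(\varphi_1)),r_k(F(\varphi_2)))$, so that $\mathcal{D}^{\mathcal{F},k}_{\mathrm{match}}(\varphi_1,\varphi_2)=\sup_{F\in\mathcal{F}}d_F$ and likewise $\mathcal{D}^{\mathcal{F}',k}_{\mathrm{match}}(\varphi_1,\varphi_2)=\sup_{F'\in\mathcal{F}'}d_{F'}$.

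The key step is to prove that $|d_F-d_{F'}|\le 2\,D_{\GENEOH}(F,F')$ for any two GENEOs $F,F'$. Using the triangle inequality for the pseudo-metric $d_{\mathrm{match}}$ I would write
$$d_F \le d_{\mathrm{match}}(r_k(F(\varphi_1)),r_k(F'(\varphi_1))) + d_{F'} + d_{\mathrm{match}}(r_k(F'(\varphi_2)),r_k(F(\varphi_2))).$$
The stability inequality of Theorem~\ref{t12}, applied within the perception pair $(\varPsi,H)$, bounds each of the two outer terms by $d_H(F(\varphi_i),F'(\varphi_i))$, and by the definition of $D_{\GENEOH}$ each of these is at most $D_{\GENEOH}(F,F')$. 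Hence $d_F\le d_{F'}+2\,D_{\GENEOH}(F,F')$; exchanging the roles of $F$ and $F'$ gives the claimed two-sided bound.

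With this estimate in hand I would finish as follows. Fix $F\in\mathcal{F}$. Since the Hausdorff bound gives $\inf_{F'\in\mathcal{F}'}D_{\GENEOH}(F,F')\le\varepsilon$, for each $\eta>0$ I may choose $F'\in\mathcal{F}'$ with $D_{\GENEOH}(F,F')\le\varepsilon+\eta$, whence
$$d_F \le d_{F'} + 2(\varepsilon+\eta) \le \mathcal{D}^{\mathcal{F}',k}_{\mathrm{match}}(\varphi_1,\varphi_2) + 2(\varepsilon+\eta).$$
Taking the supremum over $F\in\mathcal{F}$ and letting $\eta\to 0$ yields $\mathcal{D}^{\mathcal{F},k}_{\mathrm{match}}(\varphi_1,\varphi_2)\le\mathcal{D}^{\mathcal{F}',k}_{\mathrm{match}}(\varphi_1,\varphi_2)+2\varepsilon$, and the symmetric half of the Hausdorff distance gives the reverse inequality, proving the bound.

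The computation is essentially routine; the only point requiring mild care is that the infima defining the Hausdorff distance need not be attained, which is why I introduce the auxiliary $\eta$ and pass to the limit rather than selecting an exact nearest operator. I expect no genuine obstacle beyond correctly invoking the stability inequality with respect to $d_H$ (rather than $D_\Psi$), since it is precisely the use of the natural pseudo-distance $d_H$ that makes the two outer bottleneck terms controllable by $D_{\GENEOH}$.
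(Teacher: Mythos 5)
Your proof is correct and follows essentially the same route as the paper's: both arguments pick, for each $F$, a nearby $F'$ with $D_{\GENEOH}(F,F')\le\varepsilon+\eta$, convert this into bottleneck-distance bounds via the stability inequality of Theorem~\ref{t12} applied in $(\varPsi,H)$, combine them by the triangle inequality for $d_{\mathrm{match}}$, and conclude by taking suprema and letting $\eta\to 0$. The only difference is presentational: you isolate the pointwise estimate $|d_F-d_{F'}|\le 2\,D_{\GENEOH}(F,F')$ as an explicit lemma, whereas the paper carries the $\varepsilon+\eta$ bound through directly.
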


\begin{proof}
Since $HD(\mathcal{F},\mathcal{F}')\le\varepsilon$, for every $F\in\mathcal{F}$ a $F'\in\mathcal{F}'$ and an $\eta>0$ exist such that
$D_{\GENEOH}(F,F')\le \varepsilon+\eta$.
The definition of $D_{\GENEOH}$ implies that $d_H(F(\varphi),F'(\varphi))\le\varepsilon+\eta$ for every $\varphi\in \varPhi$. From Theorem~\ref{t12} it follows that
\begin{equation}
d_{\mathrm{match}}(r_k(F(\varphi_1)),r_k(F'(\varphi_1)) \le \varepsilon+\eta
\end{equation}
and
\begin{equation}
d_{\mathrm{match}}(r_k(F(\varphi_2)),r_k(F'(\varphi_2))\le \varepsilon+\eta
\end{equation}
for every $\varphi_1,\varphi_2 \in \varPhi$.

Therefore,
\begin{equation}\left|d_{\mathrm{match}}(r_k(F(\varphi_1)),r_k(F(\varphi_2))-d_{\mathrm{match}}(r_k(F'(\varphi_1)),r_k(F'(\varphi_2))\right| \le 2(\varepsilon+\eta).
\end{equation}

As a consequence, $\mathcal{D}^{\mathcal{F},k}_{\mathrm{match}}(\varphi_1,\varphi_2)\le \mathcal{D}^{\mathcal{F}',k}_{\mathrm{match}}(\varphi_1,\varphi_2)+ 2(\varepsilon+\eta)$. We can show analogously that
$\mathcal{D}^{\mathcal{F}',k}_{\mathrm{match}}(\varphi_1,\varphi_2)\le \mathcal{D}^{\mathcal{F},k}_{\mathrm{match}}(\varphi_1,\varphi_2)+ 2(\varepsilon+\eta)$.
Since $\eta$ can be chosen arbitrarily small, from the previous two inequalities the proof of our statement follows.
\end{proof}


\begin{proposition*}[\ref{corapprox}]
Let $\mathcal{F}$ be a non-empty subset of $\mathcal{F}^{\mathrm{all}}$. For every $\varepsilon>0$, a finite subset $\mathcal{F}^{*}$ of $\mathcal{F}$ exists, such that
\begin{equation}
|\mathcal{D}^{\mathcal{F^{*}},k}_{\mathrm{match}}(\varphi_1, \varphi_2) - \mathcal{D}^{\mathcal{F},k}_{\mathrm{match}}(\varphi_1, \varphi_2)|\le \varepsilon
\end{equation}
for every $\varphi_1,\varphi_2 \in \varPhi$.
\end{proposition*}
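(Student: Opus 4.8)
The plan is to deduce the statement directly from Proposition~\ref{14} by exhibiting a finite subset $\mathcal{F}^{*}\subseteq\mathcal{F}$ whose Hausdorff distance to $\mathcal{F}$ (measured by $D_{\GENEOH}$) is at most $\varepsilon/2$; then Proposition~\ref{14} immediately gives $\left|\mathcal{D}^{\mathcal{F}^{*},k}_{\mathrm{match}}(\varphi_1,\varphi_2)-\mathcal{D}^{\mathcal{F},k}_{\mathrm{match}}(\varphi_1,\varphi_2)\right|\le 2\cdot(\varepsilon/2)=\varepsilon$ for every $\varphi_1,\varphi_2\in\varPhi$, which is exactly the desired inequality.

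First I would invoke Theorem~\ref{t17}, which asserts that $\mathcal{F}^{\mathrm{all}}$ is compact with respect to $D_{\GENEO}$. A compact pseudo-metric space is totally bounded, and total boundedness passes to every subset, so $\mathcal{F}$ is totally bounded with respect to $D_{\GENEO}$. Hence, for the given $\varepsilon>0$, there exists a finite subset $\mathcal{F}^{*}=\{F_{1},\dots,F_{n}\}\subseteq\mathcal{F}$ forming an $(\varepsilon/2)$-net for $\mathcal{F}$: every $F\in\mathcal{F}$ satisfies $D_{\GENEO}(F,F_{i})\le\varepsilon/2$ for some index $i$. Choosing the centres inside $\mathcal{F}$ is the standard refinement of an arbitrary finite net: starting from a finite $(\varepsilon/4)$-net of $\mathcal{F}$ and selecting one point of $\mathcal{F}$ inside each net-ball at most doubles the covering radius, yielding an $(\varepsilon/2)$-net contained in $\mathcal{F}$.

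Next I would estimate $HD(\mathcal{F}^{*},\mathcal{F})$. Since $\mathcal{F}^{*}\subseteq\mathcal{F}$, every $F\in\mathcal{F}^{*}$ lies at $D_{\GENEOH}$-distance $0$ from $\mathcal{F}$ (take $F$ itself), so the first term in the definition of the Hausdorff distance vanishes. For the second term, given any $F'\in\mathcal{F}$ there is an $F_{i}\in\mathcal{F}^{*}$ with $D_{\GENEO}(F',F_{i})\le\varepsilon/2$, and the inequality $D_{\GENEOH}\le D_{\GENEO}$ gives $D_{\GENEOH}(F',F_{i})\le\varepsilon/2$; hence $\inf_{F\in\mathcal{F}^{*}}D_{\GENEOH}(F',F)\le\varepsilon/2$. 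Taking the supremum over $F'\in\mathcal{F}$ shows $HD(\mathcal{F}^{*},\mathcal{F})\le\varepsilon/2$, and Proposition~\ref{14} then closes the argument.

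The argument is essentially routine once Theorem~\ref{t17} and Proposition~\ref{14} are available; the only genuinely delicate point is the passage between the two pseudo-metrics on $\mathcal{F}^{\mathrm{all}}$. Compactness, and hence total boundedness, is supplied for the metric $D_{\GENEO}$, whereas the stability estimate of Proposition~\ref{14} is phrased in terms of $D_{\GENEOH}$. The inequality $D_{\GENEOH}\le D_{\GENEO}$ recorded in Subsection~\ref{PM} is precisely what allows a covering produced in the finer metric to be reused in the coarser one, so that no separate compactness statement for $D_{\GENEOH}$ is required in the proof.
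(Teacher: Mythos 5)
Your proof is correct and takes essentially the same route as the paper's: both use the compactness of $\mathcal{F}^{\mathrm{all}}$ (Theorem~\ref{t17}) to produce a finite $(\varepsilon/2)$-net $\mathcal{F}^{*}\subseteq\mathcal{F}$ with respect to $D_{\GENEO}$, pass to $D_{\GENEOH}$ via the inequality $D_{\GENEOH}\le D_{\GENEO}$ to bound the Hausdorff distance by $\varepsilon/2$, and conclude with Proposition~\ref{14}. The only cosmetic difference is that the paper obtains the net as the centres of a finite subcover of the closure $\bar{\mathcal{F}}$, whereas you argue via total boundedness of subsets of a compact space; the two devices are interchangeable here.
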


\begin{proof}
Let us consider the closure $\bar{\mathcal{F}}$ of $\mathcal{F}$ in $\mathcal{F}^{\mathrm{all}}$. Let us also consider the covering $\mathcal{U}$ of $\bar{\mathcal{F}}$ obtained by taking all the open balls of radius $\frac{\varepsilon}{2}$ centered at points of $\mathcal{F}$, with respect to $D_{\GENEO}$. Theorem~\ref{t17} guarantees that $\mathcal{F}^{\mathrm{all}}$ is compact, hence also $\bar{\mathcal{F}}$ is compact. Therefore we can extract a finite covering $\{B_1, \dots, B_m \}$ of $\bar{\mathcal{F}}$ from $\mathcal{U}$. We can set $\mathcal{F}^{*}$ equal to the set of centers of the balls $B_1, \dots, B_m$. The statement of our corollary immediately follows from Proposition~\ref{14}, by recalling that $D_{\GENEOH}\le D_{\GENEO}$ and hence
$HD\left(\bar{\mathcal{F}},\mathcal{F}^*\right)\le\varepsilon/2$.
\end{proof}

\end{appendices}

\bibliographystyle{elsarticle-num}
\section*{\refname}
\bibliography{bibDNP}

\end{document}